\pgfplotsset{compat=newest} 
\definecolor{mydarkblue}{rgb}{0,0.08,0.45}
\newcommand{\new}[1]{{\color{black} #1}}
\newcommand*{\centernot}{%
  \mathpalette\@centernot
}
\def\@centernot#1#2{%
  \mathrel{%
    \rlap{%
      \settowidth\dimen@{$\m@th#1{#2}$}%
      \kern.5\dimen@
      \settowidth\dimen@{$\m@th#1=$}%
      \kern-.5\dimen@
      $\m@th#1\not$%
    }%
    {#2}%
  }%
}
\newtheorem{definition}{Definition}
\newtheorem{condition}{Condition}
\newtheorem{assumption}{Assumption}
\newtheorem{theorem}{Theorem}
\newtheorem{corollary}{Corollary}
\newtheorem{lemma}{Lemma}
\newtheorem{proposition}{Proposition}
\newtheorem{remark}{Remark}
\newtheorem{example}{Example}
\newcommand{\BlackBox}{\rule{1.5ex}{1.5ex}}  
\newenvironment{proof}{\par\noindent{\it Proof.\ }}{\hfill\BlackBox}
\newcommand{\setX}{\mathcal{X}}
\newcommand{\setA}{\mathcal{A}}
\newcommand{\setV}{\mathcal{V}}
\newcommand{\setS}{\mathcal{S}}
\newcommand{\setG}{\mathcal{G}}
\newcommand{\setP}{\mathcal{P}}
\newcommand{\setU}{\mathcal{U}}
\newcommand{\setM}{\mathcal{M}}
\newcommand{\setZ}{\mathcal{Z}}
\newcommand{\setY}{\mathcal{Y}}
\newcommand{\setH}{\mathcal{H}}
\newcommand{\setJ}{\mathcal{J}}
\newcommand{\setK}{\mathcal{K}}
\newcommand{\setT}{\mathcal{T}}
\newcommand{\uleaf}{\texttt{u-leaf} }
\newcommand{\iuleaf}{\emph{\texttt{u-leaf}} }
\newcommand{\nuleaf}{\texttt{nu-leaf} }
\newcommand{\TE}{\texttt{TE}}
\newcommand{\iTE}{\emph{\texttt{TE}}}
\newcommand{\TC}{\texttt{TC}}
\newcommand{\iTC}{\emph{\texttt{TC}}}
\newcommand{\bA}{\mathbf{A}}
\newcommand{\bB}{\mathbf{B}}
\newcommand{\bC}{\mathbf{C}}
\newcommand{\bD}{\mathbf{D}}
\newcommand{\bZ}{\mathbf{0}}
\newcommand{\bW}{\mathbf{W}}
\newcommand{\WME}{\mathbf{W}^{ME}}
\newcommand{\WUR}{\mathbf{W}^{UR}}
\newcommand{\bI}{\mathbf{I}}
\newcommand{\bP}{\mathbf{P}}
\newcommand{\bw}{\mathbf{w}}
\newcommand{\stkout}[1]{\ifmmode\text{\sout{\ensuremath{#1}}}\else\sout{#1}\fi}
\newcommand{\myuline}[1]{%
  \uline{\phantom{#1}}%
  \llap{\contour{white}{#1}}%
}
\tikzstyle{observed}=[draw, circle, white, text=black, minimum size=6mm]
\tikzstyle{latent}=[draw, circle, blue!75, text=black, minimum size=6mm]
\tikzstyle{deterministic}=[draw, circle, double, blue!75, text=black, minimum size=6mm]
\tikzstyle{noise}=[draw, text=black]
\tikzset{causal/.style={-{Triangle[length=2.5pt, width=3pt]}, line width=1pt},
         exogenous/.style={-{Triangle[length=2.5pt, width=3pt]}, line width=1pt, blue!75},
         difference/.style={-{Triangle[length=2.5pt, width=3pt]}, line width=1pt, red}
}
\title{\textbf{Causal Discovery in Linear Latent Variable Models Subject to Measurement Error}}
\author{
\textbf{Yuqin Yang}\thanks{Correspondence to: \href{mailto:yuqinyang@gatech.edu}{\texttt{yuqinyang@gatech.edu}}. Accepted at 36th Conference on Neural Information Processing Systems
(NeurIPS 2022).} 
\\ Georgia Institute of \\ Technology
\and  
\textbf{AmirEmad Ghassami} 
\\ Johns Hopkins University
\and 
\textbf{Mohamed Nafea} 
\\ University of Detroit Mercy
\and
\textbf{Negar Kiyavash} 
\\ École Polytechnique \\ Fédérale de Lausanne
\and
\textbf{Kun Zhang} 
\\ Carnegie Mellon University \\ MBZUAI
\and
\textbf{Ilya Shpitser} 
\\ Johns Hopkins University
}
\date{}
\begin{document}

\maketitle

\begin{abstract}
We focus on causal discovery in the presence of measurement error in linear systems where the mixing matrix, i.e., the matrix indicating the independent exogenous noise terms pertaining to the observed variables, is identified up to permutation and scaling of the columns. We demonstrate a somewhat surprising connection between this problem and causal discovery in the presence of unobserved parentless causes, in the sense that there is a mapping, given by the mixing matrix, between the underlying models to be inferred in these problems. Consequently, any identifiability result based on the mixing matrix for one model translates to an identifiability result for the other model. We characterize to what extent the causal models can be identified under a two-part faithfulness assumption. 
Under only the first part of the assumption (corresponding to the conventional definition of faithfulness), the structure can be learned up to the causal ordering among an ordered grouping of the variables but not all the edges across the groups can be identified. We further show that if both parts of the faithfulness assumption are imposed, the structure can be learned up to a more refined ordered grouping. As a result of this refinement, for the latent variable model with unobserved parentless causes, the structure can be identified. Based on our theoretical results, we propose causal structure learning methods for both models, and evaluate their performance on synthetic data.


\end{abstract}

\vspace{-6mm}
\section{Introduction}
\vspace{-3mm}
Learning causal structure among the variables of a system from observational data has received considerable attention in the literature, since subject matter knowledge on causal relationships is often incomplete or impossible to obtain in many applications \citep{spirtes2000causation,pearl2009causality}. In many real-life systems, 
not all variables of interest or all direct common causes of those variables can be observed.
This necessitates approaches for structure learning capable of dealing with latent variables. For the case that no a-priori restrictions on the functional form of causal mechanisms are imposed, constraint-based algorithms such as the Fast Causal Inference (FCI) algorithm have been proposed \citep{spirtes2000causation}. Such approaches are often unable to identify the direction of the majority of causal connections in the system, which motivates placing additional, often parametric, restrictions on the generating model. One of the most commonly used restrictions is to assume that the model is linear with non-Gaussian exogenous noise terms \citep{shimizu2006linear,shimizu2011directlingam,hoyer2008estimation, glymour2019review}. This generating model leads to identification of the existence and orientation of all edges in a causal model, assuming that no latent variables that are common causes of two or more variables exist \citep{shimizu2006linear,shimizu2011directlingam}. Moreover, in such models even when latent variable common causes exist, it is still possible to orient additional edges compared to algorithms such as FCI \citep{hoyer2008estimation,salehkaleybar2020learning,adams2021identification}. 

In certain applications, although some of the variables of interest (called underlying variables) are latent, we may have access to noisy measurements of them \citep{pearl2012measurement,kuroki2014measurement,zhang2018causal,saeed2020anchored,halpern2015anchored}. In this case, conditional independence patterns among observed variables (that are noisy measurements of underlying variables) are different from those among the underlying (measurement-error-free) variables. In general, constraint-based approaches to causal discovery are not able to correct for the difference in observed independence due to the presence of measurement error, and thus will produce erroneous edge adjacencies and orientations. Similarly, naive application of methods based on non-Gaussian exogenous noise assumption may also fail in recovering the correct causal relations in the presence of measurement error. This is due to the fact that the extra measurement noise terms break the asymmetry utilized by those methods.

In this paper, we bring additional insight into the problem of causal discovery under measurement error by showing a surprising connection between linear structural equation models (SEMs) where variables are measured with error, and linear SEMs with hidden variables. In particular, we consider two linear SEMs. In the first model, the unobserved variables are in fact of interest and can appear in any part of the causal structure of the system, but we assume that noisy measurements of these unobserved variables are available. We refer to this model as the linear SEM with measurement error (SEM-ME). The second model does not require measurements of the latent variables, as they are not of primary interest, yet we assume that all latent variables are root variables, meaning that they are not causally affected by any other latent or observed variables in the system. We refer to this model as the linear SEM with unobserved roots (SEM-UR). It is noteworthy that assuming latent variables are limited to be root variables does not affect the estimated total causal effects among the observed variables (see Remark \ref{remark:ur_parentless} in Section \ref{subsecion:semur}), and provides a representation of a latent variable model that is consistent with these causal effects. For this reason, this restriction is common in the literature of causal inference \citep{evans2016graphs,evans2018margins,hoyer2008estimation,duarte2021automated}. 

We study the identification of linear SEM-MEs and SEM-URs in a setup where the independent exogenous noise terms that causally (directly or indirectly) affect each observed variable can be identified. That is, the mixing matrix of the linear system that transforms exogenous noise terms to observed variables is identified up to permutation and scaling of the columns. This can be satisfied, for example, if all independent exogenous noise terms are non-Gaussian. Our first main contribution is presenting a mapping between the linear SEM-MEs and SEM-URs, which demonstrates a correspondence between a weighted causal diagram\footnote{A weighted causal diagram is the directed graph corresponding to the linear model which associates the coefficient of variable $X_1$ in the structural equation for $X_2$ to the edge from $X_1$ to $X_2$.} generated by SEM-UR and a set of weighted causal diagrams from the SEM-ME (Theorem \ref{thm:main_equivalence}). The models in this correspondence all possess the same mixing matrix. Consequently, any identifiability result based on the mixing matrix for one model is applicable to the other model. This allows us to study the problem of causal discovery in these two models together. 


Additionally, we study the identifiability of linear SEM-MEs and SEM-URs, and show how it benefits from a two-part faithfulness assumption. The first part prevents existence of zero total causal effects of a variable on its descendants and the second part prevents cancellation or proportionality among specific edges. Our second main contribution is to characterize the extent of identifiability of the causal model under our faithfulness assumption. We demonstrate that if only the first part of the faithfulness assumption is imposed, the model can be learned up to an equivalence class characterized by an ordered grouping of the variables which we call \emph{ancestral ordered grouping} (AOG). In this grouping, the induced graph on the variables of each group is a star graph (see Section \ref{subsection:full_identifiability}). Although the AOG is identified, not all edges across the groups or the group centers (or their exogenous noise terms) are identifiable (Theorem \ref{thm:aog}). The AOG characterization is a refined version of the ordered grouping proposed in \citep{zhang2018causal}. We further show that if both parts of the faithfulness assumption are imposed, the model can be learned up to an equivalence class characterized by a more refined ordered grouping which we call \emph{direct ordered grouping} (DOG). 
Edges across groups can be identified under this characterization, yet the the group centers (or their exogenous noise terms) remain unidentified (Theorem \ref{thm:DOG}).
This characterization further implies that the ground-truth structure of a SEM-UR is uniquely identifiable (Corollary \ref{cor:structural_id}).
Lastly, we show that models in the DOG equivalence class are strictly sparser than other models in the AOG equivalence class (Proposition \ref{proposition:fewest_edges}).
We propose causal structure learning methods for both SEM-ME and SEM-UR based on this property, and evaluate their performance on synthetic data.


\vspace{-4mm}
\section{Model description}
\vspace{-3mm}

In this section, we first introduce the two models considered in this paper, namely linear SEM-ME and SEM-UR. We then discuss a model assumption needed for our identifiability results.
\vspace{-3mm}
\subsection{Linear structural equation model with measurement error (SEM-ME)}\label{subsecion:semme}
\vspace{-2mm}
\vspace{-1mm}
\begin{definition}[Linear SEM-ME]\label{def:me}

A linear SEM-ME consists of a set of ``underlying'' variables which can be partitioned into 
unobserved underlying variables $\setZ=\{Z_1,...,Z_{p_z}\}$ and observed underlying variables $\setY=\{Y_{1},...,Y_{p-p_z}\}$. In addition, we have another set of observed variables $\setU = \{U_1,\cdots,U_{p_z}\}$ corresponding to the noisy measurements of $\setZ$. 
The underlying variables in $\setZ\cup\setY$ can be arranged in a total (causal) order (such that no later variable in the order can cause any earlier variable), and variables $V_i\in\setZ\cup\setY$ and $U_i\in\setU$ are generated as follows: 
\vspace{-1mm}
\begin{equation}
 V_i = \sum_{j:\;V_j\in Pa(V_i)} c_{ij} V_j + N_{V_i}\quad i\in [p];\qquad
 U_i = 
Z_i + N_{U_i}, \quad i\in [p_z],
\label{eq:semme_def}
\end{equation}
where $[n]:= \{1,2,\cdots,n\}$, $Pa(V_i) \subseteq \setZ\cup\setY$ denotes direct causes of $V_i$. $N_{V_i}$ (resp. $N_{U_i}$) is the exogenous noise term (resp. the measurement error) corresponding to $V_i$ (resp. $U_i$).

%

\end{definition}
\vspace{-1mm}

We define the weighted causal diagram of the linear SEM-ME as a weighted directed graph where the nodes are the variables in $\setZ\cup \setY$. There is a directed edge (causal connection) from $V_i$ to $V_j$ with weight $c_{ji}$ if and only if $c_{ji}\neq 0$, for $V_i,V_j\in \setZ\cup \setY$. Because of our causal order assumption, the causal diagram will be acyclic. 
We use the terms node and variable interchangeably. 
The linear SEM-ME in this work is a generalization of the linear causal model with measurement error proposed in \citep{zhang2018causal}, in the sense that some of the underlying variables can be observed.


We define an {\it{unobserved leaf node}} (\texttt{u-leaf} node) as an unobserved underlying variable in $\setZ$ that does not have any children in $\setZ\cup \setY$. The rest of the underlying variables are referred to as {\it{non u-leaf nodes}} (\texttt{nu-leaf} nodes): These include unobserved non-leaf underlying variables, which have children in $\setZ\cup \setY$, and observed underlying variables in $\mathcal{Y}$.
Similar to the argument in \citep{zhang2018causal}, given observed data generated by the linear SEM-ME in Equation \eqref{eq:semme_def}, 
for a \texttt{u-leaf} node $Z_i$, its exogenous noise term $N_{Z_i}$ is not distinguishable from its measurement error $N_{U_i}$. This follows because $Z_i$ is not observed, and $N_{Z_i}$ only influences one observed variable which is the noisy measurement $U_i$. 
Therefore, we restrict our focus to the following canonical form.
\vspace{-1mm}
\begin{definition}[Canonical form of a linear SEM-ME] \label{definition:me_canonical}
The canonical form of a linear SEM-ME is the one in which all \emph{\texttt{u-leaf}} nodes have no exogenous noise terms. Specifically, 
\vspace{-1mm}
\begin{align}
\begin{bmatrix}
    Z^{L} \\
    Z^{NL} \\ 
    Y
\end{bmatrix} 
= 
\begin{bmatrix}
    \bD \\
    \bC_Z \\
    \bC_Y
\end{bmatrix}
Z^{NU} +
\begin{bmatrix}
    0 \\
    N_{Z^{NL}} \\
    N_{{Y}}
\end{bmatrix}
,
\qquad
U
= 
\begin{bmatrix}
    Z^{L}\\
    Z^{NL}
\end{bmatrix} 
+
\begin{bmatrix}
    N_{Z^{L}}+N_{U^{L}}\\
    N_{U^{NL}}
\end{bmatrix}
,
\label{eq:ME_matrix_form}
\end{align}
where $Z^{NL}$, $Z^L$, $Y$ are the vectors of unobserved underlying non-leaf variables, \emph{\texttt{u-leaf}} nodes, and observed underlying variables, respectively. $N_{Z^{NL}}$, $N_{Z^{L}}$, and $N_{Y}$ are the corresponding noise vectors.
$Z^{NU}= [Z^{NL}; \; Y]$ is the vector of \emph{\texttt{nu-leaf}} nodes. $U$ is the vector of noisy measurements and $N_{U^{NL}}$ and $N_{U^{L}}$ denote the noise vectors corresponding to the measurements of $Z^{NL}$ and $Z^{L}$, respectively.
$\bC$ represents the causal connections among \emph{\texttt{nu-leaf}} nodes, and can be partitioned into $[\bC_Z;\; \bC_Y]$ corresponding to $[Z^{NL}; \; Y]$, respectively.
$\bD$ represents the causal connections from \emph{\texttt{nu-leaf}} nodes to \emph{\texttt{u-leaf}} nodes. Note that $\bC$ can be arranged into a strictly lower triangular matrix via simultaneous column and row permutations
(due to the acyclicity assumption).
\end{definition}
\vspace{-1mm}



\vspace{-3.5mm}
\subsection{Linear structural equation model with unobserved roots (SEM-UR)}\label{subsecion:semur}
\vspace{-1.5mm}
\vspace{-1mm}
\begin{definition}[Linear SEM-UR] \label{def:SEM-UR}
A linear SEM-UR consists of a set of observed variables $\setX=\{X_1,\cdots,X_q\}$ and another set of latent variables $\setH=\{H_1,\cdots,H_{l}\}$. All latent variables are assumed to be root variables (have no direct causes). The observed variables are arranged in a causal order, and each observed variable is directly influenced by a linear combination of other observed and latent variables, plus an independent exogenous noise term.
\begin{equation}
     H_i = N_{H_i},\quad i\in [l]; \qquad
     X_j = {\sum}_{i\in [l]} \;b_{ji} H_i + {\sum}_{k< j} \;a_{jk} X_k +N_{X_j},\qquad j\in [q].
\label{eq:semur}
\end{equation}
\end{definition}
\vspace{-3mm}
We define the weighted causal diagram of the linear SEM-UR as a directed graph where the nodes are the observed and latent variables in $\setX\cup \setH$. There is a causal connection from $X_i$ (resp. $H_i$) to $X_j$ with weight $a_{ji}$ (resp. $b_{ji}$) if and only if $a_{ji}$ (resp. $b_{ji}$) $ \neq 0$. Because of our causal order assumption, the causal diagram will be acyclic. 
Note that unlike in \citep{hoyer2008estimation} and \citep{salehkaleybar2020learning}, we consider a broader model which allows latent variables to have a single observed child. However, unlike the works considering relations among latent variables \citep{adams2021identification,Xie20,Triad19,pmlr-v162-xie22a}, we restrict all latent variables to roots; see Remark \ref{remark:ur_parentless}.
 
A linear SEM-UR can be written in the following matrix form:
\begin{align}
H = N_H;\qquad X = \bB H + \bA X + N_X,
\label{eq:SEM-UR_matrix_form}
\end{align}
where $X=[X_1 \; \cdots \; X_q]^{\top}$ and $H=[H_1 \; \cdots \; H_l]^{\top}$ are the vectors of observed and latent variables, respectively. $N_X=[N_{X_1} \; \cdots \; N_{X_q}]^{\top}$ and $N_H=[N_{H_1} \; \cdots \; N_{H_l}]^{\top}$ represent the vectors of independent noise terms associated with $X$ and $H$, respectively. $\bB$ represents the causal connections from latent to observed variables, and $\bA$ represents the causal connections among observed variables, which can be assumed to be a strictly lower-triangular matrix (due to the acyclicity assumption).
\vspace{-2mm}
\begin{remark} \label{remark:ur_parentless}
There exist works aiming to find the causal relations among the latent variables (as well as their relations with the measured variables) \citep{Silva06,Xie20,Triad19,adams2021identification,pmlr-v162-xie22a}.  Our definition of a linear SEM-UR requires all the latent variables to be parentless. This assumption is common in the literature of causal inference, as it does not restrict the feasible total causal effects (i.e., sum of products of path coefficients) among the observed variables \citep{evans2016graphs,evans2018margins,duarte2021automated}. Specifically, let us 
start from a general linear latent variable model in which some latent variables have parents. \cite{hoyer2008estimation} proposed an algorithm which maps such general model to one 
in which latent variables are parentless.
They showed that the resulting model is observationally and causally equivalent to the original model, i.e., the joint distributions of observed variables are identical, and all causal effects of observed variables on other observed variables are identical in the models before and after the mapping.  
\end{remark}
\vspace{-1mm}
\vspace{-3mm}
\subsection{Separability assumption}
\vspace{-2mm}

For both models in Sections \ref{subsecion:semme} and \ref{subsecion:semur}, we assume that the noise components can be separated from observations (i.e., the mixing matrices are recoverable). We first describe the mixing process for each of the two models, and then state our separability assumption. 

From Equation \eqref{eq:ME_matrix_form}, we can write all underlying variables in the linear SEM-ME in terms of the independent noise terms, as follows.
\vspace{-1mm}
\begin{align} \label{eq:SEM-ME_mixing_matrix}
    \begin{bmatrix}
    Z^{L} \\
    Z^{NL} \\
    Y
    \end{bmatrix} =
    \bW^{ME}
    \begin{bmatrix}
    N_{Z^{NL}} \\
    N_{Y}
    \end{bmatrix},
    \qquad \text{where} \quad \bW^{ME} = 
    \begin{bmatrix}
    \bD(\bI-\bC)^{-1} \\
    (\bI-\bC)_Z^{-1} \\
    (\bI-\bC)_Y^{-1} 
    \end{bmatrix}.
\end{align}\vspace{-1mm}
$\bI$ is the identity matrix, and $(\bI-\bC)_Z^{-1}$ and $(\bI-\bC)_Y^{-1}$ represent the rows of $(\bI-\bC)^{-1}$ corresponding to $Z^{NL}$ and $Y$, respectively.
Using the relation between $U$ and $Z$ in Equation \eqref{eq:ME_matrix_form}, the two types of observed variables, i.e., measurements $U$ and observed underlying variables $Y$, can be written as 
\vspace{-2mm}
\begin{align} \label{eq:SEM-ME_mixing_matrix_x}
    \begin{bmatrix}
    U \\
    Y
    \end{bmatrix}
    =
    \underbrace{
    \begin{bmatrix}~
    \bW^{ME} & 
    \begin{matrix}
    \bI \\
    \bZ
    \end{matrix}
    ~
    \end{bmatrix}
    }_\bW
    \begin{bmatrix}
    N_{Z^{NL}} \\
    N_{Y} \\
    N_{Z^{L}}+N_{U^{L}} \\
    N_{U^{NL}} 
    \end{bmatrix}
    .
\end{align}
We refer to $\bW$ as the overall mixing matrix of the system.
Given any column permuted and rescaled version of $\bW$,
we can recover matrix $\bW^{ME}$ by 
removing the submatrix $[\bI\;\; \bZ]^\top$ of size $p\times p_z$ corresponding to $U$. Recall that $p_z$ is the cardinality of $Z$ (which is the same as the cardinality of $U$).

Similarly, according to Equation \eqref{eq:SEM-UR_matrix_form}, observed variables in the linear SEM-UR can be written in terms of the independent noise terms as 
\vspace{-1mm}
\begin{align} \label{eq:SEM-UR_mixing_matrix}
    X = \bW^{UR} 
    \begin{bmatrix}
    N_H \\
    N_X
    \end{bmatrix},
    \qquad \text{where} \quad
    \bW^{UR} = \left[(\bI-\bA)^{-1}\bB\quad (\bI-\bA)^{-1}\right].
\end{align}\vspace{-1mm}
\vspace{-3mm}
\begin{assumption}[Separability] \label{assumption:separability}
The linear SEM-ME (resp. linear SEM-UR) is separable, that is, the mixing matrix $\bW^{ME}$ in Equation \eqref{eq:SEM-ME_mixing_matrix} (resp. $\bW^{UR}$ in Equation \eqref{eq:SEM-UR_mixing_matrix}) can be recovered from observations of $[U;\; Y]$ (resp. $X$) up to permutation and scaling of its columns.
\end{assumption}
\vspace{-2mm}
Separability assumption states that for every observed mixture, the independent exogenous noise terms pertaining to this mixture can be separated, i.e., the mixing matrix can be recovered up to permutation and scaling of its columns. 
An example of a setting where this assumption holds is when all exogenous noises are non-Gaussian. In this case, if the model satisfies the requirement in \cite[Theorem 1]{eriksson2004identifiability} (SEM-ME always does), overcomplete Independent Component Analysis (ICA) can be used to recover the mixing matrix up to permutation and scaling of its columns. Another example where separability assumption is satisfied is
the setup in which the noise terms are piecewise constant functionals satisfying a set of mild conditions \citep{behr2018multiscale}. 
On the other hand, an example where this assumption is violated is when all exogenous noise terms have Gaussian distributions. In this case, the mixing matrix can only be recovered up to an orthogonal transformation.


\vspace{-3mm}
\section{Mapping between the weighted graphs of the models for identifiability}
\vspace{-3mm}

A key observation in this work is that there exists a mapping between the weighted causal diagrams of the introduced SEM-ME and SEM-UR, which leads to the corresponding mixing matrices $\WME$ and $\WUR$ being transposes of one another. 

Define the mapping $\varphi$ from the set of weighted causal diagrams of linear SEM-URs, denoted by $\setM^{UR}$, to the set of weighted causal diagrams of canonical linear SEM-MEs, denoted by $\setM^{ME}$, where for each diagram $M\in\setM^{UR}$, $ \varphi(M)\subset\setM^{ME}$ is constructed as follows:
(a) Replace each latent variable in $M$ with a $Z$ variable.
(b) Replace each observed root variable in $M$ with a $Y$ variable.
(c) Replace the rest of the observed variables in $M$ with either $Z$ or $Y$ variables.
(d) Reverse all the edges in $M$.
Note that $\varphi(M)$ encompasses a set of models, corresponding to the possible choices in step (c). 

Similarly define the mapping $\phi$ from the set $\setM^{ME}$ to the set $\setM^{UR}$, where for each diagram $M'\in\setM^{ME}$, $\phi(M')\in\setM^{UR}$ is constructed as follows.
(a) Replace each \texttt{u-leaf} variable in $M'$ with a latent variable.
(b) Replace the rest of the variables in $M'$ with observed variables.
(c) Reverse all the edges in $M'$.
Note that from the definition of the two mappings, for any given $M\in\mathcal{M}^{UR}$,
$M=\phi(\varphi(M))$.
For any given $M'\in\mathcal{M}^{ME}$,
$M'\in\varphi(\phi(M'))$.
\vspace{-1mm}
\begin{theorem} \label{thm:main_equivalence}
Let $M\in\mathcal{M}^{UR}$ with mixing matrix $\bW^{UR}(M)$ and $M'$ be a corresponding element in $\mathcal{M}^{ME}$ with mixing matrix $\bW^{ME}(M')$, where $M'\in \varphi(M)$ and $M=\phi(M')$.
Then there exist a permutation of the columns of $\bW^{UR}(M)$, denoted by $\tilde{\bW}^{UR}(M)$, and a permutation of the columns of $\bW^{ME}(M')$, denoted by $\tilde{\bW}^{ME}(M')$, which satisfy 
$\tilde{\bW}^{ME}({M'})=(\tilde{\bW}^{UR}(M))^{\top}$.
\end{theorem}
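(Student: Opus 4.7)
My plan is to reduce the identity to a direct algebraic manipulation of the structural coefficient matrices of $M$ and $M'$, and then absorb the resulting row-level mismatch into column permutations of the original mixing matrices.

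First, I would translate $\varphi$ into matrix identities. Under $\varphi$, each latent root $H_i$ of $M$ is identified with a u-leaf node $Z_i^L$ of $M'$ (roots of $M$ become leaves of $M'$ after edge reversal, and latents remain unobserved), and each observed $X_j$ is identified with a nu-leaf node. Reversing all edges transposes the weighted adjacency matrix. Fixing the natural identification $H_i\leftrightarrow Z_i^L$ and letting $\bP$ be the permutation that relabels the nu-leaf nodes into the reverse topological order used to render $\bC$ strictly lower triangular, edge reversal gives $\bC=\bP\bA^\top\bP^\top$ and $\bD=\bB^\top\bP^\top$.

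Substituting into Equation~\eqref{eq:SEM-ME_mixing_matrix} and using $(\bI-\bC)^{-1}=\bP\,((\bI-\bA)^{-1})^\top\,\bP^\top$ together with $\bD(\bI-\bC)^{-1}=\bB^\top\,((\bI-\bA)^{-1})^\top\,\bP^\top$ (the interior factor $\bP^\top\bP$ cancels), the expression collapses to
\begin{equation*}
\bW^{ME}(M')\;=\;\begin{bmatrix}\bI & \mathbf{0}\\ \mathbf{0} & \bP\end{bmatrix}\,(\bW^{UR}(M))^\top\,\bP^\top,
\end{equation*}
where $(\bW^{UR}(M))^\top=[\,\bB^\top((\bI-\bA)^{-1})^\top\,;\;((\bI-\bA)^{-1})^\top\,]$ is read off from Equation~\eqref{eq:SEM-UR_mixing_matrix}.

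It then remains to rewrite both sides as column-permuted versions of the originals. The trailing $\bP^\top$ is already a column permutation of $\bW^{ME}(M')$, so defining $\tilde{\bW}^{ME}(M'):=\bW^{ME}(M')\,\bP$ absorbs it. The leading block-diagonal factor $\mathrm{diag}(\bI,\bP)$ on $(\bW^{UR}(M))^\top$ is a row permutation of $(\bW^{UR}(M))^\top$, equivalently a column permutation of $\bW^{UR}(M)$ restricted to the $N_X$-block; choosing $\tilde{\bW}^{UR}(M):=\bW^{UR}(M)\cdot\mathrm{diag}(\bI,\bP^\top)$ gives $\tilde{\bW}^{ME}(M')=(\tilde{\bW}^{UR}(M))^\top$. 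The main obstacle is the careful bookkeeping of these permutations, together with the further (harmless) row-reshuffle in $\bW^{ME}$ that groups the $Z^L$, $Z^{NL}$, $Y$ blocks separately; the matching is possible only because $\varphi$ intrinsically identifies each variable of $M'$ with a noise term of $M$ (each $Z_i^L$ with $N_{H_i}$, and each nu-leaf with its corresponding $N_{X_j}$), reflecting precisely the fact that edge reversal swaps the roles of variables and noises in the inverse $(\bI-\cdot)^{-1}$.
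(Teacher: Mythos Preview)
Your proposal is correct and follows essentially the same approach as the paper: both arguments exploit that edge reversal transposes the weighted adjacency matrix, substitute the resulting identities $\bC=\bP\bA^\top\bP^\top$, $\bD=\bB^\top\bP^\top$ (or their inverses) into the mixing-matrix formulas, and then absorb the residual permutation factors into column permutations of $\bW^{ME}$ and $\bW^{UR}$. The only cosmetic difference is that the paper starts from $M'$ and applies $\phi$ (carrying along an extra permutation $\bP_H$ for the latent block), whereas you start from $M$ and apply $\varphi$ with the $H_i\leftrightarrow Z_i^L$ identification fixed up front, which slightly streamlines the bookkeeping.
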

\vspace{-1mm}
The proofs of all the results are provided in the Appendix.
Note that as it can be seen from the proof of Theorem \ref{thm:main_equivalence}, whether the \texttt{nu-leaf} variables, that are non-leaf in the SEM-ME, are observed or not does not change the corresponding mixing matrix.


\begin{wrapfigure}{r}{0.42\textwidth}
\centering
\vspace{-4mm}
\begin{tikzpicture}[very thick, scale=.26]
\centering
\foreach \place/\name in {{(-1,-5)/X_3}, {(2.5,0)/X_1}, {(4.5,-5)/X_2}}
    \node[observed, label=center:$\name$] (\name) at \place {};
\foreach \place/\name in  {(-3,0)/H}
    \node[latent, label=center:$\name$, font=\footnotesize] (\name) at \place {};  
\path[causal] (X_1) edge node[right, pos=0.4] {$a$} (X_2);
\path[causal] (H) edge node[left] {$c$} (X_3);
\path[causal] (X_2) edge node[below] {$b$} (X_3);
\path[causal] (H) edge node[above, pos = 0.3] {$d$} (X_2);
\node at (6.5,-7.5) {(a) Linear SEM-UR};

\node at (11,-2) {\small
$
\begin{bmatrix}
0 & 1 & 0 & 0 \\
d & a & 1 & 0 \\
bd+c & ab & b & 1 \\
\end{bmatrix}
$
};
\node at (11,-5) {$\bW^{UR}$};

\end{tikzpicture} 

\begin{tikzpicture}[very thick, scale=.26]
\centering
\node[deterministic, label=center:$Z_L$, font=\small] (Z_L) at (-3, 0) {};
\node[observed, label=center:$Y_1$, font=\small] (Y_1) at (2.5, 0) {};
\node[observed, font=\small] (Z_3) at (-1, -5) {};
\node[observed, font=\footnotesize] (Z_100) at (-1.5, -5) {$Z_3/Y_3$};
\node[observed, font=\small] (Z_2) at (4.5, -5) {};
\node[observed, font=\footnotesize] (Z_200) at (5, -5) {$Z_2/Y_2$};

\path[causal] (Z_2) edge node[right, pos=0.6] {$a$} (Y_1);
\path[causal] (Z_3) edge node[left] {$c$} (Z_L);
\path[causal] (Z_3) edge node[below] {$b$} (Z_2);
\path[causal] (Z_2) edge node[above, pos = 0.7] {$d$} (Z_L);
\node at (6.5,-7.5) {(b) Linear SEM-ME};
\node at (11,-2.5) {\small
$
\begin{bmatrix}
0 & d & bd+c \\
0 & 1 & b \\
0 & 0 & 1 \\
1 & a & ab 
\end{bmatrix}   
$
};
\node at (11,-6) {$\bW^{ME}$};
\end{tikzpicture} 
\vspace{-3mm}
\caption{Example explaining the mapping.}
\label{fig:mapping} 
\vspace{-5mm}
\end{wrapfigure}
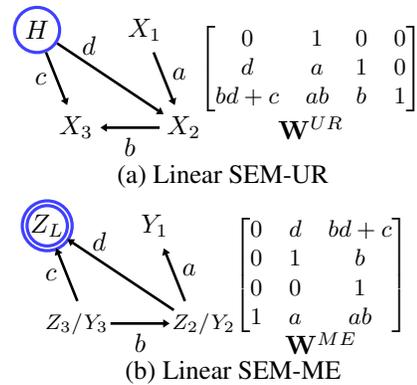
\vspace{-1mm}
\vspace{-1mm}
\begin{remark}
The mapping introduced here is between a weighted diagram in SEM-UR and a set of weighted diagrams in SEM-ME. The elements of the set in fact correspond to different settings of the non-leaf underlying variables being observed or not. Note that in most applications, it is clear to the researcher whether an underlying variable is observed or measured with error. Therefore, only one element of the set is relevant to the application and the set reduces to a single element. Consequently, in our identifiability results, without loss of generality, we assume the set is a singleton.
\end{remark}
\vspace{-1mm}




Theorem \ref{thm:main_equivalence} implies that under separability assumption, any identifiability result based on the mixing matrix for one model is applicable to the other model, by reversing all edges. This allows us to study the problem of causal discovery of these two models simultaneously.
\vspace{-2mm}
\begin{example}
\label{ex:mapping}
Consider the linear SEM-UR in Figure \ref{fig:mapping}(a) comprised of three observed and one latent variable
(denoted by a single circle). The corresponding linear SEM-ME under the mapping $\varphi$ is shown in Figure \ref{fig:mapping}(b). Specifically, (i) the latent variable $H$ is mapped to the \emph{\texttt{u-leaf}} variable $Z_L$ (denoted by a double circle); (ii)
the observed root $X_1$ is mapped to the
observed underlying variable $Y_1$; and (iii) 
observed variables $X_2$ and $X_3$ could be mapped to either observed or unobserved underlying variables; hence the UR graph will be mapped to a set of 4 graphs in the ME model. The mixing matrices for both models are shown in Figure \ref{fig:mapping}.
It can be readily seen that there exists a permutation of the columns of $\bW^{UR}$, which is equal to the transpose of $\bW^{ME}$.
\end{example}

\vspace{-4mm}
\section{Identifiability of models} \label{sec:mapping}
\vspace{-3mm}
In this section, we study the extent of identifiability of linear SEM-ME and SEM-UR under the separability assumption. 
We consider both a weaker notion of structure identification, where only the graph structure of the model is obtained, and a stronger notion of model identification where, in addition to the graph structure, model parameters are recovered as well.
In Section \ref{subsection:faithfulness}, we propose SEM-ME and SEM-UR faithfulness assumptions, which consist of two parts, the first of which is identical to the conventional faithfulness in linear causal models. We first discuss model identifiability under conventional faithfulness (i.e., the first part of our assumptions), which can be characterized by \emph{ancestral ordered grouping} (AOG) of the variables, explained in Section \ref{subsection:AOG}.
Next, under both parts of our proposed faithfulness assumptions, the extent of identifiability can be characterized by \emph{direct ordered grouping} (DOG) of the variables, explained in Section \ref{subsection:full_identifiability}. This leads to a subsequent result on structure identifiability for SEM-UR. In Section \ref{subsection:algorithm}, we show that the DOG characterization possesses a sparsity property, which can be used to construct recovery algorithms for both models.

\vspace{-3mm}
\subsection{Faithfulness assumption} \label{subsection:faithfulness}
\vspace{-2mm}
 
\begin{assumption}[SEM-ME faithfulness]\label{assumption:me_faithfulness_short}
(a) The total causal effect of any underlying variable on its descendant is not equal to zero.
(b) For each variable $V$ in a SEM-ME, the causal effect of parents of $V$ on $V$ cannot be replicated by fewer or equal variables due to fine tuning of the parameters. (See Appendix \ref{app:faithfulness_full} for the formal statement.)
\end{assumption}
\vspace{-4mm}
\begin{assumption}[SEM-UR faithfulness] \label{assumption:ur_faithfulness_short}
(a) The total causal effect of any observed or latent variable on its descendant is not equal to zero.
(b) For each variable $V$ in a SEM-UR, the causal effect of $V$ on children of $V$ cannot be replicated by fewer or equal variables due to fine tuning of the parameters. (See Appendix \ref{app:faithfulness_full} for the formal statement.)
\vspace{-2mm}
\end{assumption}

Assumptions \ref{assumption:me_faithfulness_short}\emph{(a)} and \ref{assumption:ur_faithfulness_short}\emph{(a)} are identical to conventional faithfulness in linear causal models. They require that when multiple causal paths exist from any (observed or unobserved) variable to its descendants, their combined effect (i.e., sum of products of path coefficients) is not equal to zero, see \citep{spirtes2000causation}. 
Assumptions \ref{assumption:me_faithfulness_short}\emph{(b)} and \ref{assumption:ur_faithfulness_short}\emph{(b)} prevent certain path cancellation or parameter proportionality. 
Importantly, Assumptions \ref{assumption:me_faithfulness_short} and \ref{assumption:ur_faithfulness_short} are violated with probability zero if all model coefficients are drawn randomly and independently from continuous
distributions; see Appendix \ref{app:faithfulness_zero_measure} for the proof. 
As an example for violation of Assumption \ref{assumption:me_faithfulness_short}\emph{(b)}, consider the linear SEM-ME generated by the graph in Figure \ref{fig:ME_faithfulness}(a), which satisfies Assumption \ref{assumption:me_faithfulness_short}\emph{(a)}. However, we can write $Z_4$ as $(b+c) Z_2+N_{Z_3}$, which is caused by the path cancellation of the triangle $(Z_1, Z_3, Z_4)$.
This means that that the causal effect of $Z_1$ and $Z_2$ on $Z_4$ can be summarized by $Z_2$ alone.
Therefore, the model violates Assumption \ref{assumption:me_faithfulness_short}\emph{(b)}.
Lastly, Assumption \ref{assumption:ur_faithfulness_short} is strictly weaker than bottleneck faithfulness in \citep{adams2021identification}; see Appendix \ref{app:faithfulness_bottleneck} for the proof. Please refer to Appendix \ref{app:faithfulness} for more discussion about our faithfulness assumption and Appendix \ref{app:comarison_adams} for a detailed comparison between our identifiability results and the results in \citep{adams2021identification}. 

\vspace{-2mm}
\begin{wrapfigure}{r}{0.42\textwidth}
\vspace{-9mm}
\centering
\begin{tikzpicture}[very thick, scale=.26]
\centering
\foreach \place/\name in {{(-3,0)/Z_1}, {(2.5,0)/Z_2}, {(4.5,-5)/Z_3}}
    \node[observed] (\name) at \place {};

\foreach \place/\name in  {(-1,-5)/Z_4}
    \node[deterministic] (\name) at \place {};
\foreach \place/\name in  {{(-3,0)/Z_1}, {(2.5,0)/Z_2}, {(4.5,-5)/Z_3}, {(-1,-5)/Z_4}}
    \node at \place {$\name$};

\path[causal] (Z_1) edge node[above] {$a$} (Z_2);
\path[causal] (Z_2) edge node[right, pos=0.4] {$b$} (Z_3);
\path[causal] (Z_1) edge node[left] {$-d$} (Z_4);
\path[causal] (Z_3) edge node[below] {$1$} (Z_4);
\path[causal] (Z_2) edge node[left, pos = 0.15] {$c$} (Z_4);
\path[causal] (Z_1) edge node[below, pos = 0.2] {$d$} (Z_3);
\node at (0.75,-7.5) {(a)};
\end{tikzpicture} 
\begin{tikzpicture}[very thick, scale=.26]
\centering
\tikzstyle{every node}=[font=\small]
\foreach \place/\name in {{(-3,0)/Z_1}, {(2.5,0)/Z_2}, {(4.5,-5)/Z_4}}
    \node[observed] (\name) at \place {};

\foreach \place/\name in  {(-1,-5)/Z_3}
    \node[deterministic] (\name) at \place {};  
    
\foreach \place/\name in  {{(-3,0)/Z_1}, {(2.5,0)/Z_2}, {(4.5,-5)/Z_4}, {(-1,-5)/Z_3}}
    \node at \place {$\name$};
    
\path[causal] (Z_1) edge node[above] {$a$} (Z_2);
\path[causal] (Z_2) edge node[right, pos=0.4] {$b+c$} (Z_4);
\path[causal] (Z_1) edge node[left] {$d$} (Z_3);
\path[causal] (Z_4) edge node[below] {$1$} (Z_3);
\path[causal] (Z_2) edge node[left, pos = 0.15] {$-c$} (Z_3);
\node at (0.75,-7.5) {(b)};
\end{tikzpicture} 

\begin{tikzpicture}[very thick, scale=.2]
\centering
\node[x=1]{
$~\begin{bmatrix}
Z_1 \\
Z_2 \\
Z_3 \\
Z_4 
\end{bmatrix}   
=
\begin{bmatrix}
1 & 0 & 0 \\
a & 1 & 0 \\
ab+d & b & 1 \\
a(b+c) & b+c & 1 
\end{bmatrix}   
\begin{bmatrix}
N_{Z_1} \\
N_{Z_2} \\
N_{Z_3}
\end{bmatrix}   
$
};
\end{tikzpicture} 

\vspace{-4mm}
\caption{Example explaining AOG and SEM-ME faithfulness assumption.} 
\label{fig:ME_faithfulness} 
\vspace{-4mm}
\end{wrapfigure}

\vspace{-2mm}
\subsection{Ancestral ordered grouping} \label{subsection:AOG}
\vspace{-2mm}
We first study the extent of identifiability of the models only under conventional faithfulness, i.e., Assumptions \ref{assumption:me_faithfulness_short}\emph{(a)} and \ref{assumption:ur_faithfulness_short}\emph{(a)}.
\vspace{-1.5mm}
\begin{definition}[Ancestral ordered grouping (AOG)]
\label{def:ancestral_ordered_group_decomposition}
The AOG of a SEM-ME (resp. SEM-UR) is a partition of $\mathcal{Z}\cup\setY$ (resp. $\setH\cup\setX$) into distinct sets. This partition is described as follows:
(1) Assign each \emph{\texttt{nu-leaf}} node (resp. observed node) to a distinct group.
(2) SEM-ME: For each \emph{\texttt{u-leaf}} node $Z_j\in \mathcal{Z}$, if there exists one parent $V_i$ such that $Z_j$ has no other parents, or all other parents of $Z_j$ are also ancestors of $V_i$, assign $Z_j$ to the same group as $V_i$. Otherwise, assign $Z_j$ to a separate group (with no \emph{\texttt{nu-leaf}} node).
(2) SEM-UR: For each latent variable $H_j$, if there exists one child $X_i$ such that $H_j$ has no other children, or all other children of $H_j$ are also descendants of $X_i$, assign $H_j$ to the same group as $X_i$. Otherwise, assign $H_j$ to a separate group (with no observed variable). 
\end{definition}
\vspace{-2mm}
\begin{definition}[AOG equivalence class]
The AOG equivalence class of a linear SEM-ME (resp. SEM-UR) is a set of models where the elements of this set all have the same mixing matrix (up to permutation and scaling) and same ancestral ordered groups.
\end{definition}
\vspace{-2mm}

For example, the two models represented by the graphs in Figure \ref{fig:ME_faithfulness} have the same mixing matrix and ancestral ordered groups. The elements of an AOG equivalence class possess the following property.

\vspace{-2mm}
\begin{proposition}\label{prop:aog_property}
Models in the same AOG equivalence class have the same causal order among the groups, but not necessarily all the same edges across the groups.
\end{proposition}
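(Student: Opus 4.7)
The plan is to prove the two parts of the proposition separately: the invariance of the causal order among groups, and the failure of cross-group edges to be pinned down.

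For the first part I would work in the SEM-UR framework, the SEM-ME statement following symmetrically via Theorem~\ref{thm:main_equivalence}. The key observation is that, under Assumption~\ref{assumption:ur_faithfulness_short}\emph{(a)}, the zero pattern of $\bW^{UR} = [(\bI-\bA)^{-1}\bB\;\;(\bI-\bA)^{-1}]$ encodes ancestral relations: the $(i,j)$-entry of $(\bI-\bA)^{-1}$ is the sum, over directed paths from $X_j$ to $X_i$, of products of edge coefficients, so faithfulness forces this entry to be nonzero iff $X_j$ is an ancestor of $X_i$ (with $j=i$ contributing $1$). The analogous statement for the latent block $(\bI-\bA)^{-1}\bB$ records the ancestral relations from each latent to each observed variable.

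I would then argue that within a fixed AOG equivalence class the column-permutation ambiguity of the mixing matrix is constrained. Each observed variable contributes a distinguished column (its own noise), and each latent-only group contributes exactly one column, so the AOG supplies a canonical pairing of columns to groups. Consequently, corresponding columns in two models of the class share the same nonzero pattern (column scaling by a nonzero scalar is irrelevant for zero/nonzero). The ancestral relations among group centers therefore coincide across the class, and since the causal order among groups is exactly the partial order induced by these ancestral relations, that order is invariant.

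For the second part I would invoke the example in Figure~\ref{fig:ME_faithfulness}: the two displayed SEM-ME graphs produce the same mixing matrix (shown beneath them) and yield the same ancestral ordered grouping, yet differ in edge set---a triangle cancellation replaces a direct cross-group edge in one model by a detour routed through another group. This witnesses that cross-group edges need not coincide within an AOG equivalence class.

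The principal technical point will be the bookkeeping in the second paragraph: verifying that the AOG is sufficient to fix, up to a within-group relabeling that does not affect the ancestral pattern, a consistent pairing between noise columns of the two mixing matrices. Once that pairing is established the zero-pattern argument propagates immediately, yielding the claimed invariance of the group order.
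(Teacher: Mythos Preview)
Your approach has the right intuition—that the support pattern of the mixing matrix should determine the ancestral relations among groups—but there is a genuine gap in the step from ``corresponding columns share the same nonzero pattern'' to ``the ancestral relations among group centers therefore coincide across the class.''

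The issue is this: for a model $M'$ in the AOG equivalence class, the entry $[\WUR]_{j,c}$ (where $c$ is the column assigned as $X_i$'s noise in $M'$) equals the total causal effect of $X_i$ on $X_j$ \emph{in $M'$}. A nonzero entry certifies that $X_i$ is an ancestor of $X_j$ in $M'$, but a zero entry does \emph{not} certify non-ancestry unless $M'$ itself satisfies Assumption~\ref{assumption:ur_faithfulness_short}\emph{(a)}. You established faithfulness only for the ground-truth model; nothing in the definition of the AOG equivalence class (same mixing matrix, same ancestral ordered grouping) guarantees that every member satisfies it. So from the shared zero pattern you can conclude only one containment: ancestral relations present in $M$ persist in $M'$. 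The reverse containment—that $M'$ acquires no additional ancestral relations between group centers—does not follow from your argument, and without it the two causal orders need not agree.

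The paper's proof confronts exactly this point. Working in SEM-ME, it first carries out explicit structural-equation manipulations (swapping a center $V_{i_1}$ with a non-center $V_{j_1}$ in its group and tracking how the parent sets of $V_{i_1}$, $V_{j_1}$, and each child $V_{k_1}$ of $V_{i_1}$ transform) to show that any model obtained by center-swapping (i) only adds edges originating from existing ancestors, and hence (ii) itself satisfies Assumption~\ref{assumption:me_faithfulness_short}\emph{(a)}. Only after faithfulness of the alternative model is established does the support-based contradiction argument go through. Your proposal skips this step and flags the column-pairing bookkeeping as the principal difficulty, but that pairing is comparatively routine; the substantive work is verifying faithfulness of the alternative models, and that requires engaging with the structural equations rather than the mixing matrix alone.

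Your treatment of the second claim (via the Figure~\ref{fig:ME_faithfulness} example) matches the paper and is fine.
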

\vspace{-2mm}

The AOG characterization is a refined version of the original ordered grouping proposed in \citep{zhang2018causal}, where no variables in later groups cause variables in earlier groups.
Yet, the grouping introduced in that work is order dependent and hence in general not unique. Moreover, it does not always characterize the extent of identifiability.
Our AOG characterization also uses a similar idea as the learning approach in \citep{salehkaleybar2020learning}; however, that work allows for latent variables which have parents and in general does not recover the full structure. Note that the AOG can be recovered based on the support of the mixing matrix under Assumption \ref{assumption:me_faithfulness_short}\emph{(a)} (or \ref{assumption:ur_faithfulness_short}\emph{(a)}). Please refer to Appendix \ref{app:AOG} for more details.

According to Definition \ref{def:ancestral_ordered_group_decomposition}, there is at most one \texttt{nu-leaf} node in each ancestral ordered group of a SEM-ME. 
Furthermore, each \texttt{u-leaf} node $Z_j$ is assigned to the ancestral ordered group of at most one of its parents: Following the true causal order among \texttt{nu-leaf} nodes, only the last parent of $Z_j$ may have the same ancestor set. 
Hence, if a group has more than one node, then there must be exactly one \texttt{nu-leaf} node, and the rest of the nodes are \texttt{u-leaf} nodes which are children of this node. This concludes that the induced structure on each ancestral ordered group is a star graph.
Similar property holds for SEM-UR. Define the center of the ancestral ordered group as the \texttt{nu-leaf} node (resp. the observed node), or the \texttt{u-leaf} node (resp. latent node) if the group is of size one. 
The following result illustrates that fixing the center of the ancestral ordered groups for SEM-ME, and fixing the exogenous noise term of the center of the ancestral ordered groups as well as the choice of scaling and permutation of the columns of $\bB$ for SEM-UR, leads to unique identification of the models.

\vspace{-2mm}
\begin{proposition} \label{prop:center}
(i) Models in an AOG equivalence class of a SEM-ME can be identified by the choice of the centers of the groups. That is, for a given choice of the centers, there is only one corresponding model. 
(ii) Models in an AOG equivalence class of a SEM-UR can be identified (up to the permutation and scaling of the columns of $\bB$ in \eqref{eq:SEM-UR_matrix_form}) by the choice of the exogenous noise terms associated to the centers of the groups. That is, for a given assignment of the exogenous noise terms, all corresponding models have the same $\bA$ in \eqref{eq:SEM-UR_matrix_form}, and have the same $\bB$ up to permutation and scaling of the columns.
\end{proposition}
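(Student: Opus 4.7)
The plan is to show that once the AOG partition and the mixing matrix (up to column permutation and scaling) are fixed, choosing the centers determines which nodes are \texttt{nu-leaf}s versus \texttt{u-leaf}s, which in turn uniquely pins down the parameter matrices in the canonical forms of Equations \eqref{eq:ME_matrix_form} and \eqref{eq:SEM-UR_matrix_form}. Recall from Proposition \ref{prop:aog_property} that the causal order among the groups is already fixed within the AOG equivalence class, so the only residual freedom is the center choice within each multi-node group, together with the column permutation/scaling ambiguity of the mixing matrix.

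For part (i), I would first argue that the columns of $\bW^{ME}$ can be uniquely matched to \texttt{nu-leaf} noise terms once centers are chosen. In the canonical form only \texttt{nu-leaf} nodes carry exogenous noise, so the number of columns of $\bW^{ME}$ equals the number of \texttt{nu-leaf}s. Under Assumption \ref{assumption:me_faithfulness_short}(a), for each \texttt{nu-leaf} center $V$, the column of $\bW^{ME}$ corresponding to $N_V$ has nonzero entries exactly in the row of $V$ and the rows of its descendants, and the entry in row $V$ equals $1$ since $(\bI-\bC)^{-1}$ has unit diagonal when $\bC$ is strictly lower triangular. Normalizing each column so that the entry in the row of its hypothesized center equals $1$, and proceeding iteratively along the AOG causal order among the \texttt{nu-leaf}s, uniquely assigns each column to a \texttt{nu-leaf} noise. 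Once columns are identified, the submatrix of $\bW^{ME}$ on \texttt{nu-leaf} rows equals $(\bI - \bC)^{-1}$, which determines $\bC$ by inversion; the submatrix on \texttt{u-leaf} rows equals $\bD(\bI-\bC)^{-1}$, which determines $\bD$ upon right-multiplication by $(\bI-\bC)$. This establishes uniqueness of the SEM-ME model.

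For part (ii), I would invoke Theorem \ref{thm:main_equivalence}, which yields $\tilde{\bW}^{ME}(M') = (\tilde{\bW}^{UR}(M))^{\top}$ for $M = \phi(M')$. Fixing the exogenous noise term attached to each center of the SEM-UR groups amounts to choosing, for each center, a specific column of $\bW^{UR}$ to play the role of its noise; via transposition this mirrors the center choice in SEM-ME. Applying the argument of part (i) to the transposed matrix recovers $\bA$ uniquely (as the analog of $\bC$), since $\bA$ encodes edges among observed variables which correspond under the mapping to rows/columns indexed by \texttt{nu-leaf} noises. The matrix $\bB$, which encodes latent-to-observed edges, is determined only up to permutation and scaling of its columns: this residual ambiguity is intrinsic because the latent variables in $\setH$ are unobserved, and any bijective relabeling of $\setH$ together with a rescaling of $N_H$ preserves the joint distribution of $X$ while permuting and rescaling the columns of $\bB$ accordingly.

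The main obstacle is formalizing, in the presence of the overall permutation-and-scaling ambiguity of the mixing matrix, that the center choice uniquely matches columns of $\bW^{ME}$ to \texttt{nu-leaf} noises. This requires carefully exploiting the star-graph structure of each multi-node AOG group and the AOG causal ordering to peel off columns one at a time from the earliest ancestor downward, and invoking faithfulness (Assumption \ref{assumption:me_faithfulness_short}(a)) to ensure that the total causal effects which should be nonzero actually are, so that support patterns distinguish candidate columns. For part (ii), the secondary bookkeeping challenge is tracking how transposition swaps the roles of rows and columns between $\bW^{ME}$ and $\bW^{UR}$ and interpreting ``choice of noise term'' consistently on the SEM-UR side, so that the $\bB$-column ambiguity emerges cleanly as the only residual freedom.
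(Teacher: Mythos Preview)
Your proposal is correct and follows essentially the same route as the paper. The paper formulates the result as Lemma~\ref{lemma:coefficient} and, for part (i), extracts the $p_n\times p_n$ submatrix of $\bW^{ME}$ on the chosen center rows, observes that it equals $(\bI-\bC)^{-1}$ up to column permutation and scaling, permutes its columns to lower-triangular form and normalizes the diagonal to $1$ to fix the scaling, then inverts to obtain $\bC$ and reads off $\bD$ from the remaining rows---exactly your argument, only stated more tersely.

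The one genuine difference is part (ii): the paper argues directly on $\bW^{UR}$ (taking the column submatrix corresponding to the chosen centers' noise terms, recovering $\bA$ from $(\bI-\bA)^{-1}$, and noting that the residual column permutation/scaling cannot be removed from the $\bB$ block), whereas you route through Theorem~\ref{thm:main_equivalence} and transpose. Your approach is a perfectly legitimate shortcut that exploits the duality already proved, and it makes the source of the $\bB$-column ambiguity transparent (it is the analog of the \texttt{u-leaf} rows, which carry no exogenous noise of their own and hence no canonical normalization). The paper's direct argument, on the other hand, avoids any dependence on the mapping theorem, keeping the proposition self-contained.
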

\vspace{-2mm}
Equipped with Proposition \ref{prop:center},
we are now ready to state our identifiability results for SEM-ME and SEM-UR under Assumptions
\ref{assumption:me_faithfulness_short}\emph{(a)} and  \ref{assumption:ur_faithfulness_short}\emph{(a)}), respectively. 
\vspace{-2mm}
\begin{theorem} \label{thm:aog}
Under Assumptions \ref{assumption:separability} and \ref{assumption:me_faithfulness_short}{(a)} (resp. \ref{assumption:ur_faithfulness_short}{(a)}), the linear SEM-ME (resp. SEM-UR) can be identified up to its AOG equivalence class.
\end{theorem}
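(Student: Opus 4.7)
The plan is to prove the SEM-ME case and then deduce the SEM-UR case via Theorem \ref{thm:main_equivalence}, which identifies $\bW^{UR}$ and $\bW^{ME}$ (after a suitable column permutation) as transposes of one another. Consequently, their support patterns carry the same information modulo interchanging the roles of rows and columns, so it suffices to work out the SEM-ME case.

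First, Assumption \ref{assumption:separability} gives the overall mixing matrix $\bW$ in Equation \eqref{eq:SEM-ME_mixing_matrix_x} up to column permutation and scaling. The measurement-error columns have a distinctive shape---a single nonzero entry, located in a $U$-row, with zeros in every $Y$-row---which allows us to identify and remove them, leaving $\bW^{ME}$ up to column permutation and scaling. Because these operations preserve each row's zero/nonzero pattern, the support of $\bW^{ME}$ is a well-defined invariant of the observed distribution.

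The central step is to translate Assumption \ref{assumption:me_faithfulness_short}(a) into an exact characterization of this support. Expanding $(\bI-\bC)^{-1}=\sum_{k\ge 0}\bC^k$ and the analogous series for $\bD(\bI-\bC)^{-1}$, every entry of $\bW^{ME}$ is a sum of products of edge weights along directed paths from a \texttt{nu-leaf} node to an underlying variable; faithfulness forbids these sums from vanishing, so the column of $\bW^{ME}$ indexed by \texttt{nu-leaf} $V_j$ has support exactly $\{V_j\}\cup\mathrm{desc}(V_j)$, and the row of an underlying variable $V$ has support equal to its \texttt{nu-leaf} ancestors (together with $V$ itself when $V$ is \texttt{nu-leaf}). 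I then apply Definition \ref{def:ancestral_ordered_group_decomposition} directly: the ancestor order on \texttt{nu-leaf} nodes is recovered from strict inclusion among column supports, each \texttt{u-leaf}'s set of \texttt{nu-leaf} ancestors is recovered from its row support, and together these determine the ancestral ordered groups and their causal order. Since the support of $\bW^{ME}$ is invariant under column permutation and scaling, any model observationally equivalent to the ground truth produces the same support and hence, by this decoding, the same AOG. The SEM-UR claim follows by applying the same argument to $\bW^{UR}=(\bW^{ME})^{\top}$.

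The hard part will be ensuring that the decoding in the final step uses only invariants that are genuinely recoverable from the support---in particular, it must not require knowing which row within each group is the center, nor the individual parent edges of a \texttt{u-leaf}, since neither is pinned down in general. This is handled because the AOG criterion in Definition \ref{def:ancestral_ordered_group_decomposition} depends only on the pattern of \texttt{nu-leaf} ancestors, which is exactly the information carried by row supports, while Proposition \ref{prop:center} confirms that the remaining center-choice ambiguity is absorbed into the AOG equivalence class itself.
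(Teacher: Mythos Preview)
Your argument establishes one direction of the theorem: from the support of $\WME$ (which, under Assumption~\ref{assumption:me_faithfulness_short}(a), encodes exactly the \texttt{nu-leaf} ancestor relation) you correctly decode the ancestral ordered groups, so any faithful model with the same mixing matrix lands in the same AOG equivalence class. But ``identified up to the AOG equivalence class'' in this paper means the AOG equivalence class is the \emph{exact} extent of identifiability, and for that you must also prove the converse: every alternative choice of centers within the groups yields a model that (i) has the same mixing matrix and (ii) still satisfies Assumption~\ref{assumption:me_faithfulness_short}(a). You assert this by pointing to Proposition~\ref{prop:center}, but that proposition only says each center choice pins down a unique set of coefficients; it says nothing about whether the resulting model is faithful. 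Without (ii), such an alternative could be ruled out and the AOG class would not be tight.

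This is not a formality. The paper's proof spends its ``first part'' on precisely this point: it writes out the structural equations after swapping a center $V_{i_1}$ with a non-center $V_{j_1}$ in the same group, shows that every newly created parent of any affected node already lies in that node's ancestor set in the ground truth, and concludes that the ancestor relation (hence the row supports of $\WME$, hence faithfulness) is preserved. This uses the defining property of the AOG---that $Pa(V_{j_1})\setminus\{V_{i_1}\}\subseteq An(V_{i_1})$---in an essential way, and must be iterated over all swapped groups. Your outline contains no analogue of this step, so as written it only shows the model is identifiable \emph{at least} up to its AOG equivalence class, not \emph{exactly} up to it.
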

\vspace{-1mm}


\vspace{-2mm}
\begin{corollary}\label{cor:size_equivalence_class}
(i) Denote the AOG of a linear SEM-ME (resp. SEM-UR) as $\{\setG^{(g)}\}_{g\in [g_k]}$, where $g_k$ is the number of ancestral ordered groups. The size of the AOG equivalence class, described in Theorem \ref{thm:aog}, is equal to $\prod_{g\in [g_k]} |\setG^{(g)}|$. 
(ii) Under Assumptions \ref{assumption:separability} and
\ref{assumption:me_faithfulness_short}{(a)}, a linear SEM-ME can be uniquely identified if and only if no \emph{\texttt{u-leaf}} node has precisely the same ancestors as any \emph{\texttt{nu-leaf}} node.
(iii) Under Assumptions \ref{assumption:separability} and \ref{assumption:ur_faithfulness_short}{(a)}, a linear SEM-UR can be identified up to the permutation and scaling of the columns of $\bB$ if and only if no latent variable has precisely the same descendants as any observed variable.
\end{corollary}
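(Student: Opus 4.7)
The plan is to derive all three parts as corollaries of Proposition \ref{prop:center} combined with a careful unfolding of the merging rule in Definition \ref{def:ancestral_ordered_group_decomposition}.

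For part (i), I would invoke Proposition \ref{prop:center}(i) to put the models of an AOG equivalence class of a SEM-ME in bijection with choices of a center for each group, and Proposition \ref{prop:center}(ii) to give the analogous parameterization for a SEM-UR (up to column permutation and scaling of $\bB$) via the assignment of an exogenous noise term to each group's center. For each group $\setG^{(g)}$ there are exactly $|\setG^{(g)}|$ admissible centers, namely every member of the group, and these choices across distinct groups are independent. Hence the equivalence class has cardinality $\prod_{g\in[g_k]}|\setG^{(g)}|$.

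For parts (ii) and (iii), I would start by noting that unique identification corresponds to this product being $1$, i.e., every $|\setG^{(g)}|=1$. By Definition \ref{def:ancestral_ordered_group_decomposition} a group fails to be a singleton exactly when some \texttt{u-leaf} $Z_j$ (resp.\ latent $H_j$) is merged into the group of some \texttt{nu-leaf} $V_i$ (resp.\ observed $X_i$). For part (ii), unfolding the merging condition $V_i\in Pa(Z_j)$ with $Pa(Z_j)\setminus\{V_i\}\subseteq An(V_i)$ and combining with $An(V_i)\subseteq An(Z_j)$ yields $An(Z_j)=An(V_i)\cup\{V_i\}$, which is the precise meaning of ``$Z_j$ and $V_i$ have the same ancestors'' intended by the statement. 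For the converse, if $An(Z_j)=An(V_i)\cup\{V_i\}$, I would use acyclicity to argue that the only causal path from $V_i$ to $Z_j$ must be the direct edge $V_i\to Z_j$ (any strict intermediate $W$ on such a path would lie in $An(Z_j)\setminus(\{V_i\}\cup An(V_i))$, which is empty), so $V_i\in Pa(Z_j)$ and every remaining parent of $Z_j$ lies in $An(V_i)$, recovering the merging condition. Part (iii) I would then obtain either by the symmetric argument with ``child'' and ``descendant'' replacing ``parent'' and ``ancestor'', or more cleanly by invoking the mapping $\phi$ of Theorem \ref{thm:main_equivalence}: since $\phi$ reverses every edge, \texttt{u-leaf}/\texttt{nu-leaf} ancestor relations in the SEM-ME side correspond to latent/observed descendant relations in the associated SEM-UR side, so part (iii) reduces to part (ii).

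The main obstacle is the converse implication: showing that ancestor- (resp.\ descendant-) coincidence actually forces the merging rule of Definition \ref{def:ancestral_ordered_group_decomposition} rather than some weaker structural relation. The acyclicity argument sketched above is the crux; once it is in place the remaining work is routine bookkeeping, and all three claims follow without any further machinery.
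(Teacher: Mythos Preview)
Your proposal is correct and follows essentially the approach implicit in the paper. The paper does not spell out a separate proof of this corollary; it is intended to fall out directly from Proposition~\ref{prop:center} (proved as Lemma~\ref{lemma:coefficient} in the appendix) together with Definition~\ref{def:ancestral_ordered_group_decomposition} and the support characterization of ancestral ordered groups, which is exactly the route you take. Your acyclicity argument for the converse in part~(ii) is the right observation, and your reading of ``same ancestors'' as $An(Z_j)=An(V_i)\cup\{V_i\}$ is indeed the intended one, matching the paper's support-based characterization of groups (rows of $\bW^{ME}$ with identical support).
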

\vspace{-2mm}

As stated in Theorem \ref{thm:aog}, in general, the edges across the groups cannot all be discovered and the centers of the groups (or their corresponding exogenous noise terms) are not identifiable. This is due to the fact that when only Assumption \ref{assumption:me_faithfulness_short}\emph{(a)} holds, certain path cancellation or parameter proportionality in the model can occur. This is illustrated in the following example. 
\vspace{-1mm}
\begin{example} \label{example:ME_faithfulness} 
Consider the SEM-ME generated by causal graph shown in Figure \ref{fig:ME_faithfulness}(a).
The AOG is $\{\{Z_1\}, \{Z_2\}, \{Z_3,Z_4\}\}$.
Because either $Z_3$ or $Z_4$ can be the center of the last group, 
the AOG equivalence class includes the ground-truth and the model represented by Figure \ref{fig:ME_faithfulness}(b).
Since both models have the same mixing matrix, an identification algorithm merely based on the mixing matrix and Assumption \ref{assumption:me_faithfulness_short}{(a)} cannot distinguish these two models. Note that not only can we not learn the direction of the edge in group $\{Z_3,Z_4\}$, but also the existence of some of the edge across groups, such as $Z_1\rightarrow Z_4$, cannot be established in the ground-truth because Assumption \ref{assumption:me_faithfulness_short}{(b)} is violated.
\end{example}
\vspace{-1mm}

\vspace{-3mm}
\subsection{Identifiability of the model up to equivalence classes} \label{subsection:full_identifiability}
\vspace{-2mm}
Under Assumptions \ref{assumption:separability} and \ref{assumption:me_faithfulness_short} (resp. \ref{assumption:ur_faithfulness_short}), the extent of identifiability of a SEM-ME (resp. SEM-UR) can be characterized by the direct ordered grouping defined in Definition \ref{def:ordered_group_decomposition} below. We first present two conditions that are used to implement the DOG. We then give the formal definition of DOG. In the following, we use $Ch(V)$ to denote the set of children of variable $V$ in the causal diagram.


\vspace{-1mm}
\begin{condition}[SEM-ME edge identifiability] \label{condition:me-id}
For a given edge from an \emph{\texttt{nu-leaf}} node $V_i$ to a \emph{\texttt{u-leaf}} node $Z_l$, at least one of the following two conditions is satisfied:
(a) $Pa(Z_l)\setminus \{V_i\}$ is not a subset of $Pa(V_i)$. That is, there exists another parent $V_j$ of $Z_l$, which is not a parent of $V_i$. 
(b) $Pa(Z_l)$ is not a subset of $\cap_{V_k\in Ch(V_i)\setminus \{Z_l\}} Pa(V_k) $. That is, there exists a child $V_k$ of $V_i$ and a parent $V_j$ of $Z_l$ such that $V_j$ is not a parent of $V_k$.
\end{condition}
\vspace{-1mm}
\vspace{-1mm}
\begin{condition}[SEM-UR edge identifiability] \label{condition:ur_id}
For a given edge from a latent variable $H_l$ to an observed variable $X_i$, at least one of the following two conditions is satisfied: 
(a) $Ch(H_l)\setminus \{X_i\}$ is not a subset of $Ch(X_i)$. That is, there exists another observed child $X_j$ of $H_l$, which is not a child of $X_i$. 
(b) $Ch(H_l)$ is not a subset of $\cap_{X_k\in Pa(X_i)\setminus \{H_l\}} Ch(X_k) $. That is, there exists an observed (or latent) parent $X_k$ (or $H_k$) of $X_i$ and a child $X_j$ of $H_l$ such that $X_j$ is not a child of $X_k$ (or $H_k$).
\end{condition}
\vspace{-1mm}

Figure \ref{fig:DOG_equivalence_class}(b) demonstrates an example of a graph structure which satisfies Assumption \ref{assumption:me_faithfulness_short} while containing an edge which violates Condition \ref{condition:me-id}.
Condition \ref{condition:me-id} is an equivalent formulation of the conditions for unique identifiability derived in \citep{zhang2018causal}. In that work, Zhang et al. proved that a linear SEM-ME can be uniquely recovered if Condition \ref{condition:me-id} is satisfied for every edge from an \texttt{nu-leaf} node to a \texttt{u-leaf} node.\footnote{Note that in their model, Zhang et al. assumed that all the underlying variables are unobserved.} 
They also conjectured that this condition is necessary for identifiability. We prove their conjecture here under a slightly different faithfulness assumption. Furthermore, we demonstrate that the conditions can be used to characterize an equivalence class of the models which characterizes the extent of identifiability under Assumptions \ref{assumption:separability} and \ref{assumption:me_faithfulness_short}; the same assertion holds for linear SEM-UR.
This characterization is done based on the notion of direct ordered grouping, defined as follows.
\vspace{-1.5mm}
\begin{definition}[Direct ordered grouping (DOG)]
\label{def:ordered_group_decomposition}
The DOG of a linear SEM-ME (resp. SEM-UR) is a partition of $\mathcal{Z}\cup \setY$ (resp. $\setH \cup \setX$) into distinct sets. This partition is described as follows:\\
(1) Assign each \emph{\texttt{nu-leaf}} node (resp. observed node) to a distinct ordered group. \\
(2) SEM-ME: For each \emph{\texttt{u-leaf}} node $Z_j\in \mathcal{Z}$, if there exists one parent $V_i$ such that the edge from $V_i$ to $Z_j$ violates Condition \ref{condition:me-id}, assign $Z_j$ to the same ordered group as $V_i$. Otherwise, assign $Z_j$ to a separate ordered group (with no \emph{\texttt{nu-leaf}} node).\\
(2) SEM-UR: For each latent variable $H_j$, if there exists one child $X_i$ such that the edge from $H_j$ to $X_i$ violates Condition \ref{condition:ur_id}, assign $H_j$ to the same ordered group as $X_i$. Otherwise, assign $H_j$ to a separate ordered group (with no observed variable).
\vspace{-2mm}
\end{definition}


\vspace{-1.5mm}
\begin{definition}[DOG equivalence class]
The DOG equivalence class of a linear SEM-ME (resp. SEM-UR) is a set of models where the elements of this set all have the same mixing matrix (up to permutation and scaling) and same direct ordered groups.
\end{definition}
\vspace{-2mm}
For example, the two models represented by the graphs in Figure \ref{fig:ME_faithfulness} have the same mixing matrix, but different direct ordered groups. We have the following property regarding elements of an DOG equivalence class.
\vspace{-2mm}
\begin{proposition} \label{prop:dog_property}
Models in the same DOG equivalence class have the same edges across the groups.
\end{proposition}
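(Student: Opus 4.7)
The plan is to combine the AOG-level result in Proposition \ref{prop:aog_property} with the edge-identifiability criteria in Condition \ref{condition:me-id} (resp. Condition \ref{condition:ur_id}) to show that within a DOG equivalence class, the only degrees of freedom lie strictly within individual groups.

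First I would verify that the DOG is a refinement of the AOG. For SEM-ME, if a \texttt{u-leaf} $Z_l$ is placed in the same DOG group as a parent $V_i$, then the edge $V_i \to Z_l$ violates Condition \ref{condition:me-id}(a), so $Pa(Z_l) \setminus \{V_i\} \subseteq Pa(V_i)$. In particular, every other parent of $Z_l$ is then an ancestor of $V_i$, so the AOG rule also places $Z_l$ in the group of $V_i$. An analogous argument (on children instead of parents) handles SEM-UR. Consequently, two models $M_1, M_2$ in the same DOG equivalence class lie in the same AOG equivalence class, and Proposition \ref{prop:aog_property} yields that they share the same causal order among their (identical) groups.

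Next I would show that every edge crossing two distinct DOG groups satisfies Condition \ref{condition:me-id} (respectively Condition \ref{condition:ur_id}), and hence is identifiable from the mixing matrix. If $V_i \in G_a$, $V_j \in G_b$ with $a \neq b$, then either $V_j$ is a \texttt{nu-leaf} (trivially a group center, and the across-group edge weight sits in a structurally constrained block of the mixing matrix), or $V_j$ is a \texttt{u-leaf} whose DOG assignment to $G_b$ rather than $G_a$ is precisely the statement that the edge $V_i \to V_j$ satisfies Condition \ref{condition:me-id}. I would then invoke the identifiability result underlying this condition—the strengthening of \cite{zhang2018causal} sketched immediately after the condition—to conclude that the existence and weight of each such edge are determined by specific entries of the mixing matrix. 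Since $M_1$ and $M_2$ share the same mixing matrix (up to permutation and scaling) and the same group structure, they must agree on every across-group edge. The SEM-UR case then follows from Theorem \ref{thm:main_equivalence} by transposing the mixing matrix and reversing edges under $\phi$/$\varphi$.

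The main obstacle is justifying that the across-group edge weight can be identified without first resolving the ambiguous within-group center. The key point is that Condition \ref{condition:me-id} furnishes either a witness parent (part (a)) or a witness sibling child (part (b)) whose exogenous noise contributes to an entry of the mixing matrix that can only be attributed to the edge $V_i \to V_j$ and not to any reparameterization internal to the group containing $V_j$. Carefully isolating this entry through linear combinations of rows of the mixing matrix (guided by the known group membership and causal order provided by Proposition \ref{prop:aog_property}) pins down the edge weight and completes the argument.
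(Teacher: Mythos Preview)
Your proposal takes a genuinely different route from the paper, and the route has a real gap.

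The paper does \emph{not} prove Proposition~\ref{prop:dog_property} by showing that each across-group edge is individually identifiable from the mixing matrix. Instead, it works constructively: starting from a model $M_0$, it explicitly computes the structural equations that result from swapping a center $V_{i_k}$ with another member $V_{j_k}$ of its direct ordered group. Using the violation of Condition~\ref{condition:me-id}(a) it shows $Pa_{M_1}(V_{j_1})\subseteq Pa_{M_0}(V_{i_1})$; using the violation of Condition~\ref{condition:me-id}(b) it shows $Pa_{M_1}(V_{k_1})\subseteq\{V_{j_1}\}\cup Pa_{M_0}(V_{k_1})\setminus\{V_{i_1}\}$ for every child $V_{k_1}$ of $V_{i_1}$. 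Thus no edges are added. A separate argument (using Assumption~\ref{assumption:me_faithfulness}) shows no edges are removed. The conclusion is that the swapped model is \emph{graph-isomorphic} to $M_0$ under the permutation that exchanges $V_{i_k}\leftrightarrow V_{j_k}$ and fixes all other nodes. Since this permutation moves nodes only within groups, across-group edges are preserved.

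Your argument instead tries to invoke an edge-level identifiability statement: ``the edge $V_i\to V_j$ satisfies Condition~\ref{condition:me-id}, hence it is determined by the mixing matrix.'' But no such edge-level result is available to cite. The result of \cite{zhang2018causal} referenced after Condition~\ref{condition:me-id} is \emph{global}: it says the entire model is identifiable when \emph{every} edge from an \texttt{nu-leaf} to a \texttt{u-leaf} satisfies the condition. In your setting the within-group edges violate the condition by construction, so you cannot apply that result. You would need to prove that the ambiguity from the non-identifiable within-group edges does not propagate to the across-group ones---and your last paragraph acknowledges this is the obstacle but offers only a sketch (``carefully isolating this entry through linear combinations of rows''). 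Pinning down which row of $\WME$ corresponds to which node already presupposes a choice of centers, which is exactly the degree of freedom you are trying to quotient out; making this precise forces you back into the explicit swap computation the paper performs.

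Your first step (DOG refines AOG) is correct and is used implicitly in the paper as well. The reduction of the SEM-UR case to SEM-ME via Theorem~\ref{thm:main_equivalence} is also fine in principle. But the core mechanism---why changing a center inside a DOG group cannot alter any edge outside that group---needs the direct structural-equation calculation, not an appeal to an unavailable per-edge identifiability lemma.
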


\vspace{-2mm}

The counterpart of Proposition \ref{prop:center} holds for the case of direct ordered groups as well and we avoid repeating it. Based on that counterpart, we have the following result regarding identifiability of linear SEM-ME and SEM-UR under Assumptions \ref{assumption:me_faithfulness_short} and \ref{assumption:ur_faithfulness_short}, respectively.

\vspace{-2mm}
\begin{theorem}\label{thm:DOG}
Under Assumptions \ref{assumption:separability} and \ref{assumption:me_faithfulness_short} (resp. \ref{assumption:ur_faithfulness_short}), the linear SEM-ME (resp. SEM-UR) can be identified up to its DOG equivalence class.
\end{theorem}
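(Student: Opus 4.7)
The plan is to first invoke Theorem \ref{thm:main_equivalence} to reduce the two statements to one. Any identifiability claim phrased in terms of the mixing matrix (up to permutation and scaling of columns) for a SEM-ME transfers to the corresponding SEM-UR via the mapping $\phi$, because the two mixing matrices are transposes of one another and the SEM-UR notions (Condition \ref{condition:ur_id}, Assumption \ref{assumption:ur_faithfulness_short}, the DOG) are precisely the edge-reversed analogues of their SEM-ME counterparts. I therefore focus on proving the SEM-ME version.

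The strategy is to leverage Theorem \ref{thm:aog}, which already gives identifiability up to the AOG equivalence class under the weaker Assumption \ref{assumption:me_faithfulness_short}(a). The DOG is a refinement of the AOG: the only nodes treated differently by the two constructions are u-leaf nodes that the AOG bundled with a nu-leaf parent, and the DOG may or may not split such a u-leaf off into its own group depending on whether the edge satisfies Condition \ref{condition:me-id}. So the remaining task is to show that Assumption \ref{assumption:me_faithfulness_short}(b) effects precisely this refinement, and that the within-group ambiguity predicted by the DOG is not further resolved.

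The heart of the argument is a node-by-node case analysis of u-leaf assignments. Fix an AOG group with nu-leaf center $V_i$ and a u-leaf $Z_l$ placed into this group by the AOG construction. If the edge $(V_i, Z_l)$ violates Condition \ref{condition:me-id}, i.e., both $Pa(Z_l)\setminus\{V_i\}\subseteq Pa(V_i)$ and $Pa(Z_l)\subseteq \bigcap_{V_k\in Ch(V_i)\setminus\{Z_l\}}Pa(V_k)$, I would construct an explicit alternative model in which $V_i$ and $Z_l$ swap roles (with edge coefficients renormalized) and verify that it induces the same mixing matrix. The two inclusion conditions ensure that no new edges must be added and no existing edge weights need to vanish, so the swap is a genuine member of the same AOG equivalence class, certifying that $V_i$ and $Z_l$ must remain bundled in one DOG group. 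Conversely, if the edge satisfies Condition \ref{condition:me-id}, any such swap would force the structural equations either to realize the effect of an additional direct parent (when (a) holds, because some $V_j\in Pa(Z_l)\setminus Pa(V_i)$ would now have to enter $V_i$'s equation only through $Z_l$) or to cancel the effect of such a parent on a sibling child of $V_i$ (when (b) holds), both of which are precluded by Assumption \ref{assumption:me_faithfulness_short}(b). Hence $Z_l$ must occupy its own DOG group in this case. Combining the two cases with the DOG counterpart of Proposition \ref{prop:center} and with Proposition \ref{prop:dog_property} yields identifiability exactly up to the DOG equivalence class.

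The main obstacle is the converse direction just described: producing a clean algebraic certificate that Condition \ref{condition:me-id}(a) or (b) being satisfied forces any candidate rearrangement of $V_i$ and $Z_l$ to induce a coefficient proportionality or path cancellation of exactly the kind forbidden by Assumption \ref{assumption:me_faithfulness_short}(b). This requires tracking how path coefficients of the alternative model transform under the candidate rearrangement and showing that the equality of the two mixing matrices forces precisely the fine-tuning that the second part of the faithfulness assumption rules out.
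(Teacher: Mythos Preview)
Your outline matches the paper's strategy: reduce SEM-UR to SEM-ME via Theorem~\ref{thm:main_equivalence}, start from the AOG characterization of Theorem~\ref{thm:aog}, and refine it via a swap analysis governed by Condition~\ref{condition:me-id}. However, two genuine gaps remain.

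First, your node-by-node analysis treats a single swap of $V_i$ with one u-leaf $Z_l$, but an arbitrary member of the AOG equivalence class may differ from the ground truth in the center choice of \emph{several} groups simultaneously. The paper handles this by building a sequence $M_0\to M_1\to\cdots\to M_m$ of single-center swaps and verifying, step by step, that (i) in the necessity direction, each swap adds no edges \emph{and} leaves Condition~\ref{condition:me-id} violated for all subsequent swaps, so the argument can be iterated; and (ii) in the sufficiency direction, the extra edge forced by the one ``bad'' swap (the one leaving its DOG group) cannot be cancelled by later swaps. Your proposal does not address this interaction, and without it the single-swap case does not extend to the full equivalence class.

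Second, in the necessity direction you only verify that the swapped model has the same mixing matrix and lies in the same AOG class. That is not enough: to certify that the models are indistinguishable \emph{under Assumption~\ref{assumption:me_faithfulness_short}}, you must show the swapped model itself satisfies both parts of Assumption~\ref{assumption:me_faithfulness_short}, in particular part~(b). Otherwise an algorithm that checks part~(b) could discriminate between them. The paper devotes an entire case analysis (its ``Proof of~(3)'' in the necessity part) to verifying that Assumption~\ref{assumption:me_faithfulness_short}(b) is preserved under each swap; this is not automatic and requires relating the $\texttt{TE}$ vectors of the swapped model back to those of the ground truth.
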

\vspace{-2mm}


Theorem \ref{thm:DOG} states that we can learn the structure among the groups, but the center of the group (or the noise term corresponding to it) will remain unidentified. Unlike the case of the AOG equivalence class, all the edges across the groups will be identified. That is, the choice of center does not change the edges across groups in DOG equivalence class, while it may change the edges in AOG equivalence class. Further, the sizes of the direct ordered groups are smaller than the sizes of the ancestral ordered groups. As an example, for the model shown in Figure \ref{fig:ME_faithfulness}(b), all direct ordered groups are of size one.

\vspace{-1.5mm}
\begin{corollary}
(i) The counterpart of Corollary \ref{cor:size_equivalence_class}(i) is true for the case of DOG as well and we do not repeat it here.
(ii) Under Assumptions \ref{assumption:separability} and \ref{assumption:me_faithfulness_short}, a SEM-ME can be uniquely identified if and only if for every edge from an \emph{\texttt{nu-leaf}} node to a \emph{\texttt{u-leaf}} node, Condition \ref{condition:me-id} is satisfied.
(iii) Under Assumptions \ref{assumption:separability} and \ref{assumption:ur_faithfulness_short}, a SEM-UR can be identified up to the permutation and scaling of the columns of $\bB$ if and only if for every edge from a latent variable to an observed variable, Condition \ref{condition:ur_id} is satisfied.

\end{corollary}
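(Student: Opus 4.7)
The plan is to establish part (i) first, and then derive parts (ii) and (iii) as immediate consequences by identifying exactly when the DOG equivalence class collapses to a singleton. Throughout, I will rely on Theorem \ref{thm:DOG} (which reduces identification to identifying the DOG equivalence class) and on the DOG counterpart of Proposition \ref{prop:center} (which the paper states holds analogously and which parametrizes the members of a DOG equivalence class by the choice of center within each direct ordered group).

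For part (i), the counterpart of Proposition \ref{prop:center} tells us that every model in the DOG equivalence class is uniquely determined, for SEM-ME, by the selection of one center per direct ordered group, and for SEM-UR, by the selection of the exogenous noise term attached to one center per direct ordered group (up to permutation and scaling of the columns of $\bB$). Since the choices in distinct groups are independent and produce genuinely different models sharing the same mixing matrix and DOG, the cardinality of the equivalence class is $\prod_{g \in [g_k]} |\setG^{(g)}|$, exactly mirroring the argument behind Corollary \ref{cor:size_equivalence_class}(i).

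For parts (ii) and (iii), unique identification (respectively, identification up to permutation and scaling of the columns of $\bB$) is, by part (i), equivalent to every direct ordered group having size exactly one. I then unfold Definition \ref{def:ordered_group_decomposition}: in the SEM-ME case, a direct ordered group has size greater than one precisely when some \texttt{u-leaf} node $Z_j$ is placed in the group of an \texttt{nu-leaf} parent $V_i$, and by construction this placement happens if and only if the edge $V_i \to Z_j$ violates Condition \ref{condition:me-id}. Hence all DOG groups are singletons if and only if every edge from an \texttt{nu-leaf} to a \texttt{u-leaf} satisfies Condition \ref{condition:me-id}, which is exactly the statement of part (ii). Part (iii) follows by the same reasoning applied to the SEM-UR definition and Condition \ref{condition:ur_id}: a group of size greater than one arises exactly when some latent $H_j$ is attached to an observed child $X_i$ with a violating edge, so ruling out such violations is equivalent to identification up to permutation and scaling of $\bB$.

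I expect the main obstacle to be the careful use of the unstated DOG counterpart of Proposition \ref{prop:center}; specifically, verifying that the choice of center (or of exogenous noise term attached to the center, in the SEM-UR case) is indeed a complete and independent parametrization of the DOG equivalence class, with no hidden collisions that would shrink the product $\prod_{g\in[g_k]} |\setG^{(g)}|$. Once this parametrization is in hand, together with Proposition \ref{prop:dog_property} (which guarantees that edges across groups are fixed within the class), parts (ii) and (iii) reduce to the combinatorial observation that the product equals $1$ iff every factor equals $1$.
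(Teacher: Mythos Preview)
Your proposal is correct and mirrors exactly the reasoning the paper relies on: the corollary is not given a standalone proof in the paper but is treated as immediate from Theorem~\ref{thm:DOG}, the DOG counterpart of Proposition~\ref{prop:center} (proved in Lemma~\ref{lemma:coefficient}), and the definition of the DOG in Definition~\ref{def:ordered_group_decomposition}. Your derivation---counting the equivalence class as $\prod_{g}|\setG^{(g)}|$ via the center parametrization, then observing that this product equals $1$ iff every group is a singleton iff no \texttt{u-leaf} (resp.\ latent) node is merged with a parent (resp.\ child) by a Condition-violating edge---is precisely the intended argument.
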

\vspace{-2mm}

{\bf Identifiability of the structure of the SEM-UR.} As shown in Theorem \ref{thm:DOG}, for a SEM-UR, the only undetermined part in the DOG equivalence class pertains to the assignment of the exogenous noises and coefficients, but the structure is the same. Consequently, if only the identification of the structure without weights is of interest, Assumptions \ref{assumption:separability} and \ref{assumption:ur_faithfulness_short} are sufficient. See Figure \ref{fig:DOG_equivalence_class}(a) for an example of two DOG equivalent SEM-UR{\new {s}}. This identifiability result is summarized in the following corollary.


\vspace{-2mm}
\begin{corollary} \label{cor:structural_id}
Under Assumptions \ref{assumption:separability} and \ref{assumption:ur_faithfulness_short}, the structure of a SEM-UR can be uniquely recovered. 
\end{corollary}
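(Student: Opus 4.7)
The plan is to derive Corollary~\ref{cor:structural_id} as a direct consequence of Theorem~\ref{thm:DOG}, by showing that the \emph{graph structure} of a SEM-UR is an invariant of its DOG equivalence class. Theorem~\ref{thm:DOG} already gives identification up to this equivalence class under Assumptions~\ref{assumption:separability} and \ref{assumption:ur_faithfulness_short}; the residual ambiguity (the assignment of exogenous noises to group centers, together with permutation/scaling of the columns of $\bB$) is parametric in nature, so it suffices to argue that it leaves the DAG unchanged.

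First, I would invoke Proposition~\ref{prop:dog_property} to cover \emph{all} edges between distinct DOG groups: every model in the class shares exactly the same cross-group adjacencies. This reduces the problem to analyzing within-group structure.

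Next, I would classify the DOG groups for a SEM-UR using Definition~\ref{def:ordered_group_decomposition}. Since every latent is assigned either to the group of exactly one observed child (when the edge to it violates Condition~\ref{condition:ur_id}) or to a group of its own, every DOG group falls into one of three types: (i) a singleton latent; (ii) a singleton observed variable; or (iii) an observed variable together with some of its latent parents. Types (i) and (ii) contribute no within-group edges. For type (iii), the SEM-UR assumption that latents are roots forces the induced subgraph on the group to be a star: each attached latent points to the unique observed center, and no other within-group edge is possible. The group membership and the identity of the observed center are determined by the DOG decomposition itself, and hence are fixed across the equivalence class; therefore the within-group structure is fixed as well.

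The main obstacle is to check that the parametric ambiguity left by Theorem~\ref{thm:DOG} — permuting and rescaling the columns of $\bB$, and reassigning which exogenous noise sits at a group center — does not secretly alter the graph. This reduces to the observation that such operations merely relabel the latents (which are exogenous, interchangeable root nodes) and rescale their coefficients, preserving the zero/nonzero pattern of $\bB$ that encodes latent-to-observed adjacency; the observed nodes and their edges are untouched because in any type-(iii) group the observed center is uniquely identifiable as the sole observed member. Combining the invariance of cross-group edges (Proposition~\ref{prop:dog_property}) with the star structure inside each group then yields uniqueness of the whole DAG, establishing the corollary.
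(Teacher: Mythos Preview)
Your proposal is correct and follows essentially the same route as the paper. The paper's justification is the brief remark preceding the corollary—that by Theorem~\ref{thm:DOG} the only ambiguity within the DOG equivalence class of a SEM-UR is the assignment of exogenous noises and coefficients, while the structure is the same—and your proposal simply spells out this remark in detail, using Proposition~\ref{prop:dog_property} for cross-group edges and the star (latents $\to$ unique observed center) argument for within-group edges.
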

\vspace{-2mm}



\begin{wrapfigure}{r}{0.43\textwidth}
\vspace{-8mm}
\centering
\begin{tikzpicture}[very thick, scale=.26]
\centering
\foreach \place/\name in {{(-8,-5)/X_1}, {(-2.5,-5)/X_2}}
    \node[observed] (\name) at \place {};
\foreach \place/\name in {{(-8,-5)/X_1}, {(-2.5,-5)/X_2}, (-5.25,0)/H}
    \node at \place {$\name$};
\foreach \place/\name in  {(-5.25,0)/H}
    \node[latent] (\name) at \place {}; 
\foreach \place/\name in  {{(-9.5,-3.9)/N_{X_1}}, {(-1,-3.9)/N_{X_2}}, (-7.5,0)/N_H}
    \node[font=\scriptsize, red] at \place {$\name$}; 
    
\path[causal] (H) edge node[right, pos=0.4] {$b$} (X_2);
\path[causal] (H) edge node[left, pos=0.4] {$1$} (X_1);
\path[causal] (X_1) edge node[below] {$a$} (X_2);
\node at (0,-7.5) {(a) Equivalence class of a SEM-UR};

\foreach \place/\name in {{(2.5,-5)/X_1}, {(8,-5)/X_2}}
    \node[observed] (\name) at \place {};
\foreach \place/\name in  {(5.25,0)/H}
    \node[latent] (\name) at \place {};  
\foreach \place/\name in  {{(2.5,-5)/X_1}, {(8,-5)/X_2}, (5.25,0)/H}
    \node at \place {$\name$};

\foreach \place/\name in  {{(1,-3.9)/N_{H}}, {(9.5,-3.9)/N_{X_2}}, (7.7,0)/N_{X_1}}
    \node[font=\scriptsize, red] at \place {$\name$}; 
\path[causal] (H) edge node[right, pos=0.4] {$-b$} (X_2);
\path[causal] (H) edge node[left, pos=0.4] {$1$} (X_1);
\path[causal] (X_1) edge node[below] {$a+b$} (X_2);
\end{tikzpicture} 

\begin{tikzpicture}[very thick, scale=.26]
\centering
\foreach \place/\name in {{(-8,-5)/Z_1}, {(-2.5,-5)/Z_2}}
    \node[observed] (\name) at \place {};
\foreach \place/\name in  {(-5.25,0)/Z_3}
    \node[deterministic] (\name) at \place {};  
\foreach \place/\name in  {{(-9.5,-3.9)/N_{Z_1}}, {(-1,-3.9)/N_{Z_2}}}
    \node[font=\scriptsize, red] at \place {$\name$}; 
\foreach \place/\name in  {{(-2.5,-5)/Z_2}, {(-8,-5)/Z_1}}
    \node at \place {$\name$};
\node at (-5.25,0) {$Z_3$};

\path[causal] (Z_1) edge node[left, pos=0.6] {$1$} (Z_3);
\path[causal] (Z_2) edge node[right, pos=0.6] {$b$} (Z_3);
\path[causal] (Z_2) edge node[below] {$a$} (Z_1);
\node at (0,-7.5) {(b) Equivalence class of a SEM-ME};

\foreach \place/\name in {{(2.5,-5)/Z_3}, {(8,-5)/Z_2}}
    \node[observed] (\name) at \place {};
\foreach \place/\name in  {(5.25,0)/Z_1}
    \node[deterministic] (\name) at \place {};  
\foreach \place/\name in  {{(1,-3.9)/N_{Z_1}}, {(9.5,-3.9)/N_{Z_2}}}
    \node[font=\scriptsize, red] at \place {$\name$}; 
\foreach \place/\name in  {(2.5,-5)/Z_3, {(8,-5)/Z_2}, (5.25,0)/Z_1}
    \node at \place {$\name$};

\path[causal] (Z_3) edge node[left, pos=0.6] {$1$} (Z_1);
\path[causal] (Z_2) edge node[right, pos=0.6] {$-b$} (Z_1);
\path[causal] (Z_2) edge node[below] {$a+b$} (Z_3);
\end{tikzpicture} 
\vspace{-3mm}
\caption{
DOG Equivalence classes.
}
\vspace{-3mm}
\label{fig:DOG_equivalence_class} 
\vspace{-3mm}
\end{wrapfigure}
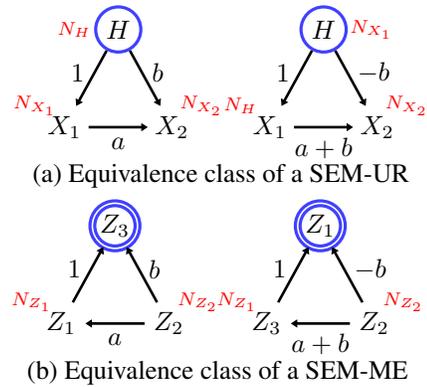
\vspace{-3mm}
\subsection{Recovery algorithm} \label{subsection:algorithm}
\vspace{-2mm}
We note that our faithfulness assumption implies that the ground-truth model has strictly fewer edges than any model that has the same mixing matrix and satisfies conventional faithfulness (i.e., in the AOG equivalence class) but does not belong to the DOG equivalence class. 
This property can be leveraged to recover the ground-truth model or a member of its DOG equivalence class. 

\vspace{-2mm}
\begin{proposition}\label{proposition:fewest_edges}
Suppose a SEM-ME (resp. SEM-UR) satisfies Assumptions \ref{assumption:separability} and \ref{assumption:me_faithfulness_short} (resp. \ref{assumption:ur_faithfulness_short}). Any model that 
belongs to the same AOG equivalence class but does not belong to the same DOG equivalence class has strictly more edges than any member in the DOG equivalence class.
\end{proposition}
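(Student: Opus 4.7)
The plan is to first reduce the problem to SEM-ME using the mapping $\varphi/\phi$ from Theorem \ref{thm:main_equivalence}: the mapping reverses all edges and so preserves edge counts, and it puts AOG (respectively DOG) equivalence classes of SEM-URs in bijection with those of SEM-MEs, because both the AOG partition and Conditions \ref{condition:me-id}/\ref{condition:ur_id} are symmetric under edge reversal. Hence it suffices to prove the claim for SEM-ME.

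Fix a SEM-ME $M$ in the DOG equivalence class and $M'$ in the AOG equivalence class but outside the DOG class; they share the same mixing matrix (up to column permutation and scaling) and the same AOG partition. By Proposition \ref{prop:center}(i), each such model is determined by the choice of center for every AOG group, so $M$ and $M'$ must disagree on the center of at least one AOG group. I would then induct on the number of AOG groups with disagreeing centers, reducing the argument to a local claim: a single center swap which moves a model out of its DOG class strictly increases the total edge count.

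For the local analysis, take a single AOG group $G = \{V_c, Z_{j_1}, \ldots, Z_{j_k}\}$, with $V_c$ the DOG-compliant center in $M$ (a \texttt{nu-leaf} node) and $Z_{j_i}$ the center chosen in $M'$ (a former \texttt{u-leaf}). Since the swap places $M'$ outside the DOG class, Condition \ref{condition:me-id} must hold for the edge $V_c \to Z_{j_i}$ in $M$: either (a) some parent $V_j$ of $Z_{j_i}$ lies outside $Pa_M(V_c)$, or (b) some child $V_k \in Ch_M(V_c) \setminus \{Z_{j_i}\}$ has $V_j \in Pa_M(Z_{j_i}) \setminus Pa_M(V_k)$. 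Using the canonical Equations \eqref{eq:ME_matrix_form}--\eqref{eq:SEM-ME_mixing_matrix}, I would expand the reduced-form expressions of $V_c$ and $Z_{j_i}$ in terms of the independent noises, and from them derive the parent sets forced on $V_c$ (now a \texttt{u-leaf}) and $Z_{j_i}$ (now a \texttt{nu-leaf}) in $M'$. The edge $V_c \to Z_{j_i}$ reverses to $Z_{j_i} \to V_c$ in $M'$, but because Condition \ref{condition:me-id} precludes $V_c$'s reduced form from being a scalar multiple of $Z_{j_i}$'s, $V_c$ as a new \texttt{u-leaf} must acquire at least one additional direct parent edge beyond just $Z_{j_i}$---these are the ``cancellation edges'' that manifest the path cancellations needed for $M'$ to reproduce $M$'s mixing matrix. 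Assumption \ref{assumption:me_faithfulness_short}(b) applied to $M$ then guarantees that no coincidental cancellations in $M'$ can absorb these new edges, so the net edge count strictly increases.

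The main obstacle will be executing the local analysis cleanly: under each sub-case of Condition \ref{condition:me-id}, one must exhibit concrete new parent edges in $M'$ and show they strictly outnumber any edges of $M$ that vanish under the swap (for instance, edges from $V_c$ to other \texttt{u-leaf} nodes in $G$ that may get rerouted through $Z_{j_i}$). This will require careful linear-algebraic bookkeeping of the reduced-form coefficients and a precise use of Assumption \ref{assumption:me_faithfulness_short}(b) to rule out further cancellations in $M'$ that might offset the additions.
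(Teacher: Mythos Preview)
Your overall strategy---reduce to SEM-ME via the mapping, index AOG-equivalent models by center choices (Proposition~\ref{prop:center}), swap centers one at a time, and use Condition~\ref{condition:me-id} together with Assumption~\ref{assumption:me_faithfulness_short}(b)---matches the paper's. The gap is in the induction. Your local claim needs the starting model of each swap to satisfy Assumption~\ref{assumption:me_faithfulness_short}(b) in order to rule out cancellations, but once a swap takes you outside the DOG class the resulting intermediate model \emph{violates} that assumption (this is exactly what the paper establishes in its sufficiency argument for Theorem~\ref{thm:DOG}). Hence the hypothesis is unavailable at the next step, and moreover an edge added at step $k$ can in principle be cancelled by a later swap $k'>k$, so per-swap edge counts do not compose. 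The paper therefore does not induct: it analyses the whole sequence $M_0,\ldots,M_m$ of swaps (in causal order) and proves two global facts---(i) no edge of $M_0$ is removed in $M_m$, and (ii) at least one edge added along the way survives to $M_m$---where every appeal to faithfulness is routed back to the ground truth $M_0$, which is known to satisfy it, rather than to any intermediate model.

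A smaller correction to your local analysis: after the swap the extra edges do not land on the new \texttt{u-leaf} $V_c$. Writing out the structural equations shows that $V_c$ as a new \texttt{u-leaf} inherits exactly $|Pa_M(Z_{j_i})|$ parents (namely $\{Z_{j_i}\}\cup Pa_M(Z_{j_i})\setminus\{V_c\}$); the additional edges appear instead at the new center $Z_{j_i}$ (when Condition~\ref{condition:me-id}(a) holds, giving it parents in $Pa_M(Z_{j_i})\setminus(\{V_c\}\cup Pa_M(V_c))$) or at the other children $V_k$ of the old center (when Condition~\ref{condition:me-id}(b) holds, giving them parents in $Pa_M(Z_{j_i})\setminus Pa_M(V_k)$).
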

\vspace{-2mm}


The steps of our recovery algorithm are as follows: (1) Recover the mixing matrix $\WME$ (resp. $\WUR$) from observational data.
(2) Return the AOG of the true model by comparing the support across different rows/columns in $\WME$ (resp. $\WUR$). See Appendix \ref{app:aog_algorithm} for details and pseudo-code for AOG recovery. (3) For all possible choices of the center (resp. noise term associated to the center) of each ancestral ordered group, find a choice that leads to the graph with fewest number of edges in the recovery output (see the proof of Lemma \ref{lemma:coefficient}).

\vspace{-4mm}
\section{Simulations}
\vspace{-3mm}
We evaluated the performance of our recovery algorithm on randomly generated linear SEM-MEs and SEM-URs with different number of variables.\footnote{Our code is available at: \url{https://github.com/Yuqin-Yang/SEM-ME-UR}.}
We considered two cases: (1) when a noisy version of the mixing matrix is given, where the noise is Gaussian with different choices of variance denoted by $\texttt{d}^2$, and (2) when synthetic data comes from a linear generating model with non-Gaussian noises with different sample sizes, and the mixing matrix needs to be estimated. We used \new{the} Reconstruction ICA algorithm \citep{le2011ica} as our overcomplete ICA method.
For Case (1), we compared our recovery method (which we refer to as DOG) with an approach based on AOG equivalence class (similar to the learning method in \citep{salehkaleybar2020learning}). In addition to the AOG method,
for Case (2), we compared our method with ICA-LiNGAM \citep{shimizu2006linear}. Note that the ICA-LiNGAM method is not designed for systems with latent variables, but benefits from strong performance for recovering the mixing matrix in causally sufficient settings. The goal of this comparison was to demonstrate the necessity of using methods designed specifically to handle latent variables.

We compared the recovery of the adjacency matrix to the ground-truth,
where we used normalized structural Hamming distance (SHD/Edge) and F1 score as our performance metrics.
Our results for Cases (1) and (2) are shown in top and bottom two rows in Figure \ref{fig:simulation}, respectively. As seen in these figures, our recovery algorithm outperforms recovery algorithm based on AOG. In particular, our method recovers the structure in SEM-UR model, and can recover part of the structure in SEM-ME that is shared among DOG equivalence classes. Moreover, both methods outperform ICA-LiNGAM. Please refer to Appendix \ref{app:experiments} for additional results and analysis.
\begin{figure}[t]
\centering
\begin{minipage}{0.48\textwidth}
\includegraphics[width=\linewidth]{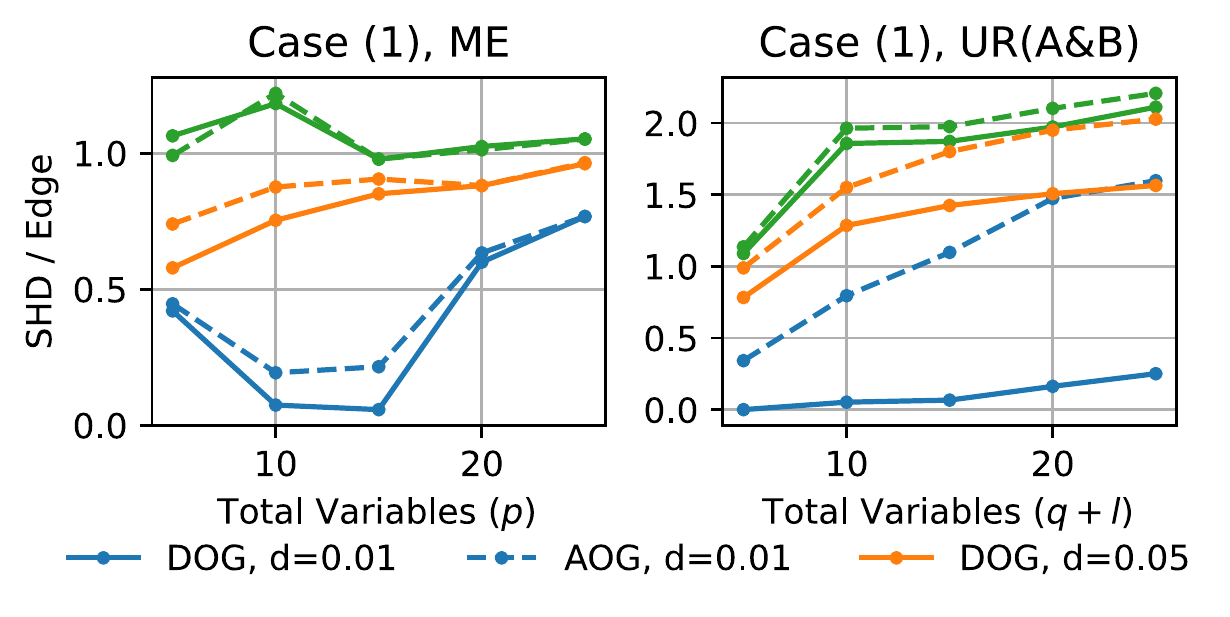}
\end{minipage}
\begin{minipage}{0.48\textwidth}
\centering
\includegraphics[width=\linewidth]{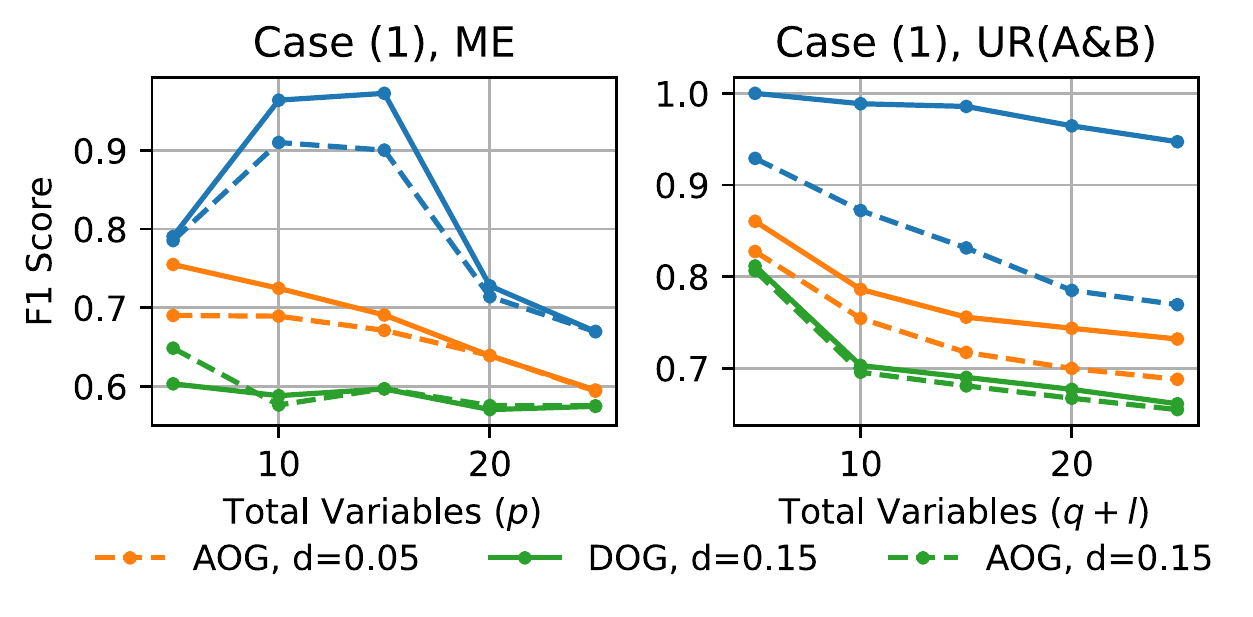}
\end{minipage}

\centering
\vspace{1mm}
\includegraphics[width=0.95\linewidth]{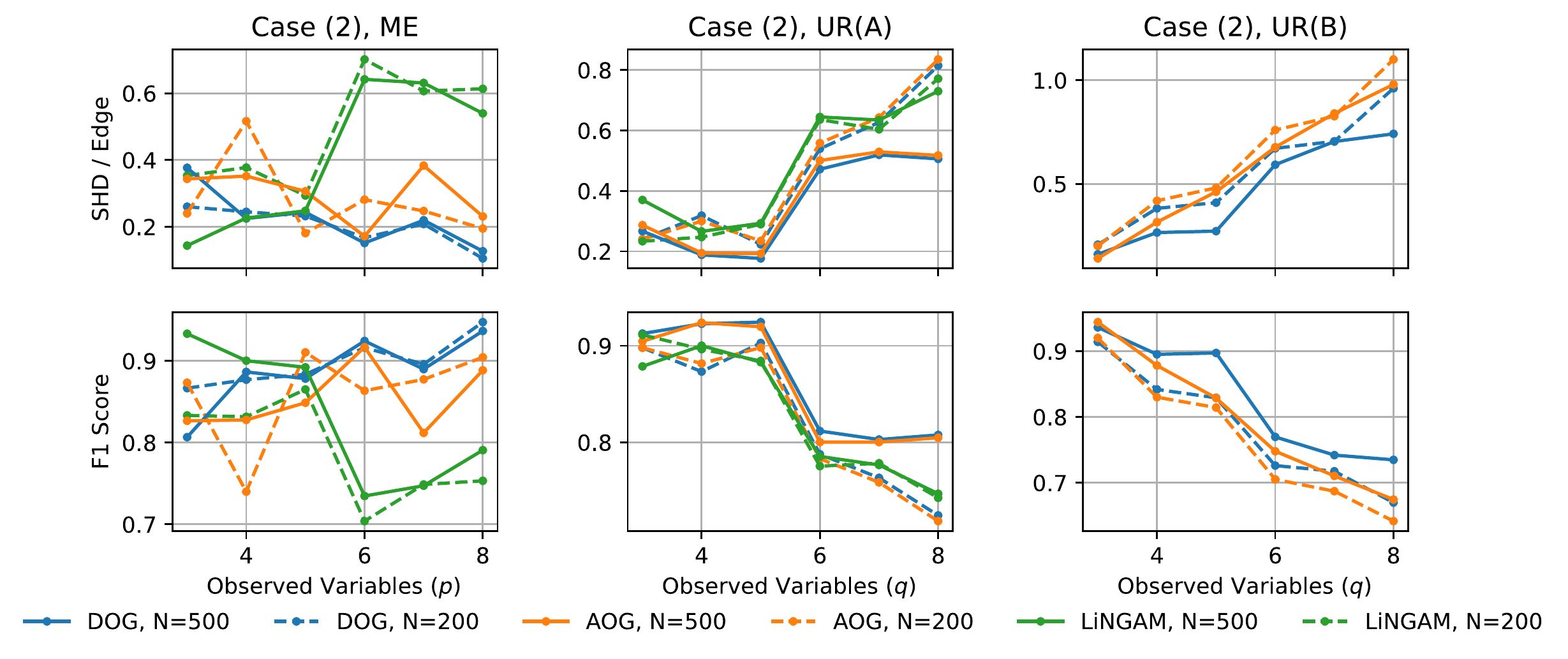}
\vspace{-2mm}
\caption{Simulation results for Case (1), noisy mixing matrix (with noise variance \texttt{d}$^2$), and Case (2), raw observational data (with sample size \texttt{N}$\cdot p$ for ME and \texttt{N}$\cdot q$ for UR).
In Case (1), x-axis corresponds to $|\setY\cup\setZ|$ in SEM-ME, and $|\setH\cup\setX|$ in SEM-UR. In Case (2), x-axis corresponds to $|\setY\cup\setU|$ in SEM-ME, and $|\setX|$ in SEM-UR.  
We compare the recovery of both adjacency matrices $\bA$ and $\bB$ (cf. Equation \eqref{eq:SEM-UR_matrix_form}) for SEM-UR.
Lower SHD/Edge value and higher F1 score indicate better performances. 
}
\label{fig:simulation} 
\end{figure}

\section{Conclusion and Discussion} \label{section:conclusion}
\vspace{-3mm}
In real-world applications, we do not observe the exact value of all relevant variables; the measurements of some variables are prone to errors, or some other variables cannot be observed altogether. For example, in neuroscience and genomics, measured brain signals obtained by functional magnetic resonance imaging (fMRI) or the measured gene expression using RNA sequencing usually contain errors through the measurement process \citep{saeed2020anchored,zhang2018causal}. Other examples include responses to psychometric questionnaires where the questions represent noisy views of various traits \citep{adams2021identification,scheines2016measurement}, and returns in stock market, where they may be confounded by several unmeasured economic and political factors \citep{adams2021identification}. 

In this work, we considered the problem of causal discovery in setups with such challenges, particularly  
under two settings: (1) in the presence of measurement error, and (2) with unobserved parentless causes. 
Unlike previous work, our proposed SEM-ME allows for applications containing a mixture of variables measured exactly (without error) or measured with error. 
We demonstrated a mapping between these two models that preserves their mixing matrix. Based on this mapping, we derived identifiability results for both models under different faithfulness assumptions, and proposed structure learning algorithms. Our results further implied conditions for unique identifiability of the structure in SEM-UR. These conditions do not pose any restrictions on the graphical model, and hence significantly relax the existing graphical conditions in the literature \citep{salehkaleybar2020learning,adams2021identification,wang2020causal,chen2021fritl}. 

Our results have several implications in the literature of causal discovery including the following: 
The mapping proposed in our work between linear SEM-ME and SEM-UR allows us to translate identifiability results for one model to the other. We note that linear SEM-UR has been widely studied in the literature, while only a few works considered linear SEM-ME. Therefore, an important implication of our work is that the introduced mapping can be utilized to fill the gaps for the less studied model. Another important implication of our result is that it can be used for evaluating new algorithms: We showed that under two different faithfulness assumptions, the model can only be identified up to AOG and DOG equivalence classes. This result provides the extent of identifiability, and can serve as a basis for characterizing theoretical guarantees such as consistency of new algorithms. Finally, we showed in Proposition \ref{proposition:fewest_edges} that under our faithfulness assumptions, the true generating model is always sparser than any other model in the same AOG equivalence class (of the ground-truth) but does not belong to the DOG equivalence class. This serves as a ground for positing sparsity assumptions that appear frequently in the literature without rigorous justification.

Similar to other causal discovery methods which are based on the mixing matrix, the performance of our proposed recovery algorithms relies on the accuracy of the utilized mixing matrix estimation approach. Hence, the recovered structure should be interpreted with caution if the mixing matrix estimation approach is unreliable. Devising more accurate approaches for estimating the mixing matrix, as well as extending the method proposed in this work to non-linear models are important directions of future research.



\vspace{-4.5mm}
\section*{Acknowledgments and Disclosure of Funding} 
\vspace{-3.5mm}
Negar Kiyavash's research was in part supported by the Swiss National Science Foundation under NCCR Automation, grant agreement 51NF40\_180545 and Swiss SNF project 200021\_204355 /1. Kun Zhang was partially supported by NIH under Contract R01HL159805, by the NSF-Convergence Accelerator Track-D award \#2134901, by a grant from Apple Inc., and by a grant from KDDI Research Inc. Ilya Shpitser was supported by grants ONR N00014-21-1-2820, NSF 2040804, NSF CAREER 1942239 and NIH R01 AI127271-01A1.

\small
\bibliographystyle{apalike}   
\bibliography{Mylib.bib}

\newpage

\appendix
\section{Further discussion on ancestral ordered grouping} \label{app:AOG}

First, we compare our ancestral ordered grouping (AOG) in Definition \ref{def:ancestral_ordered_group_decomposition} with the ordered grouping proposed in \citep{zhang2018causal}. Next, we compare our AOG to the equivalence classes for identifiability of a linear SEM with latent variables under conventional faithfulness, proposed in \citep{salehkaleybar2020learning}. Finally, we present the details of our AOG recovery algorithm. 

\subsection{Comparison of AOG to previous work}
In the following, we present an equivalent definition of the ordered grouping proposed in \citep{zhang2018causal}, which is closer in spirit to our AOG definition. As we mentioned earlier, the model proposed in \citep{zhang2018causal} assumes that all the underlying variables are measured with error (unobserved), i.e., $\mathcal{Y}$ in Equation \eqref{eq:semme_def} is an empty set. Subsequently, a \texttt{u-leaf} node is an unobserved underlying variable with no children in $\setZ$ and an \texttt{nu-leaf} node is an unobserved underlying variable with at least one child in $\setZ$. 

\begin{definition}[Ordered group decomposition (OGD) \citep{zhang2018causal}]
\label{def:original_ogd}
The ordered group decomposition of a linear SEM-ME is a partition of 
$\mathcal{Z}$ into distinct sets. This partition is described as follows:
\begin{enumerate}[(1), leftmargin=*]
\itemsep0em 
\item Select a causal order $k$ among \emph{\texttt{nu-leaf}} nodes in $\mathcal{Z}$ that is consistent with the generating model. Assign each \emph{\texttt{nu-leaf}} node to a distinct ordered group. 
\item For each \emph{\texttt{u-leaf}} node $Z_j\in \mathcal{Z}$, assign $Z_j$ to the same group as its parent with the largest index in $k$. 
\end{enumerate}
\end{definition}

The ordered grouping by \citep{zhang2018causal} depends on the selected causal order and hence might not be unique, since there can be more than one causal order consistent with the generating model. Subsequently, this ordered grouping does not permit characterizing the extent of identifiability under conventional faithfulness alone, i.e., the equivalence classes of the generating model in Theorem \ref{thm:aog}. In contrast, our AOG is a more refined grouping than Definition \ref{def:original_ogd} and returns a unique partition that does not depend on the selection of the causal order. Further, our AOG characterizes the extent of identifiability under conventional faithfulness (cf. Theorem \ref{thm:aog}). The following example illustrates the difference between the ordered grouping in \citep{zhang2018causal} and our AOG. 

\begin{example}
Consider a collider structure over three underlying variables in a linear SEM-ME: $Z_1 \rightarrow Z_3 \leftarrow Z_2$, where $Z_3$ is a \iuleaf node. There are two possible causal orders that correspond to this structure, namely (1,2,3) and (1,3,2). Hence, based on the ordered grouping definition of \citep{zhang2018causal}, there are two possible corresponding partitions, namely, $\{\{Z_1\}, \{Z_2,Z_3\}\}$ and $\{\{Z_2\}, \{Z_1,Z_3\}\}$. However, following our definition of AOG, i.e., Definition \ref{def:ancestral_ordered_group_decomposition}, the AOG is $\{\{Z_1\}, \{Z_2\}, \{Z_3\}\}$, which is a refined partition of either of the two partitions by \citep{zhang2018causal}. 
Further, the AOG of the model can be recovered based on (the support of) the mixing matrix $\bW^{ME}$: $Z_3$ includes two noise terms, while $Z_1$ and $Z_2$ each includes one.

\end{example}

Next, we show the connection between our AOG and the observed descendant sets in \citep{salehkaleybar2020learning} proposed for a linear SEM with latent variables. The ``observed descendant set'' of a (latent or observed) variable $V_i$ is defined as the set of all observed descendants of $V_i$, including $V_i$ itself if it is observed. The following proposition states the equivalence of the two notions in a linear SEM-UR. 
\begin{proposition}
In a linear SEM-UR, two variables are in the same ancestral ordered group if and only if they have the same observed descendant set.
\end{proposition}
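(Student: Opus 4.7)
The plan is to prove the equivalence by case analysis on whether each of $V, V'$ is observed or latent; write $D(V)$ for the observed descendant set of $V$. Throughout the argument I rely only on acyclicity of the weighted causal diagram of the SEM-UR and on the two steps of Definition \ref{def:ancestral_ordered_group_decomposition}.

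For the forward direction, suppose $V$ and $V'$ are in the same AOG group. The interesting case is $V$ observed and $V'$ latent, placed in $V$'s group by step (2). I would show $D(V) \subseteq D(V')$ from $V \in Ch(V')$, and the reverse inclusion $D(V') \subseteq D(V)$ by using ``every other child of $V'$ is a descendant of $V$'': every directed path leaving $V'$ must first pass through one of its children, each of which is either $V$ or a descendant of $V$, so its observed descendants all lie in $D(V)$. The two-observed case is immediate because step (1) places each observed node in its own group, so $V = V'$. The two-latent case is handled by the interpretation discussed below.

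For the backward direction, suppose $D(V) = D(V')$. The main case is $V$ observed and $V'$ latent. Since $V \in D(V')$, pick any directed path $V' \rightsquigarrow V$ and let $V' \to X_k$ be its first edge; then $X_k \in D(V') = D(V)$ while $X_k$ is also an ancestor of $V$, so acyclicity forces $X_k = V$. Hence $V \in Ch(V')$. Applying the same reasoning to any other child $X_j$ of $V'$ shows $X_j \in D(V)$ and $X_j \neq V$, so $X_j$ is a strict descendant of $V$; this is exactly the step (2) criterion, so $V'$ is assigned to $V$'s group. The two-observed case follows at once from $V \in D(V')$ and $V' \in D(V)$ together with acyclicity.

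The principal obstacle is the two-latent subcase of both directions. Read verbatim, Definition \ref{def:ancestral_ordered_group_decomposition} assigns every such latent to ``a separate group (with no observed variable)'', which could be parsed as creating a singleton for each; this would make the proposition fail whenever two latent roots share the same set of observed children (without those children being descendants of one another). I would resolve this by adopting the natural reading — consistent with Corollary \ref{cor:size_equivalence_class}(i) and with the AOG being recoverable from the support of the mixing matrix — that latents with identical observed descendant sets are pooled into a common separate group. Under this reading, the case-analysis above completes the proof.
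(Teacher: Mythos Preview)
The paper states this proposition (in Appendix~A) without proof, treating it as immediate from Definition~\ref{def:ancestral_ordered_group_decomposition} together with the column-support characterization of AOG. Your case analysis is correct and more explicit than anything the paper supplies. The observed--observed and observed--latent cases are argued cleanly; one cosmetic point is that in the backward direction, ``applying the same reasoning to any other child $X_j$'' is really just the observation $X_j\in D(V')=D(V)$ (since $X_j$ is observed and a child of $V'$), not a rerun of the path argument.

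Your flag on the two-latent case is legitimate. Read literally, Definition~\ref{def:ancestral_ordered_group_decomposition} sends each unassignable latent to its own singleton group---and the sentence following the definition (``or the latent node if the group is of size one'') reinforces that reading---which would falsify the proposition when two latents share an observed descendant set but neither can be attached to an observed node. Your resolution, pooling such latents, is the reading forced by the support-based characterization and by Algorithm~\ref{alg:aog}, so it is the correct one to adopt. One small addition: in the forward direction, your two-latent discussion points only to the separate-group subcase. When both $H,H'$ lie in the group of some observed $X$, you should dispatch this by transitivity, $D(H)=D(X)=D(H')$, via two applications of your observed--latent argument.
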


The authors of \citep{salehkaleybar2020learning} showed that using ``observed descendant sets'', the causal order among the observed variables can be identified under conventional faithfulness assumption (i.e., Assumption \ref{assumption:ur_faithfulness}\emph{(a)}). Further, the total causal effects among observed variables can be estimated. However, the model in \citep{salehkaleybar2020learning} allows for latent variables which have parents (i.e., not roots) and in general, the full structure cannot be identified.
In contrast, our AOG equivalence class characterization (Theorem \ref{thm:aog}) extends the result in \citep{salehkaleybar2020learning}. Specifically, we can recover the causal order among the ancestral ordered groups, where each ancestral ordered group has at most one observed variable. Moreover, we can estimate the exact structure and the coefficients of every model in the AOG equivalence class.

\subsection{AOG recovery algorithm} \label{app:aog_algorithm}
Based on Assumption \ref{assumption:separability}, we can recover $\bW^{ME}$ (resp. $\WUR$) from the observed data. The following property can be implied from the definition of AOG, which shows that under conventional faithfulness assumption, we can identify the AOG of the model only based on the support of the mixing matrix.
\begin{proposition}
(a) Under Assumption \ref{assumption:me_faithfulness}\emph{(a)}, two variables in a SEM-ME belong to the same ancestral ordered group if and only if the two rows in $\WME$ corresponding to these variables have the same support. 
(b) Under Assumption \ref{assumption:ur_faithfulness}\emph{(a)}, two variables in a SEM-UR belong to the same ancestral ordered group if and only if the two columns in $\WUR$ corresponding to the exogenous noise terms of these variables have the same support. 
\end{proposition}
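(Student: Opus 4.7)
The plan is to prove both parts by first translating the support of a row (resp.\ column) of $\WME$ (resp.\ $\WUR$) into a clean graphical statement using faithfulness, and then verifying that this graphical statement coincides with co-membership in an ancestral ordered group.

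For part (a), I would first observe that in the canonical form of Equation \eqref{eq:ME_matrix_form}, each underlying variable $V$ is expressed as a linear combination of the exogenous noises $N_{V_k}$ of the \texttt{nu-leaf} nodes $V_k$, with coefficient given by the sum of products of edge weights along directed paths from $V_k$ to $V$. Assumption \ref{assumption:me_faithfulness}\emph{(a)} then forces the $(V,V_k)$-entry of $\WME$ to be nonzero exactly when $V_k$ is an ancestor of $V$ (allowing $V_k=V$ when $V$ is \texttt{nu-leaf}), since the total causal effect cannot vanish. Hence the support of row $V$ equals the closed \texttt{nu-leaf} ancestor set $A(V):=\{V_k \text{ \texttt{nu-leaf}}:V_k \text{ is an ancestor of }V\}\cup\{V:V \text{ \texttt{nu-leaf}}\}$. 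The remaining task is to show that $V$ and $V'$ lie in the same AOG group iff $A(V)=A(V')$.

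This latter equivalence I would establish by a case analysis on whether each of $V,V'$ is \texttt{nu-leaf} or \texttt{u-leaf}. Two distinct \texttt{nu-leaf} nodes are placed in separate groups by Definition \ref{def:ancestral_ordered_group_decomposition}, and they also have $A$-sets distinguished by self-membership together with acyclicity, so both sides of the iff fail simultaneously. For an \texttt{nu-leaf} $V_i$ and a \texttt{u-leaf} $Z_j$, I would use that $Z_j$ has no children so $A(Z_j)=\bigcup_{V_k\in Pa(Z_j)} A(V_k)$: the AOG condition ``$V_i\in Pa(Z_j)$ and every other parent of $Z_j$ is an ancestor of $V_i$'' collapses this union to $A(V_i)$; conversely, if $A(Z_j)=A(V_i)$, then $V_i\in A(Z_j)$ makes $V_i$ an ancestor of $Z_j$, and any non-parent realization of this would force a parent of $Z_j$ to be both an ancestor and a descendant of $V_i$, contradicting acyclicity, hence $V_i\in Pa(Z_j)$, and the remaining parents lie in $A(V_i)$ by assumption. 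For two \texttt{u-leaf} nodes the argument reduces to the same union identity together with the interpretation that Definition \ref{def:ancestral_ordered_group_decomposition} places \texttt{u-leaf} nodes sharing a common closed \texttt{nu-leaf} ancestor set into a common group (either via a dominant \texttt{nu-leaf} parent whose $A$-set equals theirs, or via the residual ``separate group'').

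Part (b) follows by an entirely parallel argument, which I would either derive afresh or deduce from part (a) through the mapping of Theorem \ref{thm:main_equivalence}: edges are reversed, so rows and columns swap roles, and the support of the column of $\WUR$ indexed by the exogenous noise of $V\in\setH\cup\setX$ becomes, under Assumption \ref{assumption:ur_faithfulness}\emph{(a)}, exactly the set of observed descendants of $V$ (including $V$ itself when observed). The SEM-UR AOG then groups variables by this observed-descendant set by the analogue of the case analysis above, replacing ``parent'' with ``child'' and ``ancestor'' with ``descendant'' throughout.

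The main obstacle I anticipate is the bookkeeping in the case of two \texttt{u-leaf} nodes (resp.\ two latent variables) whose closed ancestor sets (resp.\ observed-descendant sets) coincide. The AOG definition as written routes each such node either to the group of a dominant \texttt{nu-leaf} parent or to ``a separate group'', and the proposition implicitly demands that two nodes with identical row/column supports be placed in the \emph{same} group. Pinning this down, by showing that whenever the common $A$-set is realized by some \texttt{nu-leaf} $V^*$ both nodes satisfy the AOG absorption criterion for $V^*$, and otherwise interpreting the ``separate group'' consistently across all \texttt{u-leaf} nodes sharing that $A$-set, is the delicate step that binds the combinatorial partition to the support equivalence on the mixing matrix.
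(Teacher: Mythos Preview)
The paper does not prove this proposition; it simply asserts that it ``can be implied from the definition of AOG'' and moves on to Algorithm~\ref{alg:aog}. Your plan---translate the row support of $V$ into the closed \texttt{nu-leaf} ancestor set $A(V)$ via Assumption~\ref{assumption:me_faithfulness}\emph{(a)}, then match $A(V)=A(V')$ against Definition~\ref{def:ancestral_ordered_group_decomposition} by a case split on \texttt{nu-leaf}/\texttt{u-leaf}---is exactly the natural way to supply the missing argument, and your treatment of the \texttt{nu-leaf}/\texttt{nu-leaf} and \texttt{nu-leaf}/\texttt{u-leaf} cases is correct (in particular, the acyclicity argument forcing $V_i\in Pa(Z_j)$ when $A(Z_j)=A(V_i)$ is clean).

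You are also right that the two--\texttt{u-leaf} case is the crux, but it is worse than a bookkeeping nuisance: under the paper's own conventions it is a counterexample to the ``if'' direction. The remark after Definition~\ref{def:ancestral_ordered_group_decomposition} states that any multi-node group contains exactly one \texttt{nu-leaf} node, and Algorithm~\ref{alg:aog} explicitly assigns each residual \texttt{u-leaf} to its own singleton. Thus if $Z_1,Z_2$ are unrelated \texttt{nu-leaf} roots and $Z_j,Z_k$ are distinct \texttt{u-leaf} nodes each with parent set $\{Z_1,Z_2\}$, then $Z_j$ and $Z_k$ have identical row supports in $\WME$ yet sit in distinct singleton AOG groups. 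Your proposed resolution---reading ``a separate group'' as shared among all \texttt{u-leaf} nodes with that $A$-set---is therefore inconsistent with the rest of the paper. The honest statement (and the one your argument actually establishes) is that the equivalence holds whenever at least one of the two variables is \texttt{nu-leaf} (resp.\ observed, for SEM-UR); this is all that Proposition~\ref{prop:center}, Theorem~\ref{thm:aog}, and Algorithm~\ref{alg:aog} require, and in that form your proof is complete.
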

Equipped with this proposition, Algorithm \ref{alg:aog} shows how to recover the AOG in Definition \ref{def:ancestral_ordered_group_decomposition} from $\bW^{ME}$. 
The Algorithm first randomly chooses a row in $\bW^{ME}$ with one non-zero entry and finds all other rows with the same support, and puts all the corresponding variables in a single ancestral ordered group. Next, it removes the selected rows and the column that contains the corresponding non-zero entry. In the remaining matrix, say $\tilde{\bW}$, the Algorithm again chooses a row $\mathbf{w}$ with one non-zero entry, but has the smallest number of non-zero entries in $\bW^{ME}$. Then, it selects all other rows in $\tilde{\bW}$ with the same support and the same number of non-zero entries in $\bW^{ME}$, as $\mathbf{w}$. The rows in $\tilde{\bW}$ with exactly one non-zero entry and having the same (fewest) number of non-zero entries in $\bW^{ME}$ correspond to an \texttt{nu-leaf} node and its direct children (with the same support in $\bW^{ME}$), and hence they go together into the same group. The remaining rows with more non-zero entries in $\bW^{ME}$ are \texttt{u-leaf} nodes that go to separate distinct groups. Finally, this procedure is repeated until all variables are assigned to ancestral ordered groups. 

Since Algorithm \ref{alg:aog} is based on the mixing matrix, it follows from the constructed mapping in Theorem \ref{thm:main_equivalence} that the same Algorithm can be applied to recover the AOG of a linear SEM-UR, by replacing ``rows'' with ``columns'' and vice versa, in the Algorithm.  

\begin{algorithm}[t]
\SetAlgoVlined
\DontPrintSemicolon
\KwIn{Recovered mixing matrix $\bW^{ME}$.}
Calculate the number of non-zero entries in each row of $\bW^{ME}$. Denote the vector of these numbers as $\mathbf{n}$, where each entry in $\mathbf{n}$ corresponds to a row in $\bW^{ME}$.\; 
Initialize $\tilde{\bW}=\bW^{ME}$. \;
\While{$\tilde{\bW}$ is not empty}{
Find a row in $\tilde{\bW}$ that contains only one non-zero entry, and has the smallest corresponding value in $\mathbf{n}$. If there are multiple such rows, randomly select one. Denote the selected row as $\bw$, and its corresponding value in $\mathbf{n}$ as $n_0$. \; 
Consider the rows in $\tilde{\bW}$ with the same support as $\bw$ (including $\bw$ itself). Denote the set of variables that correspond to these rows as $\mathcal{Z}_W$.\;
Assign all variables in $\mathcal{Z}_W$, with the same corresponding value as $n_0$ in $\mathbf{n}$, to a single ordered group.\; Assign each of the remaining variables in $\mathcal{Z}_W$ to a separate ordered group.\;
Remove from $\tilde{\bW}$ the rows corresponding to the variables in $\mathcal{Z}_W$, and the column containing the corresponding non-zero entries in these rows.\;
}
\KwOut{AOG of the SEM-ME.}
\caption{Recovering ancestral ordered grouping in linear SEM-ME.}
\label{alg:aog} 
\end{algorithm}

\section{SEM-ME and SEM-UR faithfulness assumptions}\label{app:faithfulness}
We first present the formal statement of SEM-ME and SEM-UR faithfulness assumptions in Appendix \ref{app:faithfulness_full}. We also provide examples of violations of faithfulness. In Appendix \ref{app:matrix-form}, we present an equivalent representation of SEM-ME and SEM-UR faithfulness, using the mixing matrix. Next, in Appendix \ref{app:faithfulness_zero_measure}, we show that violation of faithfulness assumptions is a measure-zero event. Finally, we present the relation between bottleneck faithfulness \citep{adams2021identification} and SEM-UR faithfulness assumptions in Appendix \ref{app:faithfulness_bottleneck}. 

\subsection{Formal statement of faithfulness assumption} \label{app:faithfulness_full}

For a variable $V_i$, we define the ancestor set, $An(V_i)$ (resp. the descendant set, $De(V_i)$) as the sets of variables that have directed paths to $V_i$ (resp. from $V_i$); both excluding $V_i$ itself. Note that in the literature, some works use the convention $V_i\in An(V_i)$ and $V_i\in De(V_i)$. We do not use this convention here.

For SEM-ME, define the possible parent set of a variable $V_i\in \setZ\cup\setY$, as the union of the ancestor set of $V_i$, $An(V_i)$, and the set of \uleaf nodes whose parents are subsets of $An(V_i)$. For any set $\setV\subseteq \setZ\cup\setY$, define $\texttt{TE}(\setV,V_i)$ as the vector whose entries represent the total causal effects (i.e., sum of products of path coefficients) of the elements of $\setV$ on $V_i$. We define the total causal effect of a variable on itself to be 1.
Similarly, for SEM-UR, define the possible children set of $V_i\in \setH\cup\setX$ as the union of the descendant set of $V_i$, $De(V_i)$, and the set of latent variables whose children sets are subsets of $De(V_i)$. For any set $\setV\subseteq \setH\cup\setX$, define $\texttt{TC}(V_i,\setV)$ as the vector whose entries represent the total causal effects of $V_i$ on the elements of $\setV$.

 
\setcounter{assumption}{1}
\begin{assumption}[SEM-ME faithfulness]\label{assumption:me_faithfulness}
(a) The total causal effect of an underlying variable $V_i\in \setZ\cup\setY$ on its descendant $V_j\in \setZ\cup\setY$ is not equal to zero.\\
(b) For each variable $V_i\in \setZ\cup\setY$, 
(b1) $\iTE(An(V_i), V_i)$ is linearly independent of any $k\leq|Pa(V_i)|$ vectors in $\{\iTE(An(V_i), V): V \text{ is a possible parent of }V_i\}$, except for the case when $k=|Pa(V_i)|$ and for each \iuleaf node $V_l$ that corresponds to one of these $k$ vectors, $Pa(V_l)\subseteq Pa(V_i)$;
(b2) If $V_i$ is a \iuleaf node, then for each parent $V_j$ of $V_i$, $\iTE(An(V_i)\setminus \{V_j\}, V_i)$ is linearly independent of any $k\leq|Pa(V_j)\cup Pa(V_i)|-1$ vectors in 
$\{\iTE(An(V_i)\setminus\{V_j\}, V): V \text{ is a possible parent of }V_j\}$, except for the case when $k=|Pa(V_j)\cup Pa(V_i)|-1$ and for each \iuleaf node $V_l$ that corresponds to one of these $k$ vectors, $Pa(V_l)\subseteq Pa(V_j)\cup Pa(V_i)$.
\end{assumption}
\vspace{-3mm}
\begin{assumption}[SEM-UR faithfulness] \label{assumption:ur_faithfulness}
(a) The total causal effect of an observed (latent) variable $X_{i}\in\setX$ $(H_{i}\in\setH)$ on its descendant $X_{j}\in \setX$ is not equal to zero.\\
(b) For each variable $V_i\in \setX\cup\setH$,
(b1) $\iTC(V_i,De(V_i))$ is linearly independent of any $k\leq|Ch(V_i)|$ vectors in $\{\iTC(V,De(V_i)): V \text{ is a possible child of }V_i\}$, except for the case when $k=|Ch(V_i)|$ and for each latent variable $H_l$ that corresponds to one of these $k$ vectors, $Ch(H_l)\subseteq Ch(V_i)$;
(b2) If $V_i$ is a latent variable, then for each child $X_j$ of $V_i$, $\iTC(V_i, De(V_i)\setminus \{X_j\})$ is linearly independent of any $k\leq|Ch(X_j)\cup Ch(V_i)|-1$ vectors in $\{\iTC(V, De(V_i)\setminus \{X_j\}): V \text{ is a possible child of }X_j\}$, except for the case when $k=|Ch(X_j)\cup Ch(V_i)|-1$ and for each latent variable $H_l$ that corresponds to one of these $k$ vectors, $Ch(H_l)\subseteq Ch(X_j)\cup Ch(V_i)$.
\end{assumption}

\begin{figure}[t]
\centering
\begin{tikzpicture}[very thick, scale=.25]
\centering
\foreach \place/\name in {{(0,0)/Z_1}, {(5,3)/Z_2}, {(5,-3)/Z_3}}
    \node[observed, label=center:$\name$] (\name) at \place {};
\foreach \place/\name in  {(5,7.5)/Z_4, (5,-7.5)/Z_5, {(10,0)/Z_6}}
    \node[deterministic, label=center:$\name$, font=\footnotesize] (\name) at \place {};  
\path[causal] (Z_1) edge node[right, pos=0.3] {$~1$} (Z_2);
\path[causal] (Z_1) edge node[right, pos=0.3] {$~1$} (Z_3);
\path[causal] (Z_2) edge node[above, pos=0.5] {$b$} (Z_6);
\path[causal] (Z_3) edge node[below, pos=0.5] {$d$} (Z_6);
\path[causal] (Z_2) edge node[right, pos=0.5] {$1$} (Z_4);
\path[causal] (Z_3) edge node[right, pos=0.5] {$1$} (Z_5);
\path[causal] (Z_1) edge node[left, pos=0.5] {$a$} (Z_4);
\path[causal] (Z_1) edge node[left, pos=0.5] {$c$} (Z_5);
\node at (5,-12) {(a)};
\end{tikzpicture} 
\hspace{4mm}
\begin{tikzpicture}[very thick, scale=.25]
\centering
\foreach \place/\name in {{(0,0)/Z_1}, {(5,3)/Z_4}, {(5,-3)/Z_5}}
    \node[observed, label=center:$\name$] (\name) at \place {};
\foreach \place/\name in  {(5,7.5)/Z_2, (5,-7.5)/Z_3, {(10,0)/Z_6}}
    \node[deterministic, label=center:$\name$, font=\footnotesize] (\name) at \place {};  
\path[causal] (Z_1) edge node[right, pos=0.3] {$a+1$} (Z_4);
\path[causal] (Z_1) edge node[right, pos=0.3] {$c+1$} (Z_5);
\path[causal] (Z_4) edge node[above, pos=0.5] {$b$} (Z_6);
\path[causal] (Z_5) edge node[below, pos=0.5] {$d$} (Z_6);
\path[causal] (Z_4) edge node[right, pos=0.5] {$1$} (Z_2);
\path[causal] (Z_5) edge node[right, pos=0.5] {$1$} (Z_3);
\path[causal] (Z_1) edge node[left, pos=0.5] {$-a$} (Z_2);
\path[causal] (Z_1) edge node[left, pos=0.5] {$-c$} (Z_3);
\node at (5,-12) {(b)};
\end{tikzpicture} 
\hspace{4mm}
\begin{tikzpicture}[very thick, scale=.32]
\centering
\node[x=2]{
$~\begin{bmatrix}
Z_1 \\
Z_2 \\
Z_3 \\
Z_4 \\
Z_5 \\
Z_6 
\end{bmatrix}   
=
\begin{bmatrix}
1 & 0 & 0 \\
1 & 1 & 0 \\
1 & 0 & 1 \\
a+1 & 1 & 0 \\
c+1 & 0 & 1 \\
b+d & b & d \\
\end{bmatrix}   
\begin{bmatrix}
N_{Z_1} \\
N_{Z_2} \\
N_{Z_3}
\end{bmatrix}   
$
};
\node at (0,-5) {$\WME$};
\node at (0,-8) {(c)};
\end{tikzpicture} 
\caption{Example explaining Assumption \ref{fig:ME_faithfulness}\emph{(b)}. (a) The ground-truth model of a linear SEM-ME. (b) An alternative linear SEM-ME. (c) If $ab+cd=0$, then both models have the same mixing matrix $\WME$.}
\label{fig:faithfulness_new_edge} 
\vspace{-4mm}
\end{figure}
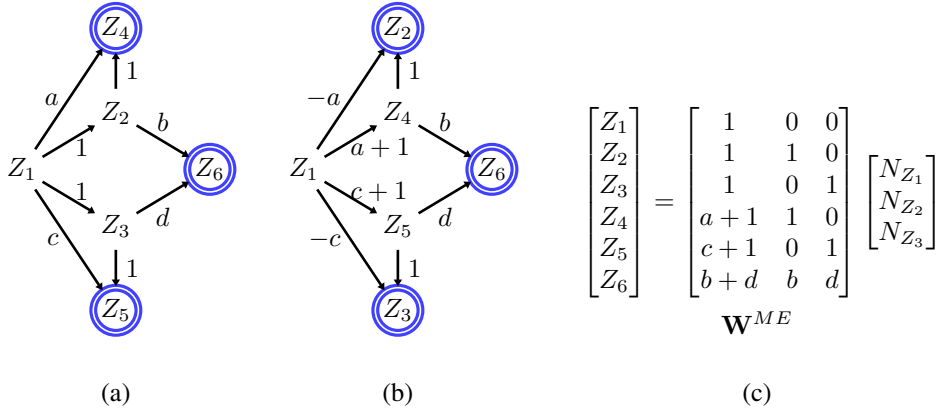

We explain the aforementioned faithfulness assumptions using the following examples, which investigate various cases of violation of SEM-ME faithfulness, and show that our identifiability result in Theorem \ref{thm:DOG} may no longer hold under such violation. In Appendix \ref{app:faithfulness_zero_measure}, we show that violation of SEM-ME faithfulness is a measure-zero event.

\begin{example} \label{example:faithfulness1}
Consider the linear SEM-ME generated by the graph in Figure \ref{fig:ME_faithfulness}(a).\footnote{This example was stated in Section \ref{subsection:faithfulness}, and we repeat it here using the notation $\TE(\cdot,\cdot)$ in the formal statement of Assumption 2\emph{(b)}.}
The mixing matrix is given in Figure \ref{fig:ME_faithfulness}. We have $\iTE(\{Z_1, Z_2\}, Z_4)=[a(b+c);b+c]=(b+c)\iTE(\{Z_1, Z_2\}, Z_2)$.
This implies that $\iTE(An(Z_4)\setminus \{Z_3\}, Z_4)$ is linearly dependent on one vector $\iTE(An(Z_4)\setminus \{Z_3\}, Z_2)$ where $Z_2$ is a possible parent of $Z_3$. Hence the model violates Assumption \ref{assumption:me_faithfulness}{(b2)}. 

According to Definition \ref{def:ordered_group_decomposition}, the DOG of the ground truth model (Figure \ref{fig:ME_faithfulness}(a)) is $\{\{Z_1\}, \{Z_2\}, \{Z_3, Z_4\}\}$. 
Because either $Z_3$ or $Z_4$ can be the center of the last group, 
the DOG equivalence class should include both the ground-truth and the model represented by Figure \ref{fig:ME_faithfulness}(b). However, due to violation of Assumption \ref{assumption:me_faithfulness}{(b2)}, the model represented by Figure \ref{fig:ME_faithfulness}(b), does not belong to the same DOG equivalence class as the ground-truth, and hence can be distinguished from the ground truth. Therefore, our identifiability result in Theorem \ref{thm:DOG} does not hold because of  violation of faithfulness.  


\end{example}


\begin{example} \label{example:faithfulness2}
Consider the linear SEM-ME generated by the graph in Figure \ref{fig:faithfulness_new_edge}(a). The mixing matrix is given in Figure \ref{fig:faithfulness_new_edge}(c). We have $\iTE(An(Z_6), Z_6)=[b+d; b; d]$, $\iTE(An(Z_6), Z_4)=[a+1; 1; 0]$, and $\iTE(An(Z_6), Z_5)=[c+1; 0; 1]$.
If $ab+cd=0$, then $\iTE(An(Z_6), Z_6)$ can be written as $b\iTE(An(Z_6), Z_4) + d \iTE(An(Z_6), Z_5)$. That is, $\iTE(An(Z_6), Z_6)$ can be written as the linear combination of two vectors; however, the variables corresponding to these two vectors, i.e., $Z_4, Z_5$, are both \iuleaf nodes and have a parent $Z_1$ that is not a parent of $Z_6$. Therefore the model violates Assumption \ref{assumption:me_faithfulness}{(b1)}. Note that $ab+cd=0$ is a measure-zero event if all parameters are randomly and independently drawn from continuous distributions.

Note that each of the six variables in the ground-truth model belongs to a separate direct ordered group. This means that the DOG equivalent class only includes the ground-truth. However, if $ab+cd=0$, then the linear SEM-ME generated by the graph in Figure \ref{fig:faithfulness_new_edge}(b) is indistinguishable from the ground truth, since it has the same mixing matrix and the same structure (i.e., the two graphs are isomorphic). Once again, our identifiability result in Theorem \ref{thm:DOG} does not hold because of violation of faithfulness. 

\end{example}

\subsection{Equivalent matrix form for faithfulness assumptions} \label{app:matrix-form}
In this section we present an equivalent matrix representation of Assumptions \ref{assumption:me_faithfulness}\emph{(b)} and \ref{assumption:ur_faithfulness}\emph{(b)}. For each variable $V_i$ in SEM-ME, define $Supp(V_i)$ as the support of the row in $\WME$ corresponding to $V_i$: $Supp(V_i)=\{N_{V_j}|j:[\WME]_{ij}\neq 0\}$. For each variable $V_i$ in SEM-UR, define $Supp(N_{V_i})$ as the support of the column in $\WUR$ corresponding to $N_{V_i}$: $Supp(N_{V_i})=\{V_j|j:[\WUR]_{ji}\neq 0\}$. 

Under this formulation, a variable $V_j$ in a SEM-ME is a possible parent of $V_i$ if and only if $V_j\neq V_i$, and $Supp(V_j)$ is a subset of $Supp(V_i)\setminus \{N_{V_i}\}$, i.e., $Supp(V_i)$ excluding the exogenous noise term $N_{V_i}$ if $V_i$ is an \nuleaf node \citep{yang2022causal}. A variable $V_j$ in a SEM-UR is a possible child of $V_i$ if and only if $V_j\neq V_i$ and $Supp(N_{V_j})$ is a subset of $Supp(N_{V_i})\setminus \{V_i\}$, i.e., $Supp(V_i)$ excluding ${V_i}$ itself if $V_i$ is observed.
\begin{proposition} 
Assumptions \ref{assumption:me_faithfulness}(b) and \ref{assumption:ur_faithfulness}(b) have the following equivalent matrix form:

\textbf{Assumption \ref{assumption:me_faithfulness}{(b)}}: For a variable $V_i\in \setZ\cup\setY$, (b1) denote $\WME_i$ as the submatrix of $\WME$ with rows corresponding to the possible parents of $V_i$, and columns corresponding to $Supp(V_i)\setminus \{N_{V_i}\}$. Then the vector $\WME[V_i, Supp(V_i)\setminus \{N_{V_i}\}]$ cannot be written as any linear combination of $k\leq|Pa(V_i)|$ rows in $\WME_i$, except for the case when $k=|Pa(V_i)|$ and for each \iuleaf node $V_l$ that corresponds to one of these $k$ rows, $Pa(V_l)\subseteq Pa(V_i)$. \\
(b2) If $V_i$ is a \iuleaf node, then for each parent $V_j$ of $V_i$, denote $\WME_{ij}$ as the submatrix of $\WME$ with rows corresponding to the possible parents of $V_j$, and columns corresponding to $Supp(V_i)\setminus \{N_{V_j}\}$. Then $\WME [V_i, Supp(V_i)\setminus \{N_{V_j}\}]$ cannot be written as any linear combination of $k\leq|Pa(V_i)\cup Pa(V_j)|-1$ rows in $\WME_{ij}$, except for the case when $k=|Pa(V_i)\cup Pa(V_j)|-1$ and for each \iuleaf node $V_l$ that corresponds to one of these $k$ rows, $Pa(V_l)\subseteq Pa(V_i)\cup Pa(V_j)$.

\textbf{Assumption \ref{assumption:ur_faithfulness}{(b)}}: For a variable $V_i\in \setH\cup\setX$, (b1) denote $\WUR_i$ as the submatrix of $\WUR$ with columns corresponding to the exogenous noise terms of the possible children of $V_i$, and rows corresponding to $Supp(N_{V_i})\setminus \{V_i\}$. Then the vector $\WUR[Supp(N_{V_i})\setminus \{V_i\}, N_{V_i}]$ cannot be written as any linear combination of $k\leq|Ch(V_i)|$ columns in $\WUR_i$, except for the case when $k=|Ch(V_i)|$ and for each latent variable $H_l$ whose exogenous noise term corresponds to one of these $k$ columns, $Ch(H_l)\subseteq Ch(V_i)$. \\
(b2) If $V_i$ is a latent variable, then for each child $X_j$ of $V_i$, denote $\WUR_{ji}$ as the submatrix of $\WUR$ with columns corresponding to the exogenous noise terms of the possible children of $X_j$, and rows corresponding to $Supp(N_{V_i})\setminus \{X_j\}$. Then $\WUR[Supp(N_{V_i})\setminus \{X_j\}, N_{V_i}]$ cannot be written as any linear combination of $k\leq|Ch(V_i)|-1$ columns in $\WUR_{ji}$, except for the case when $k=|Ch(X_j)\cup Ch(V_i)|-1$ and for each latent variable $H_l$ whose exogenous noise term corresponds to one of these $k$ columns, $Ch(H_l)\subseteq Ch(X_j)\cup Ch(V_i)$.

\end{proposition}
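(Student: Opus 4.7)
The plan is to identify each total-causal-effect vector appearing in Assumptions \ref{assumption:me_faithfulness}(b) and \ref{assumption:ur_faithfulness}(b) with a specific row (resp.\ column) of the mixing matrix, after which the two linear-independence conditions coincide verbatim.

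\textbf{Step 1 (Entry-wise identity).} Expanding $(\bI-\bC)^{-1} = \sum_{m\ge 0} \bC^m$ in Equation \eqref{eq:SEM-ME_mixing_matrix} as a sum over directed paths yields the identity $[\WME]_{V_i, N_{V_k}} = \TE(V_k, V_i)$ for every \nuleaf node $V_k$ and every underlying variable $V_i$, with the convention $\TE(V_i,V_i)=1$. Note that \uleaf nodes contribute no columns to $\WME$ in canonical form, which is consistent with the fact that \uleaf nodes have no descendants. Under Assumption \ref{assumption:me_faithfulness}(a), $\TE(V_j, V_i) \neq 0$ iff $V_j \in An(V_i)$, so the columns indexed by $Supp(V_i)\setminus\{N_{V_i}\}$ correspond bijectively to $An(V_i)$, and analogously $Supp(V_i)\setminus\{N_{V_j}\}$ corresponds to $An(V_i)\setminus\{V_j\}$ when $V_j$ is a parent of a \uleaf $V_i$.

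\textbf{Step 2 (Possible-parent correspondence).} For an \nuleaf $V$ with $V\neq V_i$, support containment $Supp(V)\subseteq Supp(V_i)\setminus\{N_{V_i}\}$ forces $N_V\in Supp(V_i)$, hence $V\in An(V_i)$, and the converse is immediate from Step 1. For a \uleaf $V$, $Supp(V)$ consists of the noise terms indexed by $\bigcup_{V_k\in Pa(V)}(An(V_k)\cup\{V_k\})$, so the containment is equivalent to $Pa(V)\subseteq An(V_i)$, matching the definition of a possible parent.

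\textbf{Step 3 (Equivalence and the SEM-UR case).} Combining Steps 1 and 2, the row $\WME[V_i, Supp(V_i)\setminus\{N_{V_i}\}]$ equals $\TE(An(V_i), V_i)^\top$, and the rows of $\WME_i$ are precisely the vectors $\TE(An(V_i), V)^\top$ as $V$ ranges over possible parents of $V_i$. Hence the linear-independence condition over $k$ vectors translates directly, and the "except for" clause $Pa(V_l)\subseteq Pa(V_i)$ in (b1) is preserved because \uleaf siblings whose parent sets sit inside $Pa(V_i)$ are exactly those whose rows can conspire with the natural $|Pa(V_i)|$-term decomposition of $V_i$'s row to replicate it on the ancestor support. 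Part (b2) follows identically with $V_j$ in the role of $V_i$ and $Pa(V_j)\cup Pa(V_i)$ in place of $Pa(V_i)$. The SEM-UR case is then immediate from Theorem \ref{thm:main_equivalence}: since $\WUR$ is (up to column permutation) the transpose of a corresponding $\WME$, rows swap with columns and ancestors/parents swap with descendants/children, converting Assumption \ref{assumption:ur_faithfulness}(b) into its matrix form by the same argument.

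The main obstacle is the careful bookkeeping around the boundary "except for" clauses in (b1) and (b2): one must confirm that the structural condition $Pa(V_l)\subseteq Pa(V_i)$ (or $Pa(V_l)\subseteq Pa(V_j)\cup Pa(V_i)$) in the original formulation corresponds precisely to the permissible non-generic dependency among exactly $|Pa(V_i)|$ (or $|Pa(V_j)\cup Pa(V_i)|-1$) possible-parent rows in the matrix formulation, since these correspond to the unique algebraic identity that replicates $V_i$'s causal effect on $An(V_i)$ via its own parents and their descendant \uleaf nodes.
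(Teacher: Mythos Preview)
Your proof is correct. The paper does not give an explicit proof of this proposition; immediately before stating it, the paper records the definition of $Supp(V_i)$ and the support-containment characterization of possible parents, and then treats the equivalence as a direct translation. Your Steps 1--3 make exactly these ingredients explicit, so your argument is the natural one the paper leaves to the reader. The use of Theorem~\ref{thm:main_equivalence} to transfer the SEM-ME argument to SEM-UR is a clean shortcut.

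One remark on your closing paragraph: the ``except for'' clauses require no separate verification. In both the original assumption and the matrix reformulation, the exception is the \emph{same} purely graph-theoretic condition on $Pa(V_l)$ (resp.\ $Ch(H_l)$); once you have identified each vector $\TE(An(V_i),V)$ with the corresponding row of $\WME_i$, the clause carries over verbatim with ``vectors'' replaced by ``rows''. There is no additional algebraic content to check beyond the row--vector identification you already established in Steps 1--2, so the concern you flag as the ``main obstacle'' is in fact vacuous.
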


\subsection{Violation of our faithfulness assumptions is a measure-zero event} \label{app:faithfulness_zero_measure}
In the following, we show that Assumption \ref{assumption:me_faithfulness} is violated with probability zero if all model coefficients are randomly and independently drawn from continuous distributions. Same argument holds for Assumption \ref{assumption:ur_faithfulness}, and hence the proof is omitted.
Note that Assumption \ref{assumption:me_faithfulness}\emph{(a)} is identical to conventional faithfulness in linear causal models, the violation of which is known to be a measure-zero event \cite{meek2013strong,spirtes2000causation}. Thus, it suffices to show that violation of Assumption \ref{assumption:me_faithfulness}\emph{(b)} is a measure-zero event. We do so for both parts of Assumption \ref{assumption:me_faithfulness}\emph{(b)}.

\myuline{Assumption \ref{assumption:me_faithfulness}\emph{(b1)}}: Let $\setT_i$ denote the set of vectors $\{\TE(An(V_i), V): V\text{ is a possible parent of }V_i\}$. Recall that the total effects of $An(V_i)$ on $V_i$, i.e., $\TE(An(V_i), V_i)$, can be written as a linear combination of the total effects of $An(V_i)$ on parents of $V_i$, i.e., $\setP_i=\{\TE(An(V_i), V): V\in Pa(V_i)\}$:
\begin{equation}
    \TE(An(V_i), V_i) = \sum_{V_j\in Pa(V_i)} a_{ij} \TE(An(V_i), V_j).
    \label{eq:proof_zero_measure}
\end{equation}
Further, $\setA_i=\{\TE(An(V_i), V): V\in An(V_i)\}$ constitutes a basis for ${\rm{span}}(\setT_i)$: Vectors in $\setA_i$ are linearly independent (because each ancestor has its exogenous noise term) and every vector in $\setT_i$ (which corresponds to a possible parent of $V_i$) can be written as a linear combination of vectors in $\setA_i$. Note that $\setP_i\subseteq \setA_i$.
Therefore, if Assumption \ref{assumption:me_faithfulness}\emph{(b1)} is violated, then $\TE(An(V_i), V_i)$ can either be (i) linearly dependent on $k<|Pa(V_i)|$ vectors in $\setT_i$, or (ii) linearly dependent on $|Pa(V_i)|$ vectors in $\setT_i$ and at least one of these vectors does not belong to ${\rm{span}}(\setP_i)$. Both of these events are of measure zero, when all $\{a_{ij}|j:V_j\in Pa(V_i)\}$ in Equation (8) are randomly and independently selected from continuous distributions. This follows because if we represent $\TE(An(V_i), V_i)$ and all vectors in $\setT_i$ as linear combinations of the vectors in $\setA_i$, then case (i) corresponds to a solution of a linear system with $k$ variables and at least $|Pa(V_i)|$ equations (constraints), and case (ii) corresponds to a solution of a linear system with $|Pa(V_i)|$ variables and at least $|Pa(V_i)|+1$ equations. 


\myuline{Assumption \ref{assumption:me_faithfulness}\emph{(b2)}}: Similarly, we first show that for any parent $V_{j}$ of $V_i$, $\TE(An(V_i)\setminus \{V_j\}, V_i)$ can be written as the linear combination of $|Pa(V_i)\cup Pa(V_j)| - 1$ linearly independent vectors in $\setT_i$ with probability one. To show this, we first note that Equation \eqref{eq:proof_zero_measure} holds for any subvector of $\TE(An(V_i), V_i)$. Therefore, 
\begin{equation*}
    \TE(An(V_i)\setminus \{V_j\}, V_i) = \sum_{V_{j'}\in Pa(V_i)} a_{ij'} \TE(An(V_i)\setminus \{V_j\}, V_{j'}).
\end{equation*}
Further, similar to Equation \eqref{eq:proof_zero_measure}, $\TE(An(V_j), V_j)$ can also be written as a linear combination of the vectors in $\{\TE(An(V_j), V_k): V_k\in Pa(V_j)\}$. Since the total causal effect from any variable in $An(V_i)\setminus (An(V_j)\cup \{V_j\})$ to $V_j$ or any parent of $V_j$ is zero, we have
\begin{equation*}
    \TE(An(V_i)\setminus \{V_j\}, V_j) = \sum_{V_{k}\in Pa(V_j)} a_{jk} \TE(An(V_i)\setminus \{V_j\}, V_{k}).
\end{equation*}
Combining these two equations, we have 
\begin{equation}
\begin{aligned}
\TE(An(V_i)\setminus \{V_j\}, V_i) =~& \sum_{V_{j'}\in Pa(V_i)\setminus \{V_j\}} a_{ij'} \TE(An(V_i)\setminus \{V_j\}, V_{j'}) \\
&+ \sum_{V_{k}\in Pa(V_j)} a_{ij}a_{jk} \TE(An(V_i)\setminus \{V_j\}, V_{k}).
\end{aligned}
    \label{eq:proof_zero_measure_1}
\end{equation}
Note that the right hand side of Equation \eqref{eq:proof_zero_measure_1} includes $|Pa(V_i)\cup Pa(V_j)|- 1$ vectors, and for each vector that is included in both summations, the coefficients cancel out each other with probability zero. 

Let $\setT_{ji}$, $\setA_{ji}$, $\setP_{ji}$ denote the set of vectors $\{\TE(An(V_i)\setminus \{V_j\}, V): V\text{ is a possible parent of }V_j\}$, $\{\TE(An(V_i)\setminus \{V_j\}, V): V\in An(V_j)\}$, $\{\TE(An(V_i)\setminus \{V_j\}, V): V\in Pa(V_j)\cup Pa(V_i)\setminus \{V_j\}\}$, respectively. Similarly, $\setA_{ji}$ constitutes a basis of ${\rm{span}}\setT_{ji}$. If Assumption \ref{assumption:me_faithfulness}\emph{(b2)} is violated, then all parents of $V_i$ are ancestors of $V_j$, otherwise $\TE(An(V_i)\setminus \{V_j\}, V_i)$ cannot belong to ${\rm{span}}(\setT_{ji})$. This means that all vectors on the right hand side of Equation \eqref{eq:proof_zero_measure_1}, i.e., $\setP_{ji}$,  belong to $\setA_{ji}$. Further, $\TE(An(V_i)\setminus \{V_j\}, V_i)$ can either be (i) linearly dependent on $k<|Pa(V_i)\cup Pa(V_j)|-1=|\setP_{ji}|$ vectors in $\setT_{ji}$, or (ii) linearly dependent on $|Pa(V_i)\cup Pa(V_j)|-1$ vectors in $\setT_{ji}$ and at least one of these vectors does not belong to ${\rm{span}}(\setP_{ji})$. Both of these events are measure zero, when all $\{a_{ij'}|j': V_{j'}\in Pa(V_i), j'\neq V_j\}\cup \{a_{jk}|k: V_{k}\in Pa(V_j)\}$ in Equation \eqref{eq:proof_zero_measure_1} are randomly and independently selected from continuous distributions.


\subsection{Bottleneck faithfulness implies SEM-UR faithfulness} \label{app:faithfulness_bottleneck}
We show that our SEM-UR faithfulness is weaker than Bottleneck faithfulness in \citep{adams2021identification}, in the sense that more models satisfy SEM-UR faithfulness. Bottleneck faithfulness is defined as follows:
\begin{condition}[Bottleneck faithfulness] \label{condition:bottleneck_faithfulness}
Define a bottleneck $B$ from $\setJ$ to $\setK$ as a set of variables where any directed path from $j\in \setJ$ to $k\in \setK$ includes some variables in $B$, and a minimal bottleneck $B$ from $\setJ$ to $\setK$ as the bottleneck with the smallest size. Then for every $\setJ\subseteq \setH\cup \setX$, $\mathcal{K}\subseteq \setX$, if $B$ is a minimal bottleneck from $\setJ$ to $\setK$, then ${\rm{Rank}} (\bW_{\setK}^{\setJ})=|B|$, where $\bW_{\setK}^{\setJ}$ is the submatrix of $\WUR$ representing the total causal effect from variables in $J$ to variables in $K$.
\end{condition}

``{\myuline{Bottleneck faithfulness implies SEM-UR faithfulness}}'': It has been shown in \citep{adams2021identification} that Condition \ref{condition:bottleneck_faithfulness} implies Assumption \ref{assumption:ur_faithfulness}\emph{(a)}. For Assumption \ref{assumption:ur_faithfulness}\emph{(b)}, it suffices to show that if either Assumption \ref{assumption:ur_faithfulness}\emph{(b1)} or \emph{(b2)} is violated, then Condition \ref{condition:bottleneck_faithfulness} is violated.

\emph{Assumption \ref{assumption:ur_faithfulness}(b1)}: Suppose $V_i$ violates Assumption \emph{(b1)}. Then $\TC(V_i, De(V_i))$ is either: (i) linearly dependent on $k<|Ch(V_i)|$ vectors in the set $\setT_{i}=\{\TC(V,De(V_i)): V \text{ is a possible child of }V_i\}$, or (ii) linearly dependent on $k=|Ch(V_i)|$ vectors in $\setT_i$ and the children set of at least one latent variable $H_j$ corresponding to these vectors is not a subset of $|Ch(V_i)|$. Without loss of generality, suppose $k$ is minimal, i.e, $\TC(V_i, De(V_i))$ is linearly independent of any $l$ vectors in $\setT_{i}$ for any $l<k$. Denote the set of possible children of $V_i$ corresponding to these vectors as $\setP$. Consider the set $\mathcal{J}=\setP\cup \{V_i\}$, and $\setK=Des(V_i)$.\footnote{Recall that we do not include $V_i$ itself in $Des(V_i)$. This is different from \citep{adams2021identification} where they include $V_i$.} Then ${\rm{Rank}}(\bW_{\setK}^{\setJ})=|\setP|=k$, because (i) $\bW_{\setK}^{V_i}$ can be written as a linear combination of the columns in $\bW_{\setK}^{\setP}$, and (ii) columns in $\bW_{\setK}^{\setP}$ are linearly independent (otherwise $k$ is not minimal). 

In the following we show that any bottleneck from $\setJ$ to $\setK$ is at least of size $k+1$. First, any bottleneck from $\setP$ to $\setK$ is at least of size $|\setP|=k$. This is because if there exists a bottleneck of size $k-1$, then according to Proposition 1 of \citep{adams2021identification}, ${\rm{Rank}}(\bW_{\setK}^{\setJ})\leq k-1$. This contradicts the minimality of $k$. Next, for any bottleneck $B$ from $\setP$ to $\setK$ with size $k$: (i) If $k < |Ch(V_i)|$, then $B$ will cover at most $k$ children of $V_i$. (ii) If $k = |Ch(V_i)|$. Note that there exists at least one child of $H_j$, denoted by $X_k$, that is not a child of $V_i$. Therefore, to block the edge $H_j\rightarrow X_k$, $B$ must include at least one variable between $H_j$ and $X_k$. This implies that $B$ includes at most $|Ch(V_i)| - 1$ children of $V_i$.

In either case, there is at least one directed edge from $V_i$ to one of its children that cannot be blocked by $B$. Therefore, the minimal bottleneck from $\setJ$ to $\setK$ is at least of size $k+1$, which violates bottleneck faithfulness condition.

\emph{Assumption \ref{assumption:ur_faithfulness}{(b2)}}: Similarly, suppose $V_i$ and its child $X_j$ violates Assumption \emph{(b2)}. Then $\TC(V_i, De(V_i)\setminus \{X_j\})$ is either: (i) linearly dependent on $k<|Ch(V_i)\cup Ch(X_j)|-1$ vectors in the set $\setT_{ji}=\{\TC(V,De(V_i)\setminus \{X_j\}): V \text{ is a possible child of }X_j\}$, or (ii) linearly dependent on $k=|Ch(V_i)\cup Ch(X_j)|-1$ vectors in $\setT_{ji}$, and the children set of at least one latent variable $H_k$ corresponding to these vectors is not a subset of $Ch(V_i)\cup Ch(X_j)$. 

Suppose $k$ is minimal, and denote the set of possible children corresponding to these $k$ vectors as $\setP$. Consider the set $\mathcal{J}=\setP\cup \{V_i\}$ and $\setK=Des(V_i)\setminus \{X_j\}$. Then ${\rm{Rank}}(\bW_{\setK}^{\setJ})=|\setP|=k$. However, any bottleneck from $\setJ$ to $\setK$ is at least of size $k+1$. This is because: (i) Any bottleneck from $\setP$ to $\setK$ is at least of size $|\setP|=k$; (ii) For any bottleneck $B$ from $\setP$ to $\setK$ with size $k$, there are $|Ch(V_i)\cup Ch(X_j)\setminus \{X_j\}|$ variables in $\setK$ that is a child of either $V_i$ or $X_j$. Therefore there is at least one directed edge from $V_i$ or $X_j$ to one of their children (excluding $X_j$) that cannot be blocked by $B$. (If $|B|=|Ch(V_i)\cup Ch(X_j)|-1$, then there is at least one variable in $B$ that blocks the edge from $H_k$ to its child that does not belong to $Ch(V_i)\cup Ch(X_j)$.) Therefore bottleneck faithfulness condition is violated.

``{\myuline{SEM-UR faithfulness does not imply bottleneck faithfulness}}'': In the following we will provide an example which satisfies Assumption \ref{assumption:ur_faithfulness} but violates Condition \ref{condition:bottleneck_faithfulness}. This example is introduced in \citep{adams2021identification}. Consider the following linear SEM-UR involving three latent variables $H=[H_1, H_2, H_3]$ and four observed variables $X=[X_1, X_2, X_3, X_4]$: 
\begin{equation*}
H = N_H,\quad X =
\begin{bmatrix}
0 & 1 & -1 \\
2 & 2 & 0 \\
3 & 3 & 0 \\
4 & 0 & 4 
\end{bmatrix}
H 
+
\mathbf{0} X + N_X.
\end{equation*}
It has been shown in \citep{adams2021identification} that the minimal bottleneck from $\setH=\{H_1, H_2, H_3\}$ to $\setX=\{X_1, X_2, X_3, X_4\}$ is $\setH$ with $|\setH|=3$, while ${\rm{Rank}}(\bW_{\setX}^{\setH})=2$. Therefore the model violates Condition \ref{condition:bottleneck_faithfulness}. On the contrary, the model satisfies Assumption \ref{assumption:ur_faithfulness}. This is because the possible children set of each observed variable is empty set (meaning that \emph{(b2)} is satisfied), and the possible children set of each latent variable only includes its children (meaning that \emph{(b1)} is satisfied).

\section{Proofs of the main results}\label{app:proof}
\subsection{Proof of Theorem \ref{thm:main_equivalence}}


We first observe that in a SEM-ME, whether a non-leaf underlying variable $V_i$ is observed (i.e., $V_i \in \setY$) or measured with error (i.e., $V_i \in \setZ$) does not change the relevant mixing matrix $\WME$. Specifically, in the overall mixing matrix $\bW$, cf. Equation \eqref{eq:SEM-ME_mixing_matrix_x}, the exogenous noise terms associated with non-leaf underlying variables correspond to columns with at least two non-zero entries. This follows because a non-leaf variable influences at least two other variables (its own measurement and its child(ren) in $\setZ\cup \setY$). On the other hand, the measurement error terms associated with the non-leaf underlying variables correspond to columns with only one non-zero entry, since the measurement error term only influences one variable (i.e., the corresponding measurement in $U$). Thus, by removing the identity matrix part of $\bW$, the remaining part, i.e., $\WME$ does not change if a non-leaf underlying variable is observed or measured with error.

For a linear SEM-ME, $M'$, without loss of generality, suppose that all non-leaf underlying variables are observed. That is, all \nuleaf nodes belong to $\setY$, and \uleaf nodes belong to $\setZ$. 
Denote the dimension of $Y$ (i.e., the number of the \nuleaf nodes) as $p_n$. Let us first permute the variables in $Y$ in the reversed causal order, and denote the result of this permutation by $Y^*$. Thus, 
\begin{align}\nonumber
Y^* = \bP Y \quad \Longrightarrow \quad
\begin{bmatrix}
    Z^{L} \\
    Y^*
\end{bmatrix} 
= 
\begin{bmatrix}
    \bD \\
    \bC^*
\end{bmatrix}
Y^*
 +
\begin{bmatrix}
    0 \\
    N_{{Y}^*}
\end{bmatrix},   
\end{align}
where $\bP$ is a $p_n\times p_n$ permutation matrix, and $N_{{Y}^*}=\bP N_Y$ is the permuted noise vector corresponding to $Y^*$. Further, $\bC^*=\bP\bC\bP^{\top}$ is a row and column permuted version of $\bC$, such that $\bC^*$ is strictly upper triangular. Therefore, the mixing matrix $\WME$ corresponding to $M'$ can be written as
\begin{align}
\WME(M') = 
\begin{bmatrix}
    \bI & \bZ \\
    \bZ & \bP^{\top}
\end{bmatrix} 
\WME_* \bP ,
\quad \text{where} \quad
\WME_*
= 
\begin{bmatrix}
    \bD(\bI-\bC^*) \\
    (\bI-\bC^*)
\end{bmatrix}
.
\label{eq:wme}
\end{align}
Next, we derive the mixing matrix of the corresponding SEM-UR under the mapping $\phi$.
Suppose the corresponding SEM-UR can be written as
\begin{equation*}
H_{\phi}=N_{H_{\phi}},\qquad X_\phi = \bB_{\phi} H_{\phi} + \bA_{\phi} X_{\phi} + N_{X_{\phi}}.
\end{equation*}
Note that under mapping $\phi$, all \uleaf nodes in $M'$ are mapped to latent variables in $H_{\phi}$, and all \nuleaf nodes in $M'$ are mapped to observed variables in $X_{\phi}$. Further, all edges are reversed. This means that there exists a permutation of $H_{\phi}$, denoted by $H^*_{\phi}$, such that $Z_i$ is mapped to $H^*_{\phi_i}$ and $Y^*_i$ is mapped to $X_i$. Note that the $X$ variables are arranged according to a certain causal order, which is not necessarily the case for the $Y$ variables. 
Further, if there is an edge from $Y^*_i$ to $Y^*_j$ in $M'$ with weight $w$, then there exists an edge from $X_i$ to $X_j$ in $\phi(M')$ with weight $w$. Therefore, $\bA_{\phi}= (\bC^*)^{\top}$. Similarly, $\bB_{\phi}\bP_H = \bD^{\top}$, where $\bP_H$ represents the re-permutation of the latent variables corresponding to the order of the \uleaf nodes. The mixing matrix of $\phi(M')$ can be written as
\begin{equation}
\WUR(M)=
\left[\left(\bI-(\bC^*)^{\top}\right)\bD^{\top}\bP_H^{\top} \;\;\left(\bI-(\bC^*)^{\top}\right) \right] = \left[\left(\bI-\bC^*\right)^{\top}\bD^{\top}\bP_H^{\top}  \;\;\left(\bI-\bC^*\right)^{\top} \right].
\label{eq:wur}
\end{equation}
Comparing Equation \eqref{eq:wme} to Equation \eqref{eq:wur}, we have
\begin{equation*}
\WME(M') = 
\begin{bmatrix}
    \bI & \bZ \\
    \bZ & \bP^{\top}
\end{bmatrix} 
\begin{bmatrix}
    \bP^{\top}_H & \bZ \\
    \bZ & \bI
\end{bmatrix} 
\left(\WUR(M)\right)^{\top}
 \bP 
\end{equation*}
Therefore, there exist column permutations of $\WME(M')$ and $\WUR(M)$, namely,
\begin{equation*}
    \tilde{\bW}^{ME}(M') = \WME(M') \bP^{\top},\qquad \tilde{\bW}^{UR}(M) = \WUR(M)
    \begin{bmatrix}
    \bP_H & \bZ \\
    \bZ & \bP
\end{bmatrix}
\end{equation*}
such that $\tilde{\bW}^{ME}(M') = (\tilde{\bW}^{UR}(M))^{\top}$.


\subsection{Proof of Theorem \ref{thm:aog}}
In order to prove Theorem \ref{thm:aog}, we need to prove Propositions \ref{prop:aog_property} and \ref{prop:center}, and that the AOG equivalence class is the extent level of identifiability under Assumption \ref{assumption:me_faithfulness}\emph{(a)}. We begin by proving Proposition \ref{prop:center} (and its counterpart for direct ordered grouping). We formulate these in the following lemma. 

\begin{lemma}\label{lemma:coefficient}
Given the mixing matrix $\WME$ (resp. $\WUR$) and the AOG/DOG of the model, all the coefficients are identified (up to the permutation and scaling of the columns of $\bB$ in SEM-UR) if and only if the centers (resp. exogenous noise assignments) of the ancestral/direct ordered groups are identified.
\end{lemma}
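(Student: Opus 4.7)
The plan is to prove the equivalence in the SEM-ME case by an explicit reconstruction, and then transfer it to the SEM-UR case via Theorem \ref{thm:main_equivalence}, which gives the transpose identity $\tilde{\bW}^{ME}(M')=(\tilde{\bW}^{UR}(M))^{\top}$. A single argument will handle both AOG and DOG, since the passage from (mixing matrix, grouping, centers) to coefficients depends only on the grouping and the centers, not on how the grouping was derived.

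For the backward direction (centers identified $\Rightarrow$ coefficients identified), the strategy is to exploit the block structure of $\bW^{ME}$ in Equation \eqref{eq:SEM-ME_mixing_matrix}: \nuleaf rows form $(\bI - \bC)^{-1}$, \uleaf rows form $\bD(\bI - \bC)^{-1}$, and the columns are indexed by exogenous noise terms of \nuleaf nodes. Identifying the center of each group is equivalent to identifying which rows of $\bW^{ME}$ are \nuleaf rows, because in groups of size greater than one the center is the unique \nuleaf, and a singleton group is \nuleaf or \uleaf according to its center's type. For each \nuleaf center $V_i$, the noise column $N_{V_i}$ is then pinpointed by its support, which under Assumption \ref{assumption:me_faithfulness}(a) equals $\{V_i\} \cup De(V_i)$; different \nuleaf nodes have different supports (each contains its own identity), so this resolves the column-permutation ambiguity. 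The scaling ambiguity is fixed by normalizing each such column so that its row-$V_i$ entry equals $1$, the diagonal of $(\bI - \bC)^{-1}$. Once the matching is in place, $(\bI - \bC)^{-1}$ is read off from the \nuleaf rows and inverted to recover $\bC$, and $\bD$ is recovered from the \uleaf rows as $[\bD(\bI - \bC)^{-1}](\bI - \bC)$.

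For the forward direction (coefficients identified $\Rightarrow$ centers identified), knowing all coefficients is by definition knowing the weighted causal diagram, which reveals which nodes are \nuleaf (those with a child in $\setZ \cup \setY$) and which are \uleaf. Since the grouping is given, the center is determined immediately: it is the unique \nuleaf in a group of size greater than one and the single node in a singleton. The SEM-UR case then follows by applying the same extraction to $(\bW^{UR})^{\top}$, with rows/columns and \nuleaf/\uleaf swapping into columns/rows and observed/latent, respectively; this recovers $(\bI - \bA)^{-1}$ from columns matched to observed centers and $(\bI - \bA)^{-1}\bB$ from the remaining columns, yielding $\bA$ and $\bB$. The residual permutation and scaling of the columns of $\bB$ cited in the statement is exactly the relabeling freedom among latent variables that share a descendant set, which cannot be broken from observational data. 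The main obstacle I anticipate is in the backward direction for groups of size greater than one: the center and its \uleaf children inside the same group share an ancestor set and hence have rows of identical support in $\bW^{ME}$, so only explicit identification of the center breaks this symmetry and permits the column-to-noise matching, and with it the inversion argument, to proceed.
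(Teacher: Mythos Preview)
Your proposal is correct and follows essentially the same approach as the paper: both directions match the paper's argument (coefficients $\Rightarrow$ structure $\Rightarrow$ centers for the forward direction; extracting the \nuleaf submatrix as $(\bI-\bC)^{-1}$, normalizing and inverting to recover $\bC$, then solving for $\bD$ for the backward direction). The only cosmetic differences are that you resolve the column permutation via column supports whereas the paper invokes the existence of a lower-triangular rearrangement (equivalent under Assumption~\ref{assumption:me_faithfulness}(a)), and you transfer to SEM-UR through Theorem~\ref{thm:main_equivalence} rather than repeating the argument directly; note, however, that the residual permutation/scaling of $\bB$ is the full relabel-and-rescale freedom over \emph{all} latent variables, not only those sharing a descendant set.
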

\begin{proof}
``{\myuline{only-if direction}}'': If all the coefficients are identified, i.e., matrices $\bC, \bD$ in Equation \eqref{eq:ME_matrix_form} (and matrices $\bA, \bB$ in Equation \eqref{eq:SEM-UR_matrix_form}) are identified, then the structure of the model is identified. Therefore, we can identify ancestral/direct ordered grouping of the model and find the center of each ancestral/direct ordered group.

``{\myuline{if direction}}'': For SEM-ME, given the identified centers in each ordered group, we compute the coefficients of the model from the mixing matrix $\WME$ as follows. 
According to the definition of AOG/DOG, only the first $p_n$ ordered groups can include more than one element; recall that $p_n$ denotes the number of \nuleaf nodes. 
%
Therefore, for any possible selection of the centers in each group, consider the submatrix of $\WME$ where the rows correspond to the first $p_n$ selected centers. This submatrix can be permuted to a lower-triangular matrix with non-zero diagonal, and maps to $(\bI-\bC)^{-1}$ in Equation \eqref{eq:SEM-ME_mixing_matrix}; recall that $\bC$ represents the edges among the first $p_n$ centers (i.e., the \nuleaf nodes in the model). That is, $\bC$ can be identified by first normalizing the columns of $\WME$, and then taking the inverse of the submatrix that corresponds to the identified centers. Note that the lower-triangular matrix (after permutation) must have diagonal one. If the diagonal entry has value not equal to one, then this number represents the noise coefficient, and can be removed by column normalization. 
Matrix $\bD$, which represents the edges from \nuleaf nodes to \uleaf nodes can be deduced from remaining submatrix of $\WME$ that maps to $\bD(\bI-\bC)^{-1}$. Therefore, all coefficients can be identified.

For SEM-UR, given the exogenous noise terms of the identified centers in each ordered group, we use a similar method to compute the coefficients of the model from the mixing matrix $\WUR$. Specifically, $\bA$ in Equation \eqref{eq:SEM-UR_matrix_form} can be identified using the submatrix of $\WUR$ where the columns correspond to the last $q$ selected centers ($q$ denotes the number of observed variables). Matrix $\bB$ can be deduced from the remaining submatrix of $\WUR$. Note that permuting and normalizing the columns of the submatrix that maps to $(\bI-\bA)^{-1}$ does not affect the the submatrix that maps to $\bB(\bI-\bA)^{-1}$. This means that the true permutation and scaling of $\bB$ cannot be identified.
\end{proof}


In the remainder of the proof of Theorem \ref{thm:aog}, we focus on linear SEM-ME; the proof for linear SEM-UR follows similarly. The proof of the remaining assertions consists of two parts. In the first part, we show that models in the same AOG equivalence class satisfy Proposition \ref{prop:aog_property} and Assumption \ref{assumption:me_faithfulness}\emph{(a)}. That is,
given the mixing matrix $\WME$, under different choices of the center in each ancestral ordered group, the alternative model (1) satisfies Assumption \ref{assumption:me_faithfulness}\emph{(a)}, and (2) preserves the causal order among the ancestral ordered groups. 
In the second part, we show that any alternative model that has the same mixing matrix as the ground-truth but does not belong to the same AOG equivalence class violates Assumption \ref{assumption:me_faithfulness}\emph{(a)}. We can see from the proof of Lemma \ref{lemma:coefficient} that the model can be identified once all the \nuleaf nodes (i.e., variables corresponding to the rows of $\bC$) are
identified. Therefore, we show that, if at least one of the first $p_n$ ancestral ordered groups in the ground-truth does not include any of the \nuleaf nodes in the alternative model, then the alternative model violates Assumption \ref{assumption:me_faithfulness}\emph{(a)}. 

We use $Pa_M(V_i)$ and $An_M(V_i)$ to denote the parent set and ancestor set of variable $V_i$ in model $M$ throughout the remaining of the proof.



\textbf{Proof of the first part}: 

\emph{Proof of (1)}: 
Note that Assumption \ref{assumption:me_faithfulness}\emph{(a)} (i.e., conventional faithfulness assumption) is equivalent to the following: If $V_i$ is an ancestor of $V_j$, then the support of the row of $\WME$ corresponding to $V_i$ is a subset of the support of the row of $\WME$ corresponding to $V_j$. Further, the converse is also true given that $V_i$ is an \nuleaf node. 




Let $M$ denote the ground-truth, and let $M'$ denote an alternative model that belongs to the same AOG equivalence class as $M$. We first show that if some center nodes in $M$ are swapped with other nodes in their respective ancestral ordered groups, then an added edge(s) to any node $V$ in $M'$ can only originate from variables in $An_M(V)$. To show this, suppose that $V_{i_1},\cdots, V_{i_m}$ represent some centers of the ancestral ordered groups in $M$, where $m\leq p_n$, and $\{i_1,\cdots,i_m\}$ is consistent with the causal order among the \nuleaf nodes in $M$. Now, the centers in $M'$ are changed from $V_{i_1},\cdots, V_{i_m}$ to $V_{j_1},\cdots, V_{j_m}$, where $V_{i_k}$ and $V_{j_k}$ are in the same ancestral ordered group, $\forall k\in [m]$. 


Consider the structural equations of $V_{i_1}$ (first changed center in $M$) and $V_{j_1}$ (the swapped center in $M'$), and $V_{k_1}$ which is any child of $V_{i_1}$ in $M$ (other than $V_{k_1}$). Since $V_{i_1}$ and $V_{j_1}$ belong to the same ancestral ordered group, if follows by the definition of AOG that parents of $V_{j_1}$, except for $V_{i_1}$, are also ancestors of $V_{i_1}$ in $M$. Therefore the structural equations are:
\begin{align*}
    V_{i_1} &= \sum_{l:V_l\in Pa_{M}(V_{i_1})}a_{i_1l} V_l + N_{V_{i_1}}, \\
    V_{j_1} &= a_{j_1i_1}V_{i_1} + \sum_{l_j:V_{l_j}\in An_{M}(V_{i_1})}a_{j_1l_j} V_{l_j}, \\
    V_{k_1} &= a_{k_1i_1}V_{i_1} + \sum_{l_k:V_{l_k}\in Pa_{M}(V_{k_1})}a_{k_1l_k} V_{l_k} + N_{V_{k_1}}.
\end{align*}

Now, we choose $V_{j_1}$ to be the center of the corresponding ordered group in $M'$ (replacing the center $V_{i_1}$ in $M$). Now, in $M'$, $V_{i_1}$ is a child of $V_{j_1}$ and is a \uleaf node. Also, the children of $V_{i_1}$ in $M$ are now children of $V_{j_1}$ in $M'$.
Therefore, the structural equations of $V_{i_1}$, $V_{j_1}$ and $V_{k_1}$ in $M'$ are
\begin{align*}
    V_{j_1} &= \sum_{l_j:V_{l_j}\in An_{M}(V_{i_1})}a_{j_1l_j} V_{l_j}+ \sum_{l:V_l\in Pa_{M}(V_{i_1})} a_{j_1i_1} a_{i_1l} V_l + a_{j_1i_1} N_{V_{i_1}}, \\
    V_{i_1} &= a^{-1}_{j_1i_1}V_{j_1} + \sum_{l_j:V_{l_j}\in An_{M}(V_{i_1})}a^{-1}_{j_1i_1} a_{j_1l_j} V_{l_j},\\
    V_{k_1} &= a_{k_1i_1}\left(a^{-1}_{j_1i_1}V_{j_1} + \sum_{l_j:V_{l_j}\in An_{M}(V_{i_1})}a^{-1}_{j_1i_1} a_{j_1l_j} V_l \right) + \sum_{l_k:V_{l_k}\in Pa_{M}(V_{k_1})}a_{k_1l_k} V_{l_k} + N_{V_{k_1}}.
\end{align*}
Comparing both sets of structural equations, we see that the additional edges to $V_{j_1}$ and $V_{k_1}$ in $M'$ originate from variables in $An_{M}(V_{i_1})$. Since $V_{i_1}$ is a parent of both, this means that variables in $An_{M}(V_{i_1})$ are also ancestors of $V_{j_1}$ and $V_{k_1}$. This further implies that the ancestor sets of all variables remain the same after this change, i.e., if $V_i$ is an ancestor of $V_j$ in $M$, then $V_i$ (or the node that replaces $V_i$) is an ancestor of $V_j$ (or the node that replaces $V_j$) in $M'$. By repeating the same procedure, i.e., swapping $V_{i_k}$ with $V_{j_k}$ for $k=2,3,\cdots, m$, the resulting model $M'$ will only add edges from variables in $An_M(V)$ to $V$, $\forall V\in \setY\cup \setZ$.

Next, we show, by contradiction, that if $M$ satisfies Assumption \ref{assumption:me_faithfulness}\emph{(a)}, then $M'$ also satisfies Assumption \ref{assumption:me_faithfulness}\emph{(a)}. Suppose $M'$ violates Assumption \ref{assumption:me_faithfulness}\emph{(a)}. This means that there exist $V_0$ and $V$ such that $V_0$ is an ancestor of $V$ in $M'$ but the support of $V_0$ (i.e., its corresponding row in $\WME$) is not a subset of the support of $V$. Note that since $V_0$ is an \nuleaf node in $M'$, hence it is a center node of some ancestral ordered group in $M'$. Let $V'_{0}$ denote the center of the ancestral ordered group of $V_0$ in $M$ (which can be $V_0$ itself). Note that $V_0$ has the same support as $V'_{0}$ since they belong to the same ancestral ordered group. Therefore, the support of $V'_0$ is not a subset of the support of $V$. Since $M$ satisfies Assumption \ref{assumption:me_faithfulness}\emph{(a)}, this means that $V'_0$ is not an ancestor of $V$ in $M$. However, in obtaining $M'$ from $M$ (i.e., by switching some centers in $M$), the edges added to $V$ in $M'$ only come from ancestors of $V$ in $M$. This means that $V_0$ cannot be an ancestor of $V$ in $M'$, which leads to a contradiction. Therefore, $M'$ satisfies Assumption \ref{assumption:me_faithfulness}\emph{(a)}.

\emph{Proof of (2)}: Given that the alternative model $M'$ satisfies Assumption \ref{assumption:me_faithfulness}\emph{(a)}, we show that $M'$ preserves the causal order among the ancestral ordered groups.
We show this by contradiction. Suppose that a variable $V_0$ in a later group causes another variable $V$ in an earlier group in $M'$; according to the causal order among the groups in $M$. Note that $V_0$ is an \nuleaf node in $M'$ since it causes $V$, and hence it is the center of some ancestral ordered group in $M'$. Let $V'_0$ be the center of the corresponding group in $M$, with an associated exogenous noise term denoted by $N_{V'_0}$. Since $V_0$ and $V_0'$ belong to the same ancestral ordered group, 
$N_{V'_0}$ must belong to the support of $V_0$. However, since $V$ is in an earlier group than $V_0$ in $M'$, $N_{V'_0}$ is not in the support of $V$. Therefore the support of $V_0$ is not a subset of the support of $V$, and hence, $M'$ violates Assumption \ref{assumption:me_faithfulness}\emph{(a)}.

\textbf{Proof of the second part}: 

If the alternative model $M'$ does not belong to the AOG equivalence class of the ground-truth $M$, then at least one of the first $p_n$ ancestral ordered groups in $M$ does not include any of the \nuleaf nodes in $M'$. Denote the center of this ordered group as $V_i$, and let $N_{V_i}$ denote its exogenous noise. Since $V_i$ is a \uleaf node in $M'$, there is at least one parent of $V_i$ in $M'$ that also includes $N_{V_i}$ (i.e., is affected by $N_{V_i}$ directly or indirectly). Denote this variable as $V_j$. Since $V_j$ includes $N_{V_i}$, it must be a descendant of $V_i$ in $M$. As $M$ satisfies Assumption \ref{assumption:me_faithfulness}\emph{(a)}, this means that the support of $V_j$ (in $\WME$) is a superset of the support of $V_i$. Further, since the ancestral ordered group of $V_i$ does not include any \nuleaf nodes in $M'$,
$V_j$ does not belong to the ancestral ordered group of $V_i$. This means that the support of $V_j$ (in $\WME$) is not the same as the support of $V_i$. Therefore, the support of $V_j$ (in $\WME$) is a strict superset of the support of $V_i$. Since $V_j$ is a parent of $V_i$ in $M'$, this violates Assumption \ref{assumption:me_faithfulness}\emph{(a)} on the alternative model $M'$.

\textbf{Conclusion}: We showed that by under different choices of the centers in the ancestral ordered groups, we can always find a model $M'$ that: (1) has the same mixing matrix; (2) satisfies Assumption \ref{assumption:me_faithfulness}\emph{(a)}; (3) preserves the causal order among ancestral ordered groups. Thus, we cannot distinguish $M'$ from the ground-truth under Assumption \ref{assumption:me_faithfulness}\emph{(a)} alone. Further, we show that any model that have the same mixing matrix but does not belong to the AOG equivalence class does violate Assumption \ref{assumption:me_faithfulness}\emph{(a)} and can be distinguished from the ground-truth $M$. Therefore the extent level of identifiability can be characterized by AOG equivalence class.

\subsection{Proof of Theorem \ref{thm:DOG}}
To prove Theorem \ref{thm:DOG}, we need to prove Propositions \ref{prop:dog_property} and \ref{proposition:fewest_edges}, and that the DOG equivalence class is the extent level of identifiability under Assumption \ref{assumption:me_faithfulness}. The proof can be divided into two parts. We first show the necessity part, that is, models in the same DOG equivalence class as the ground-truth satisfy Assumption \ref{assumption:me_faithfulness} and Proposition \ref{prop:dog_property}, hence cannot be distinguished from the ground-truth. We then show the sufficiency part, i.e., any model that has the same mixing matrix but does not belong to the DOG equivalence class of the ground-truth violates Assumption \ref{assumption:me_faithfulness}.

\subsubsection{Proof of necessity}
We have shown in Lemma \ref{lemma:coefficient} that models in the same DOG equivalence class can be identified by the choice of centers of the direct ordered groups. In the following, given the ground-truth model $M$ and a certain choice of the centers that corresponds to an alternative model $M'$, where $M'$ belongs to the same DOG equivalence class as $M$, we show that in the alternative model: (1) No edges are added; (2) No edges are removed; (3) Faithfulness is preserved. Therefore we cannot distinguish the alternative model from the ground truth.

Suppose that $V_{i_1},\cdots, V_{i_m}$ represent some centers of the direct ordered groups in the ground-truth $M$, where $m\leq p_n$, and $\{i_1,\cdots,i_m\}$ is consistent with the causal order among the \nuleaf nodes in $M$. Now, the centers in $M'$ are changed from $V_{i_1},\cdots, V_{i_m}$ to $V_{j_1},\cdots, V_{j_m}$, where $V_{i_k}$ and $V_{j_k}$ are in the same direct ordered group, $\forall k\in [m]$. It follows by the definition of DOG that the edge $V_{i_k}\rightarrow V_{j_k}$ violates Condition \ref{condition:me-id}, $\forall k\in [m]$. 


Note that according to Condition \ref{condition:me-id}, we have the following two properties: (1) For each \nuleaf node $V$ and each \uleaf node $V'$ that belongs to the same direct ordered group as $V$, $Pa(V')\setminus \{V\}\subseteq Pa(V)$. (2) All \uleaf nodes that belong to the same direct ordered group have the same parent set; otherwise, Condition \ref{condition:me-id} will be satisfied and the \uleaf nodes cannot belong to the same direct ordered group.

Lastly, for any model $M$, any set of variables $\setV$ and variable $V_j$, let $\TE_{M}(\setV, V_j)$ denote the vector of total causal effects from variables in $\setV$ to $V_j$ in model $M$. Note that the total effect from a variable $V_i$ to $V_j$ in $M_0$ is equal to the entry in $\WME$ with row corresponding to $V_j$ and column corresponding to $N_{V_i}$. Since $M$ and $M'$ has the same mixing matrix, if $N_{V_i}$ is the exogenous noise term of another variable $V'_i$ in $M'$, then the total effect from a variable $V_i$ to $V_j$ in $M$ is equal to the total effect from $V'_i$ to $V_j$ in $M'$, i.e., $\TE_{M_0}(\{V_i\}, V_j)=\TE_{M'}(\{V'_i\}, V_j)$.

\textbf{Proof of (1)}: Consider a set of models $M_0,M_1,\cdots, M_m$, where $M_0=M$, and $M_k$ changes the centers $V_{i_1}\cdots,V_{i_k}$ in $M_0$ to $V_{j_1}\cdots,V_{j_k}$; $k\in [m]$. 
In the following, we show that for all $k\in [m]$, obtaining $M_k$ from $M_0$ as described above does not add extra edges in $M_k$. 

We start with $k=1$. The structural equations of $V_{i_1}$ and $V_{j_1}$ in $M_0$ are:
\begin{align}
    V_{i_1} &= \sum_{l:V_l\in Pa_{M_0}(V_{i_1})} a_{i_1l} V_l + N_{V_{i_1}}\label{eq:proof_zi}, \\
    V_{j_1} &= \sum_{l_j:V_{l_j} \in Pa_{M_0}(V_{j_1})\setminus\{V_{i_1}\}}
    a_{j_1l_j} V_{l_j} + a_{j_1i_1}V_{i_1}, \label{eq:proof_zj}
\end{align}
Further, the structural equation of any child $V_{k_1}$ of $V_{i_1}$ in $M_0$ is:
\begin{align}
    V_{k_1} &= \sum_{l_k: V_{l_k} \in Pa_{M_0}(V_{k_1})\setminus\{V_{i_1}\}}
    a_{k_1l_k} V_{l_k} + a_{k_1i_1}V_{i_1} + N_{V_{k_1}}.\label{eq:proof_zk}
\end{align}
Note that $N_{V_{k_1}}=0$ if and only if $V_{k_1}$ is a \uleaf node.

In $M_1$, since we only swap $V_{i_1}$ and $V_{j_1}$, the structural equations of all variables in $\mathcal{Z}\cup \mathcal{Y}\setminus \{V_{i_1}, V_{j_1}\} \cup Ch_{M_0}(V_{i_1}) $ remain the same as those in $M_0$. 
From \eqref{eq:proof_zi} and \eqref{eq:proof_zj}, the structural equation of $V_{j_1}$ in $M_1$: 
\begin{align}
    V_{j_1} &= \sum_{l_j:V_{l_j} \in Pa_{M_0}(V_{j_1})\setminus\{V_{i_1}\}}
    a_{j_1 l_j} V_{l_j} + a_{j_1i_1}\left(
    \sum_{l:V_l\in Pa_{M_0}(V_{i_1})} a_{i_1l} V_l + N_{V_{i_1}}
    \right) \nonumber \\
    &= \sum_{l:V_l\in Pa_{M_0}(V_{i_1})} a_{j_1i_1} a_{i_1l} V_l + \sum_{l_j:V_{l_j} \in Pa_{M_0}(V_{j_1})\setminus\{V_{i_1}\}}
    a_{j_1 l_j} V_{l_j} +
    a_{j_1i_1}N_{V_{i_1}} \nonumber.
\end{align}
Since the edge $V_{i_1}\rightarrow V_{j_1}$ violates Condition \ref{condition:me-id}(a), $Pa_{M_0}(V_{j_1})\setminus\{V_{i_1}\} \subseteq Pa_{M_0}(V_{i_1})$. By defining $a'_{j_1l}\triangleq a_{j_1i_1} a_{i_1l}+ a_{j_1l}\mathbf{1} [V_l\in Pa_{M_0}(V_{j_1})]$ for all $V_l\in Pa_{M_0}(V_{i_1})$, we can re-write $V_{j_1}$ as
\begin{equation}
    V_{j_1} = \sum_{V_l\in Pa_{M_0}(V_{i_1})} a'_{j_1l} V_l + a_{j_1 i_1} N_{V_{i_1}}.
    \label{eq:proof_zjnew}
\end{equation}
Next, from \eqref{eq:proof_zj}, the structural equation of $V_{i_1}$ in $M_1$ is:
\begin{equation}
    V_{i_1} = \frac{1}{a_{j_1i_1}} V_{j_1} - \sum_{l_j:V_{l_j} \in Pa_{M_0}(V_{j_1})\setminus\{V_{i_1}\}}
    \frac{a_{j_1l_j}}{a_{j_1 i_1}} V_{l_j}.
    \label{eq:proof_zinew}
\end{equation}
Lastly, by substituting $V_{i_1}$ in \eqref{eq:proof_zk} by \eqref{eq:proof_zinew}, the structural equation of $V_{k_1}$ in $M_1$ is:
\begin{align}
    V_{k_1} &= \sum_{l_k: V_{l_k} \in Pa_{M_0}(V_{k_1})\setminus\{V_{i_1}\}}
    a_{k_1l_k} V_{l_k} + a_{k_1 i_1}\left( \frac{1}{a_{j_1i_1}} V_{j_1} - \sum_{l_j:V_{l_j} \in Pa_{M_0}(V_{j_1})\setminus\{V_{i_1}\}}
    \frac{a_{j_1l_j}}{a_{j_1 i_1}} V_{l_j} \right) + N_{V_{k_1}}. \nonumber 
\end{align}
Since $V_{i_1}\rightarrow V_{j_1}$ violates Condition \ref{condition:me-id}(b), $Pa_{M_0}(V_{j_1})\subseteq Pa_{M_0}(V_{k_1})$. Therefore, by defining $a'_{k_1l_k} \triangleq a_{k_1l_k} - \frac{a_{k_1i_1}a_{j_1l_k}}{a_{j_1i_1}} \mathbf{1} [V_{l_k}\in Pa_{M_0}(V_{j_1})]$, we can re-write $V_{k_1}$ as
\begin{equation}
    V_{k_1} = \sum_{l_k: V_{l_k} \in Pa_{M_0}(V_{k_1})\setminus\{V_{i_1}\}}
    a'_{k_1l_k} V_{l_k} + \frac{a_{k_1i_1}}{a_{j_1i_1}} V_{j_1} + N_{V_{k_1}}.
    \label{eq:proof_zknew}
\end{equation}
Recall that in $M_1$, the structural equations of all underlying variables in $\mathcal{Z}\cup\setY\setminus \{V_{i_1}, V_{j_1}\} \cup Ch_{M_0}(V_{i_1}) $ are the same as $M_0$. Further, the structural equations of $V_{i_1}$, $V_{j_1}$ and any $V_{k_1}\in Ch_{M_0}(V_{i_1})$ in \eqref{eq:proof_zi}-\eqref{eq:proof_zk} are replaced by \eqref{eq:proof_zjnew}-\eqref{eq:proof_zknew}. Thus, we have $Pa_{M_1}(V_{j_1})= Pa_{M_0}(V_{i_1})$, $Pa_{M_1}(V_{i_1}) = \{V_{j_1}\}\cup Pa_{M_0}(V_{j_1})\setminus \{V_{i_1}\} $, and $Pa_{M_1}(V_{k_1}) = \{V_{j_1}\}\cup Pa_{M_0}(V_{k_1})\setminus\{V_{i_1}\}$. Therefore no edge will be added in $M_1$, and $V_{i_g}\rightarrow V_{j_g}$ still violates Condition \ref{condition:me-id} in $M_1$ for all $g=[2:m]$. 

Note that $Pa_{M_1}(V_{j_1})$ (resp. $Pa_{M_1}(V_{k_1})$) may be a subset of $Pa_{M_0}(V_{i_1})$ (resp. $\{V_{j_1}\}\cup Pa_{M_0}(V_{k_1})\setminus\{V_{i_1}\}$) due to certain parameter cancellation in $a'_{j_1 l}$ (resp. $a'_{k_1l_k}$). However, it does not affect our conclusion here as we show that no edges are added in $M_{1}$. We will show in Part (2) that if this parameter cancellation happens (at any point of the procedure of swapping centers), then the model violates faithfulness assumption.

Next, consider $k=2$. Repeating the same procedure of swapping centers, the structural equations of $M_2$ can be obtained from $M_1$.
Similarly, we have
$Pa_{M_2}(V_{j_2})= Pa_{M_1}(V_{i_2})$, $Pa_{M_2}(V_{i_2}) = \{V_{j_2}\}\cup Pa_{M_1}(V_{j_2})\setminus \{V_{i_2}\} $, and $Pa_{M_2}(V_{k_2}) = \{V_{j_2}\}\cup Pa_{M_1}(V_{k_2})\setminus\{V_{i_2}\}$. Therefore, no edge will be added when obtaining $M_2$ from $M_0$.

By repeating the same procedure for $k=3,\cdots, m$, we conclude that obtaining the final model $M_m$ from $M_0$ does not add additional edges in $M_m$ compared to $M_0$.

\textbf{Proof of (2)}: Following the procedure of swapping centers described in Part (1), we show that if any edge from $M_0$ is removed in $M_m$, then the ground-truth model $M_0$ violates Assumption \ref{assumption:me_faithfulness}. Suppose the edge from $V_0$ to $V$ in $M_0$ is removed in $M_m$. Here ``edge removal'' means that there is an edge from $V_0$ to $V$ in $M_0$, but there is no edge from $V_0$ (or the node that replaces $V_0$) to $V$ (or the node that replaces $V$) in $M_m$.
Note that $V_0$ is an \nuleaf node in $M_0$, which is not necessarily the case for $V$. Hence we consider each of the following cases:
\begin{enumerate}[(i), leftmargin=*]
\item $V$ is not swapped with any other nodes, i.e., $V\not\in \{V_{i_1},\cdots,V_{i_m},V_{j_1}\cdots,V_{j_m}\}$. This means that $V_0$ is a parent of $V$ in $M_0$, but $V_0$ (or the node that replaces $V_0$) is not a parent of $V$ in $M_m$.
\item $V=V_{i_k}$ for some $k\in [m]$. This means that $V_0$ is a parent of a center node $V_{i_k}$ in $M_0$, but $V_0$ (or the node that replaces $V_0$) is not a parent of the center node $V_{j_k}$ in $M_m$.
\item $V=V_{j_k}$ for some $k\in [m]$. This means that $V_0$ is a parent of a non-center node $V_{j_k}$ in $M_0$, but $V_0$ (or the node that replaces $V_0$) is not a parent of the non-center node $V_{i_k}$ in $M_m$.
\end{enumerate}

\emph{Case (i).} Suppose $V$ is not swapped with any other nodes. Since the edge from $V_0$ to $V$ is removed in $M_m$, $V$ has at least one less parent in $M_m$ than in $M_0$. This means that
the total effects of $An_{M_m}(V)$ on $V$, i.e., $\TE_{M_m}(An_{M_m}(V), V)$, can be written as a linear combination of the total effects of $An_{M_m}(V)$ on parents of $V$ in $M_m$, i.e., vectors in $\{\TE_{M_m}(An_{M_m}(V), V'): V'\in Pa_{M_m}(V)\}$.
Further, following the procedure of swapping centers, all parents of $V$ in $M_m$ are possible parents of $V$ in $M_0$. Note that $\TE_{M_m}(An_{M_m}(V), V)=\TE_{M_0}(An_{M_0}(V), V)$, and for any possible parents $V'$ of $V$ in $M_0$, $\TE_{M_m}(An_{M_m}(V), V')=\TE_{M_0}(An_{M_0}(V), V')$. Therefore, $V$ violates Assumption \ref{assumption:me_faithfulness}\emph{(b1)} on $M_0$.



\emph{Case (ii).} Suppose $V=V_{i_k}$ for some $k\in [m]$. Since the edge from $V_0$ to $V_{j_k}$ is removed in $M_m$, $V_{j_k}$ has at least one less parent in $M_m$ than $V_{i_k}$ has in $M_0$.
This means that $\TE_{M_m}(An_{M_m}(V_{j_k}), V_{j_k})$ can be written as a linear combination of at most $|Pa_{M_0}(V_{i_k})|-1$ vectors in $\{\TE_{M_m}(An_{M_m}(V_{j_k}), V'): V'\in Pa_{M_m}(V_{j_k})\}$.
Note that $|Pa_{M_0}(V_{i_k})\cup Pa_{M_0}(V_{j_k})|-1=|Pa_{M_0}(V_{i_k})|$ since $Pa_{M_0}(V_{j_k})\setminus \{V_{i_k}\}$ is a subset of $Pa_{M_0}(V_{i_k})$.
Further, all parents of $V_{j_k}$ in $M_m$ are possible parents of $V_{i_k}$ in $M_0$.
Note that $\TE_{M_0}(An_{M_0}(V_{j_k})\setminus \{V_{i_k}\}, V_{j_k})=\TE_{M_m}(An_{M_m}(V_{j_k}), V_{j_k})$, and for any possible parents $V'$ of $V_{i_k}$ in $M_0$, $\TE_{M_0}(An_{M_0}(V_{j_k})\setminus \{V_{i_k}\}, V')=\TE_{M_m}(An_{M_m}(V_{j_k}), V')$. Therefore, $V_{j_k}$ violates Assumption \ref{assumption:me_faithfulness}\emph{(b2)} on $M_0$.



\emph{Case (iii).} Suppose $V=V_{j_k}$ for some $k\in [m]$. Note that following the procedure of swapping centers, for any $g>k$, replacing $V_{i_g}$ by $V_{j_g}$ does not change the structural equations of $V_{i_k}$ and $V_{j_k}$. Therefore, if $V_0$ (or the node that replaces $V_0$) is not a parent of $V_{i_k}$ in $M_m$, then it is not a parent of $V_{i_k}$ in $M_k$. Further, according to Equations \eqref{eq:proof_zj} and \eqref{eq:proof_zinew}, when replacing $V_{j_k}$ by $V_{i_k}$, the parent set of $V_{i_k}$ in $M_k$, excluding $V_{j_k}$, is exactly the same as the parent set of $V_{j_k}$ before the swapping (i.e., in $M_{k-1}$) excluding $V_{i_k}$. Therefore, $V_0$ (or the node that replaces $V_0$) is not a parent of $V_{j_k}$ in $M_{k-1}$, while $V_0$ is a parent of a $V_{j_k}$ in $M_0$. Recall that $V_{j_k}$ is not swapped with any other nodes from $M_0$ to $M_{k-1}$. Therefore this reduces to the Case (i) above, which indicates that $V_{j_k}$ violates Assumption \ref{assumption:me_faithfulness}\emph{(b1)} in $M_0$.


In conclusion, if the ground-truth satisfies Assumption \ref{assumption:me_faithfulness}, then the procedure of swapping centers described in Part (1) does not remove any edge from $M_0$.

\textbf{Proof of (3)}: From Parts (1) and (2), we conclude that the structure of the new model $M_m$ is isomorphic to $M_0$.
Specifically, we can construct a mapping $\sigma$ from the nodes in $M_0$ to the nodes in $M_m$, where 
\begin{align}
&\sigma(V) = V,\quad V\in \setZ\cup\setY\setminus\{V_{i_{1}},\cdots, V_{i_{m}},V_{j_{1}},\cdots, V_{j_{m}}\};\nonumber \\
&\sigma(V_{i_k})= V_{j_k},\quad \sigma(V_{j_k})= V_{i_k},\quad \forall k\in [m] \nonumber.
\end{align}
Under this mapping, there is an edge from node $V'$ to $V''$ in $M_0$ if and only if this is an edge from $\sigma(V')$ to $\sigma(V'')$ in $M_m$, for all $ V',V''\in \setZ\cup\setY$. 
Note that this mapping only switches pairs of nodes that belong to the same direct ordered group. This means that $M_m$ has the same edges across groups, which completes the proof of Proposition \ref{prop:dog_property}.

In the following, we show that if $M_0$ satisfies Assumption \ref{assumption:me_faithfulness}, then $M_m$ satisfies Assumption \ref{assumption:me_faithfulness}. Note that since the DOG is a more refined partition of AOG, according to the proof of Theorem \ref{thm:aog}, Assumption \ref{assumption:me_faithfulness}\emph{(a)} is satisfied. Therefore we only need to verify Assumption \ref{assumption:me_faithfulness}\emph{(b)}. For each variable $V$, we consider each of the three following cases:

\begin{enumerate}[(i), leftmargin=*]
\item $V\not\in \{V_{i_1},\cdots,V_{i_m},V_{j_1}\cdots,V_{j_m}\}$, i.e., $V$ is not changed from $M_0$ to $M_m$.
\item $V=V_{i_k}$ for some $k\in [m]$, i.e., $V$ is a center node in $M_0$ and is changed to a non-center node in $M_m$.
\item $V=V_{j_k}$ for some $k\in [m]$, i.e., $V$ is a non-center node in $M_0$ and is changed to a center node in $M_m$.
\end{enumerate}

\emph{Case (i).} Suppose $V\not\in \{V_{i_1},\cdots,V_{i_m},V_{j_1}\cdots,V_{j_m}\}$. Note that any possible parent of $V$ in $M_0$ is also a possible parent of $V$ in $M_m$, and for all $V'\in \{\text{Possible parents of } V\text{ in }M_m\}\cup \{V\}$, $\TE_{M_0}(An_{M_0}(V), V')=\TE_{M_m}(An_{M_m}(V), V')$. Further, if the parent set of any \uleaf node $V'$ in $M_0$ is a subset of $Pa_{M_0}(V)$, then the parent set of any \uleaf node in this direct ordered group in $M_m$ is a subset of $Pa_{M_m}(V)$. Therefore Assumption \ref{assumption:me_faithfulness}\emph{(b1)} holds for $V$ on $M_m$. 

Similarly, for Assumption \ref{assumption:me_faithfulness}\emph{(b2)}, for each parent $V_0$ of \uleaf node $V$ in $M_m$, denote $V'_0$ as the center of the direct ordered group of $V_0$ in $M_m$. Then any possible parent of $V'_0$ in $M_0$ is also a possible parent of $V_0$ in $M_m$, and for all $V'\in \{\text{Possible parents of } V_0\text{ in }M_m\}\cup \{V\}$, $\TE_{M_0}(An_{M_0}(V)\setminus \{V'_0\}, V')=\TE_{M_m}(An_{M_m}(V)\setminus \{V_0\}, V')$. Therefore Assumption \ref{assumption:me_faithfulness}\emph{(b2)} holds for $V$ on $M_m$.

\emph{Case (ii).} Suppose $V=V_{i_k}$ for some $k\in [m]$. Since $V_{i_k}$ is a \nuleaf node in $M_0$, $\TE_{M_0}(An_{M_0}(V_{i_k}), V_{i_k})=\TE_{M_m}(An_{M_m}(V_{i_k})\setminus \{V_{j_k}\}, V_{i_k})$. Further, any possible parent of $V_{i_k}$ in $M_0$ is a possible parent of $V_{j_k}$ in $M_m$, and $|Pa_{M_0}(V_{i_k})|=|Pa_{M_m}(V_{j_k})|=|Pa_{M_m}(V_{j_k})\cup Pa_{M_m}(V_{i_k})\setminus \{V_{j_k}\}|$. Besides, if the parent set of any \uleaf node $V'$ in $M_0$ is a subset of $Pa_{M_0}(V_{i_k})$, then the parent set of any \uleaf node in this direct ordered group in $M_m$ is also a subset of $Pa_{M_m}(V_{j_k})$. Therefore, since Assumption \ref{assumption:me_faithfulness}\emph{(b1)} holds for $V_{i_k}$ on $M_0$, Assumption \ref{assumption:me_faithfulness}\emph{(b2)} holds for $V_{i_k}$ on $M_m$. 

As for Assumption \ref{assumption:me_faithfulness}\emph{(b1)}, note that the set of possible parents of $V_{i_k}$ in $M_m$ includes all other variable in the same ancestral ordered group as $V_{i_k}$ in $M_0$, and is not the same as the set of possible parents of $V_{i_k}$ in $M_0$. If $\TE_{M_m}(An_{M_m}(V_{i_k}), V_{i_k})$ can be written as a linear combination of $k\leq|Pa_{M_m}(V_{i_k})|$ vectors from the set $\{\TE_{M_m}(An_{M_m}(V_{i_k}), V'): V' \text{ is a possible parent of }V_{i_k}\text{ in }M_0\}$, denote the set of the possible parents that corresponds to these vectors as $\setK$. Then at least one of the variables in $\setK$ belongs to the ancestral ordered group of $V_{i_k}$ in $M_0$. Denote this variable as $V$. This means that $\TE_{M_m}(An_{M_m}(V_{i_k}), V)$ can be written as a linear combination of $k$ vectors which includes $\TE_{M_m}(An_{M_m}(V_{i_k}), V_{i_k})$ and $\{\TE_{M_m}(An_{M_m}(V_{i_k}), V'): V' \in \setK\setminus\{V\}\}$. Since $V$ is a \uleaf node in $M_0$, any variable in $\setK\setminus\{V\}$ is a possible parent of $V$ in $M_0$, and $\TE_{M_m}(An_{M_m}(V_{i_k}), V')=\TE_{M_0}(An_{M_0}(V), V')$ for all $V'\in \setK\cup \{V_{i_k}\}$. This means that that $\TE_{M_0}(An_{M_0}(V), V)$ can be written as a linear combination of $k$ vectors which includes $\TE_{M_0}(An_{M_0}(V), V_{i_k})$ and $\{\TE_{M_0}(An_{M_0}(V), V'): V' \in \setK\setminus\{V\}\}$. Besides, according to our condition, $|Pa_{M_m}(V_{i_k})|\leq |Pa_{M_0}(V)|$ (otherwise $V_{i_k}$ does not belong to the same direct ordered group as $V_{j_k}$). Therefore, since Assumption \ref{assumption:me_faithfulness}\emph{(b1)} holds for $V$ on $M_0$, $|Pa_{M_0}(V)|=|Pa_{M_m}(V_{i_k})|=k$, and the parent set of the variables in $\setK$ that are \uleaf nodes in $M_0$ must be a subset of $Pa_{M_0}(V)$. This implies that parent set of the variables in $\setK$ that are \uleaf nodes in $M_m$ must be a subset of $Pa_{M_m}(V_{i_k})$. Therefore, Assumption \ref{assumption:me_faithfulness}\emph{(b1)} holds for $V_{i_k}$ on $M_m$.

\emph{Case (iii).} Suppose $V=V_{j_k}$ for some $k\in [m]$. Since $V_{j_k}$ is a \uleaf node in $M_0$, $\TE_{M_m}(An_{M_m}(V_{j_k}), V_{j_k})=\TE_{M_0}(An_{M_0}(V_{j_k})\setminus \{V_{i_k}\}, V_{j_k})$. Further, any possible parent of $V_{i_k}$ in $M_0$ is a possible parent of $V_{j_k}$ in $M_m$, and $|Pa_{M_m}(V_{j_k})|=|Pa_{M_0}(V_{i_k})|$. Besides, if the parent set of any \uleaf node $V'$ in $M_0$ is a subset of $Pa_{M_0}(V_{i_k})$, then the parent set of any \uleaf node in this direct ordered group in $M_m$ is also a subset of $Pa_{M_m}(V_{j_k})$. Therefore, since Assumption \ref{assumption:me_faithfulness}\emph{(b2)} holds for $V_{j_k}$ on $M_0$, Assumption \ref{assumption:me_faithfulness}\emph{(b1)} holds for $V_{j_k}$ on $M_m$. 

To sum up, Assumption \ref{assumption:me_faithfulness}\emph{(b)} holds for every variable on $M_m$. Hence $M_m$ satisfies Assumption \ref{assumption:me_faithfulness}.

\textbf{Conclusion}: We showed that by replacing some centers in the ground-truth $M_0$ by some other nodes in the same direct ordered group, we can get an alternative model $M_m$ that (1) has the same mixing matrix, (2) has an isomorphic graph structure to $M_0$, and (3) satisfies Assumption \ref{assumption:me_faithfulness}. Therefore we cannot distinguish $M_0$ from any other model in the DOG equivalence class.

\subsubsection{Proof of sufficiency}
{To prove sufficiency, we show that any model that has the same mixing matrix as the ground-truth but does not belong to the same DOG equivalence class violates Assumption \ref{assumption:me_faithfulness}. Recall that Theorem \ref{thm:aog} implies that any model with the same mixing matrix as the ground-truth, but does not belong to its AOG equivalence class, violates Assumption \ref{assumption:me_faithfulness}\emph{(a)}. Consequently, it remains to prove that any model that belongs to the AOG equivalence class of the ground-truth, but does not belong to its DOG equivalence class, violates Assumption \ref{assumption:me_faithfulness}\emph{(b)}. Recall from Proposition \ref{prop:center} that models in the same AOG equivalence class can be identified by the choice of the centers in each ancestral ordered group. Therefore, we show that, given the ground-truth and the choice of the centers, where at least one selected center does not belong to the direct ordered group of the \uleaf node in the ground-truth: (1) at least one edge is added to the ground-truth; (2) no edge in the ground-truth is removed, and at least one added edge is not removed; (3) the alternative model violates Assumption \ref{assumption:me_faithfulness}\emph{(b)}. Therefore Proposition \ref{proposition:fewest_edges} holds. 

In the following, we refer to the centers as the centers of the ancestral ordered groups, unless otherwise stated.
}


Suppose that $V_{i_1},\cdots, V_{i_m}$ represent some centers of the ancestral ordered groups in the ground-truth $M$, where $m\leq p_n$, and $\{i_1,\cdots,i_m\}$ is consistent with the causal order among the \nuleaf nodes in $M$. Now, the centers in $M'$ are changed from $V_{i_1},\cdots, V_{i_m}$ to $V_{j_1},\cdots, V_{j_m}$, where $V_{i_k}$ and $V_{j_k}$ belong to the same ancestral ordered group, $\forall k\in [m]$. Further, at least one pair of $(V_{i_k},V_{j_k})$, $k\in[m]$ does not belong to the same direct ordered group.


\textbf{Proof of (1)}: Consider the same set of models $M_0,M_1,\cdots, M_m$ as in the necessity proof, where $M_0=M$, and $M_k$ changes the centers $V_{i_1}\cdots,V_{i_k}$ in $M_0$ to $V_{j_1}\cdots,V_{j_k}$; $k\in [m]$. In the following, we show that if at least one pair of $(V_{i_g},V_{j_g})$, $g\in [k]$ does not belong to the direct ordered group, then obtaining $M_k$ from $M_0$ as described above adds at least one extra edge. 

We start from $k=1$. We have shown that if $V_{j_1}$ and $V_{i_1}$ are in the same direct ordered group, then obtaining $M_1$ from $M_{0}$ does not add any edge to the model. Therefore we only focus on the case when $V_{j_1}$ and $V_{i_1}$ are in different direct ordered groups. The true structural equations of $V_{i_1}$ and $V_{j_1}$ and $V_{k_1}\in Ch_{M_0}(V_{i_1})$ in $M_0$ are the same as Equations \eqref{eq:proof_zi}-\eqref{eq:proof_zk} so we will not repeat them here. The structural equation of $V_{j_1}$ in $M_1$ is:
\begin{align}
    V_{j_1} 
    &= \sum_{l:V_l\in Pa_{M_0}(V_{i_1})} a_{j_1i_1} a_{i_1l} V_l + \sum_{l_j:V_{l_j} \in Pa_{M_0}(V_{j_1})\setminus\{V_{i_1}\}}
    a_{j_1 l_j} V_{l_j} +
    a_{j_1i_1}N_{V_{i_1}} \nonumber.
\end{align}
Since the edge $V_{i_1}\rightarrow V_{j_1}$ satisfies Condition \ref{condition:me-id}, $Pa_{M_0}(V_{j_1})\setminus\{V_{i_1}\}$ may not be a subset of $Pa_{M_0}(V_{i_1})$. Therefore, by defining $a'_{j_1l}\triangleq a_{j_1i_1} a_{i_1l}+ a_{j_1l}\mathbf{1} [V_l\in Pa_{M_0}(V_{j_1})]$ for all $V_l\in Pa_{M_0}(V_{i_1})$, and $\setS_{j_1}=Pa_{M_0}(V_{j_1})\setminus(\{V_{i_1}\} \cup Pa_{M_0}(V_{i_1}))$, the above equation can be written as
\begin{equation}
    V_{j_1} = \sum_{V_l\in Pa_{M_0}(V_{i_1})} a'_{j_1l} V_l + \sum_{l_j:V_{l_j} \in \setS_{j_1}}
    a_{j_1 l_j} V_{l_j} + a_{j_1 i_1} N_{V_{i_1}}.
    \label{eq:proof_zjnew1}
\end{equation}
The new structural equation of $i_1$ is the same as Equation \eqref{eq:proof_zinew}, since no parameter calculation is required:
\begin{equation}
    V_{i_1} = \frac{1}{a_{j_1i_1}} V_{j_1} - \sum_{l_j:V_{l_j} \in Pa_{M_0}(V_{j_1})\setminus\{V_{i_1}\}}
    \frac{a_{j_1l_j}}{a_{j_1 i_1}} V_{l_j}.
    \label{eq:proof_zinew1}
\end{equation}
As for $V_{k_1}$, using the same analysis as in Equation \eqref{eq:proof_zjnew1}, define $a'_{k_1l_k} \triangleq a_{k_1l_k} - \frac{a_{k_1i_1}a_{j_1l_k}}{a_{j_1i_1}} \mathbf{1} [V_{l_k}\in Pa_{M_0}(V_{j_1})]$, and $\setS_{k_1}=Pa_{M_0}(V_{j_1})\setminus Pa_{M_0}(V_{k_1})$. Then structural equation of $V_{k_1}$ is:
\begin{equation}
    V_{k_1} = \sum_{l_k: V_{l_k} \in Pa_{M_0}(V_{k_1})\setminus\{V_{i_1}\}}
    a'_{k_1l_k} V_{l_k} + \sum_{l_j:V_{l_j} \in \setS_{k_1}} \frac{a_{j_1l_j}}{a_{j_1 i_1}} V_{l_j} + \frac{a_{k_1i_1}}{a_{j_1i_1}} V_{j_1} + N_{V_{k_1}}.
    \label{eq:proof_zknew1}
\end{equation}
Recall that in $M_1$, the structural equations of all underlying variables in $\mathcal{Z}\cup\setY\setminus \{V_{i_1}, V_{j_1}\} \cup Ch_{M_0}(V_{i_1}) $ are the same as $M_0$, and the structural equations of $V_{i_1}$, $V_{i_1}$ and any $V_{k_1}\in Ch_{M_0}(V_{i_1})$ in \eqref{eq:proof_zi}-\eqref{eq:proof_zk} are replaced by \eqref{eq:proof_zjnew1}-\eqref{eq:proof_zknew1}. Since the edge $V_{i_1}\rightarrow V_{j_1}$ satisfies Condition \ref{condition:me-id}, at least one of the sets in $\{\setS_{j_1}\} \cup \{\setS_{k_1}:k\in Ch_{M_0}(V_{i_1})\}$ must be non-empty. Therefore, compared to $M_0$, there is at least one edge added in $M_1$.

Next consider $k=2$. Similarly, if $V_{i_2}$ and $V_{j_2}$ are in the same direct ordered group, then no edges are added to $M_2$. If $V_{i_2}$ and $V_{j_2}$ belong to different direct ordered groups, then there is at least one edge added to $M_2$ (compared with $M_0$). Note that this edge may be the same as the edges added to $M_1$. However, we will show in Part (2) that at least one of the the added edges after swapping both centers does not cancel out each other. Therefore it is still an added edge compared with $M_0$.

By repeating the same procedure for $k=3,\cdots, m$, we conclude that, since at least one of the new centers does not belong the same direct ordered group,at least one more edge is added in $M_m$ compared to $M_0$.

\textbf{Proof of (2)}: We first show that if $M_m$ removes any edge from $M_0$, then the ground-truth model $M_0$ violates Assumption \ref{assumption:me_faithfulness}. The proof is the same as the proof of Part (2) in the necessity proof: We only used the fact that all variables in the same direct ordered group have the same support, which also holds for ancestral ordered group. Therefore if any edge in $M_0$ is removed, then $M_0$ violates Assumption \ref{assumption:me_faithfulness}\emph{(b)}.



Next, we show that there is at least one added edge that cannot be canceled out. We prove by contradiction, i.e, if all added edges cancel out each other in $M_m$, then $M_0$ violates Assumption \ref{assumption:me_faithfulness}\emph{(b)}. Suppose the edge from $V_0$ to $V$ is is firstly added in $M_{k_0}$ for some $k_0\in [m]$ and is removed in $M_m$. Note that $V_0$ is a center node in $M_{k_0}$, but the position of $V_0$ might be changed from $M_0$ to $M_{k_0-1}$. Denote $V'_0$ as the center of the ancestral ordered group of $V_0$ in $M_0$ (which might be $V_0$ itself). Recall from Part (1) that if an edge from $V_0$ to $V$ is added, then there must be a causal path from $V'_0$ to $V$ in $M_0$.
We further assume that no edges are added from any ancestor of $V'_0$ in $M_0$ to other nodes, and for any variable $\tilde{V}$ on the causal path from $V'_0$ to $V$, no edges are added from $V'_0$ to $\tilde{V}$. This can be satisfied by selecting $V'_0$ with the smallest index among all added edges following the causal order, and selecting $V$ that belongs to the ancestral ordered group with the smallest index among all added edges from $V_0$. We consider each of the three following cases:

\begin{enumerate}[(i), leftmargin=*]
\item $V$ is not swapped with any other nodes, i.e., $V\not\in \{V_{i_1},\cdots,V_{i_m},V_{j_1}\cdots,V_{j_m}\}$. This means that the edge from $V_0$ to $V$ is added in $M_{k_0}$, but $V_0$ is not a parent of $V$ in $M_m$.
\item $V=V_{i_k}$ for some $k\geq k_0$. This means that the edge from $V_0$ to $V_{i_k}$ is added in $M_{k_0}$, but $V_0$ is not a parent of $V_{j_k}$ in $M_m$.
\item $V=V_{j_k}$ for some $k\geq k_0$. This means that the edge from $V_0$ to $V_{j_k}$ is added in $M_{k_0}$, but $V_0$ is not a parent of $V_{i_k}$ in $M_m$.
\end{enumerate}

\emph{Case (i).} Suppose $V$ is not swapped with any other nodes. This means that there is a causal path $V'_0\rightarrow V_{i_{k_0}}\rightarrow V$ in $M_0$, while $V'_0$ is not connected to $V$. Besides, since the edge from $V_0$ to $V$ is added by replacing $V_{i_{k_0}}$ with $V_{j_{k_0}}$ in $M_{k_0}$, $V'_0$ is a parent of $V_{j_{k_0}}$ in $M_0$. If all added edges are cancelled out, then $V$ has at most $|Pa_{M_0}(V)|$ parents in $M_m$, i.e., $\TE_{M_0}(An_{M_0}(V), V)$ can be written as the linear combination of at most $|Pa_{M_0}(V)|$ vectors in $\{\TE_{M_0}(An_{M_0}(V), V'): V'\in Pa_{M_m}(V)\}$. Further, if no edges in $M_0$ are canceled out in $M_m$, then $V_{j_{k_0}}$ is a parent of $V$ in $M_m$. However, $V_{j_{k_0}}$ is a \uleaf node in $M_0$ with parent set not being a subset of $Pa_{M_0}(V)$ (because $V'_0\in Pa_{M_0}(V_{j_{k_0}})$ and $V'_0\not\in Pa_{M_0}(V)$). Therefore $M_0$ violates Assumption \ref{assumption:me_faithfulness}\emph{(b1)}.

\emph{Case (ii).} Suppose $V=V_{i_k}$ for some $k\in [m]$. This means that there is a causal path $V'_0\rightarrow V_{i_{k_0}}\rightarrow V_{i_k} \rightarrow V_{j_k}$ and a connection from $V'_0$ to $V_{j_{k_0}}$ in $M_0$.
Further, there is a connection from $V_0$ to $V_{j_{k}}$ in $M_k$ which may or may not be an added edge. If all added edges are cancelled out, then $V_{j_k}$ has at most $|Pa_{M_0}(V_{i_k})|$ parents in $M_m$, i.e., $\TE_{M_0}(An_{M_0}(V_{j_k})\setminus \{V_{i_k}\}, V_{j_k})$ can be written as the linear combination of at most $|Pa_{M_0}(V_{i_k})|$ vectors in $\{\TE_{M_0}(An_{M_0}(V_{j_k})\setminus\{V_{i_k}\}, V'): V'\in Pa_{M_m}(V_{j_k})\}$. Note that $|Pa_{M_0}(V_{i_k})| \leq |Pa_{M_0}(V_{i_k})\cup Pa_{M_0}(V_{j_k})\setminus \{V_{i_k}\}|$, and the equality holds only if $Pa_{M_0}(V_{j_k})\setminus \{V_{i_k}\}\subseteq Pa_{M_0}(V_{i_k})$, i.e., parents of $V_{j_k}$ in $M_0$ are all parents of $V_{i_k}$, except for $V_{i_k}$ itself. In this case, $V_{j_{k_0}}$ is a parent of $V_{j_k}$ in $M_m$, but it is a \uleaf node in $M_0$ with the parent set not being a subset of $V_{i_{k}}$ (because of $V'_0$). Therefore $M_0$ violates Assumption \ref{assumption:me_faithfulness}\emph{(b2)}.

\emph{Case (iii).} Suppose $V=V_{j_k}$ for some $k\in [m]$. Note that the proof of Case (iii) in Part (2) of the necessity proof also applies here: If $V_0$ is not a parent of $V_{i_k}$ in $M_m$, then $V_0$ is not a parent of $V_{i_k}$ in $M_k$. According to Equations \eqref{eq:proof_zj} and \eqref{eq:proof_zinew1}, this means that $V_0$ is not a parent of $V_{j_k}$ in $M_{k-1}$, which reduces to Case (i) and implies that $M_0$ violates Assumption \ref{assumption:me_faithfulness}\emph{(b1)}.


In conclusion, if the ground-truth satisfies Assumption \ref{assumption:me_faithfulness}, then $M_m$ has at least one more added edges than $M_0$. 

\textbf{Proof of (3)}: From Parts (1) and (2), we conclude that no edges are removed from $M_0$ in $M_m$, and there is at least one edge added in $M_m$ compared to to $M_0$. Combining both parts completes the proof of Proposition \ref{proposition:fewest_edges}.

In the following we show that because of this additional edge, $M_m$ violates Assumption \ref{assumption:me_faithfulness}. Suppose the edge from $V_0$ to $V$ is added in $M_m$. Similar as Part (2), the position of $V_0$ might be changed. Denote $V'_0$ as the center of the ancestral ordered group of $V_0$ in $M_0$ (which may be $V_0$ itself). We consider each of the three following cases:

\begin{enumerate}[(i), leftmargin=*]
    \item $V$ is not swapped with any other nodes, i.e., $V\not\in \{V_{i_1},\cdots,V_{i_m},V_{j_1}\cdots,V_{j_m}\}$. This means that $V'_0$ is not a parent of $V$ in $M_{k_0}$, but $V_0$ is a parent of $V$ in $M_m$.
    \item $V=V_{i_k}$ for some $k\in [m]$. This means that $V'_0$ is not a parent of $V_{i_k}$ in $M_{k_0}$, but $V_0$ is a parent of $V_{j_k}$ in $M_m$.
    \item $V=V_{j_k}$ for some $k\in [m]$. This means that $V'_0$ is not a parent of $V_{j_k}$ in $M_{k_0}$, but $V_0$ is a parent of $V_{i_k}$ in $M_m$.
\end{enumerate}

\emph{Case (i).} Suppose $V$ does not change its position. If the edge from $V'_0$ to $V$ is added in $M_m$, then $V$ has at least one more parent in $M_m$ than in $M_0$, i.e., $|Pa_{M_m}(V)|>|Pa_{M_0}(V)|$. Recall that in the ground-truth, $\TE_{M_0}(An_{M_0}(V), V)$ can be written as the linear combination of $|Pa_{M_0}(V)|$ vectors in $\{\TE_{M_0}(An_{M_0}(V), V'): V'\in Pa_{M_0}(V)\}$, where parents of $V$ in $M_0$ are all possible parents of $V$ in $M_m$. Further, for all $V'\in Pa_{M_0}(V)\cup \{V\}$, $\TE_{M_0}(An_{M_0}(V), V')=\TE_{M_m}(An_{M_m}(V), V')$. Therefore, $\TE_{M_m}(An_{M_m}(V), V)$ can be written as the linear combination of strictly less than $|Pa_{M_m}(V)|$ vectors in $\{\TE_{M_m}(An_{M_m}(V), V'): V'\text{ is a possible parent of }(V)\text{ in }M_m\}$, which means that $M_m$ violates Assumption \ref{assumption:me_faithfulness}\emph{(b1)}. 



\emph{Case (ii).} Suppose $V=V_{i_k}$ for some $k\in [m]$. In this case, $V_{j_k}$ has at least one more parents in $M_m$ than $V_{i_k}$ has in $M_0$, i.e., $|Pa_{M_m}(V_{j_k})|>|Pa_{M_0}(V_{i_k})|$. Further, for all $V'\in Pa_{M_0}(V_{i_k})\cup \{V_{i_k}\}$, $\TE_{M_m}(An_{M_m}(V_{i_k})\setminus \{V_{j_k}\}, V')=\TE_{M_0}(An_{M_0}(V_{i_k}), V')$, and parents of $V_{i_k}$ in $M_0$ are all possible parents of $V_{j_k}$ in $M_m$. Therefore, $\TE_{M_m}(An_{M_m}(V_{i_k})\setminus \{V_{j_k}\}, V_{i_k})$ can be written as the linear combination of $|Pa_{M_0}(V_{i_k})|<|Pa_{M_m}(V_{j_k})\cup Pa_{M_m}(V_{i_k})\setminus\{V_{j_k}\}| $ vectors in $\{\TE_{M_m}(An_{M_m}(V), V'): V'\text{ is a possible parent of }V_{j_k}\text{ in }M_m\}$, which means that $M_m$ violates Assumption \ref{assumption:me_faithfulness}\emph{(b2)}.



\emph{Case (iii).} Suppose $V=V_{j_k}$ for some $k\in [m]$. In this case, $|Pa_{M_m}(V_{i_k})|>|Pa_{M_0}(V_{j_k})|$. Recall that $\TE_{M_0}(An_{M_0}(V_{j_k}), V_{j_k})$ can be written as the linear combination of $|Pa_{M_0}(V_{j_k})|$ vectors in $\{\TE_{M_0}(An_{M_0}(V_{j_k}), V'): V'\in Pa_{M_0}(V_{j_k})\}$. Since $V_{j_k}$ is a \uleaf node in $M_0$, $\TE_{M_0}(An_{M_0}(V_{j_k}), V_{i_k})$ can be written as the linear combination of $|Pa_{M_0}(V_{j_k})|$ vectors in $\{\TE_{M_0}(An_{M_0}(V_{j_k}), V'): V'\in Pa_{M_0}(V_{j_k})\cup\{V_{j_k}\} \setminus \{V_{i_k}\}\}$. Note that for all $V'\in Pa_{M_0}(V_{j_k})\cup\{V_{j_k}\}$, $\TE_{M_0}(An_{M_0}(V_{j_k}), V')=\TE_{M_m}(An_{M_m}(V_{i_k}), V')$, and parents of $V_{j_k}$ in $M_0$ other than $V_{i_k}$ are all possible parents of $V_{i_k}$ in $M_m$. Therefore, $\TE_{M_m}(An_{M_m}(V_{i_k}), V_{i_k})$ can be written as the linear combination of $|Pa_{M_0}(V_{j_k})\cup\{V_{j_k}\} \setminus \{V_{i_k}\}|=|Pa_{M_0}(V_{j_k})|< |Pa_{M_m}(V_{i_k})|$ vectors in $\{\TE_{M_m}(An_{M_m}(V_{i_k}), V'): V'\text{ is a possible parent of }(V_{i_k})\text{ in }M_m\}$, which means that $M_m$ violates Assumption \ref{assumption:me_faithfulness}\emph{(b1)}.



To summarize, if any edge is added to the model, then this edge must cause certain violation of Assumption \ref{assumption:me_faithfulness}.

\textbf{Conclusion}: We showed that by replacing some \uleaf nodes in the ground-truth $M_0$ by some other nodes in ancestral ordered group of each \uleaf node, where at least one of them is not in the same different direct ordered group of the \nuleaf node, the alternative model $M_m$ (1) has the same mixing matrix, (2) has at least one more edge than the graph structure of $M_0$, and (3) violates Assumptions \ref{assumption:me_faithfulness}. Therefore we can distinguish $M_0$ from any other model outside the DOG equivalence class.

Combining the necessity part and the sufficiency part together, we conclude that DOG equivalence class is the extent level of identifiability under Assumption \ref{assumption:separability} and \ref{assumption:me_faithfulness}.


\section{Comparison with partially observed linear causal models} \label{app:comarison_adams}
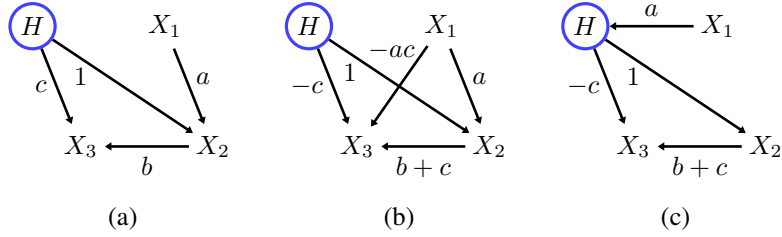
\begin{figure}[t]
\centering
\begin{tikzpicture}[very thick, scale=.32]
\centering
\foreach \place/\name in {{(-1,-5)/X_3}, {(2.5,0)/X_1}, {(4.5,-5)/X_2}}
    \node[observed, label=center:$\name$] (\name) at \place {};
\foreach \place/\name in  {(-3,0)/H}
    \node[latent, label=center:$\name$, font=\footnotesize] (\name) at \place {};  
\path[causal] (X_1) edge node[right, pos=0.4] {$a$} (X_2);
\path[causal] (H) edge node[left] {$c$} (X_3);
\path[causal] (X_2) edge node[below] {$b$} (X_3);
\path[causal] (H) edge node[below, pos = 0.2] {$1$} (X_2);
\node at (0.75,-8) {(a)};
\end{tikzpicture} 
\hspace{4mm}
\begin{tikzpicture}[very thick, scale=.32]
\centering
\foreach \place/\name in {{(-1,-5)/X_3}, {(2.5,0)/X_1}, {(4.5,-5)/X_2}}
    \node[observed, label=center:$\name$] (\name) at \place {};
\foreach \place/\name in  {(-3,0)/H}
    \node[latent, label=center:$\name$, font=\footnotesize] (\name) at \place {};  
\path[causal] (X_1) edge node[right, pos=0.4] {$a$} (X_2);
\path[causal] (H) edge node[left] {$-c$} (X_3);
\path[causal] (X_2) edge node[below] {$b+c$} (X_3);
\path[causal] (X_1) edge node[left, pos = 0.05] {$-ac$} (X_3);
\path[causal] (H) edge node[below, pos = 0.15] {$1$} (X_2);
\node at (0.75,-8) {(b)};
\end{tikzpicture} 
\hspace{4mm}
\begin{tikzpicture}[very thick, scale=.32]
\centering
\foreach \place/\name in {{(-1,-5)/X_3}, {(2.5,0)/X_1}, {(4.5,-5)/X_2}}
    \node[observed, label=center:$\name$] (\name) at \place {};
\foreach \place/\name in  {(-3,0)/H}
    \node[latent, label=center:$\name$, font=\footnotesize] (\name) at \place {};  
\path[causal] (X_1) edge node[above] {$a$} (H);
\path[causal] (H) edge node[left] {$-c$} (X_3);
\path[causal] (X_2) edge node[below] {$b+c$} (X_3);
\path[causal] (H) edge node[below, pos = 0.2] {$1$} (X_2);
\node at (0.75,-8) {(c)};
\end{tikzpicture} 
\caption{Comparison between our identifiability condition and the results in \citep{adams2021identification}. (a) The ground-truth of a linear SEM-UR with every edge satisfying Condition \ref{condition:ur_id}, but the model violates the conditions in \citep{adams2021identification}. (b) The alternative linear SEM-UR in the AOG equivalence class, which violates Assumption \ref{assumption:ur_faithfulness}\emph{(b)}. (c) An alternative model (considered in \citep{adams2021identification}) with the latent variable $H$ being non-root.}
\label{fig:distinction} 
\end{figure}

The linear SEM-UR defined in Definition \ref{def:SEM-UR} is a subclass of the linear SEM with latent variables proposed in \citep{adams2021identification}. Specifically, \cite{adams2021identification} does not require the latent variables to be parent-less. Further, the linear SEM-ME in Definition \ref{definition:me_canonical} can also be viewed as a subclass of the model in \citep{adams2021identification}, where the latent variables correspond to unobserved underlying variables, $\setZ$, and the observed variables correspond to observed underlying variables $\setY$ and the measurements of $\setZ$ (i.e., $\setU$). 

The authors of \citep{adams2021identification} derived necessary conditions for unique identifiability of the model with the fewest number of edges, which are also sufficient together with ``bottleneck faithfulness assumption'' (cf. Condition \ref{condition:bottleneck_faithfulness}). The derived conditions are two-fold: (1) The bottleneck condition, which requires the children set of any (observed or latent) variable $V_i$ to be the smallest set to block all directed paths from $V_i$ to its observed descendants; and (2) The strong non-redundancy condition, which requires that the children set of any latent variable $H_i$ not to be a subset of that of any other variable $V_i$, plus $V_i$ itself. 

Recall that from Proposition \ref{proposition:fewest_edges}, our identifiability condition also guarantees that if the ground-truth is uniquely identifiable, then any other model that has the same mixing matrix as the ground truth and satisfies conventional faithfulness has strictly more edges. We have shown in Appendix \ref{app:faithfulness_bottleneck} that our SEM-UR assumption is implied from bottleneck faithfulness assumption. Further, our condition for unique identifiability of a SEM-UR is also implied from from the identifiability conditions in \citep{adams2021identification}. Specifically, for the linear SEM-UR, the bottleneck condition is automatically satisfied, and the strong non-redundancy condition implies Condition \ref{condition:ur_id}(a). In contrast, our unique identifiability condition for the linear SEM-UR does not imply the conditions in \citep{adams2021identification}: A linear SEM-UR satisfying Condition \ref{condition:ur_id} may not be uniquely identifiable in \citep{adams2021identification}. This is because the model considered in \citep{adams2021identification} is a larger class than the SEM-UR. An example is shown in Figure \ref{fig:distinction}, where the linear SEM-UR generated according to Figure \ref{fig:distinction}(a) can be uniquely identified (as a SEM-UR). The only other model in its AOG equivalence class is shown in Figure \ref{fig:distinction}(b) (by changing the exogenous noise term associated with $X_2$ from $N_{X_2}$ to $N_H$), which violates Assumption \ref{assumption:ur_faithfulness} and also has more edges than (a).
However, the ground-truth is not uniquely identifiable in \citep{adams2021identification}, where an alternative model is shown in Figure \ref{fig:distinction}(c). Note that Figure \ref{fig:distinction}(c) and Figure \ref{fig:distinction}(a) have the same number of edges, and hence \citep{adams2021identification} cannot distinguish it from the ground-truth. On the contrary, Figure \ref{fig:distinction}(c) is not a SEM-UR since $H$ has a parent, and can distinguished from the ground-truth. Note that Figure \ref{fig:distinction}(a) and \ref{fig:distinction}(c) have exactly the same total causal effects among all pairs of observed variables.



Finally, we note that if we were to apply the identifiability result in \citep{adams2021identification} to the SEM-ME, this leads to a trivial special case. Specifically, if the nun-redundancy condition is satisfied, then all leaf nodes must be observed (i.e., there is no \uleaf node in the model). The reasoning behind this is as follows. If there is an unobserved leaf node $Z_i$ with its measurement $U_i$, we can remove $Z_i$, and draw edges from the parents of $Z_i$ to $U_i$. The resulting model has the same mixing matrix and one less edge ($Z_i\rightarrow U_i$ removed). However, this corresponds to a trivial special case of SEM-ME: If there is no \uleaf node, then the SEM-ME can always be uniquely identified, since each \nuleaf node is in a separate ancestral ordered group. Consequently, our identifiability results for the linear SEM-ME do not follow from the identifiability results in \citep{adams2021identification}.


\section{Numerical experiments}\label{app:experiments}
We evaluated the performance of our recovery algorithm on randomly generated linear SEM-MEs and SEM-URs with different number of variables. We considered two cases: (1) when a noisy version of the mixing matrix is given, where the noise is Gaussian with different choices of variance denoted by $\texttt{d}^2$, and (2) when synthetic data comes from a linear generating model with non-Gaussian noises with different sample sizes.

\subsection{Simulation settings}
\textbf{Model generation.} We randomly generate synthetic linear SEM-MEs and SEM-URs as follows. For SEM-ME, we first generate a linear SEM with $p$ underlying variables, where each pair of observed variables is connected with probability $p_e$. Then, we randomly {{select some of the $p$ underlying variables to be unobserved (measured with error), where the measurement error is added after the coefficients are drawn; note that we only add measurement error to the unobserved variables corresponding to the non-leaf nodes. The corresponding mixing matrix is calculated (i.e., $(\bI-\bA)^{-1}$), where $\bA$ is the adjacency matrix among underlying variables in $\setZ \cup \setY$.}} For SEM-UR, we generate a SEM with $q$ observed variables and $l$ latent variables. Each pair of observed variables, and each pair of latent and observed variables, is connected with probability $p_e$. We ensure that each latent variable has at least two children. The strength (coefficients) of the edges in both models are independently drawn from the uniform distribution $[0.5, 1]$. {{After the mixing matrix is obtained}}, we randomly permute its rows and columns to hide the true causal order.

\textbf{Baseline methods.}
We compare the performance of our recovery algorithm (which we denote by DOG) with the following methods: (1) AOG-based method (AOG): After recovering the AOG from the mixing matrix, we randomly choose the centers in each ancestral ordered group. We then recover the full model based on the selection of the centers. {{This approach is similar}} to the recovery method in \citep{salehkaleybar2020learning}. (2) ICA-LiNGAM \citep{shimizu2006linear}.\footnote{\url{https://sites.google.com/view/sshimizu06/lingam}} 
Note that ICA-LiNGAM method is not designed for systems with latent variables, but benefits from strong performance for recovering the mixing matrix in causally sufficient settings. The goal of this comparison is to demonstrate the necessity of using methods designed specifically to handle latent variables.

We compare with AOG both in Case (1) and Case (2), and we compare with ICA-LiNGAM only in Case (2).

\textbf{Metrics.} We compare the recovered matrices with the ground-truth matrix based on the following metrics:
\begin{enumerate}[(1), leftmargin=*]
    \item SHD / Edge: The normalized structural Hamming distance divided by the number of true edges in the graph;
    \item Precision: The fraction of edges in the recovered model that is also in the ground-truth; 
    \item Recall: The fraction of edges in the ground-truth that can be recovered; 
    \item F1 Score: The harmonic mean of Precision and Recall.
\end{enumerate}

In linear SEM-ME, since we can only recover the ground-truth model up to its DOG equivalence class, we also compared the recovered matrices with the matrix that has the smallest $l_2$ distance (Frobenius norm) among all models in the DOG equivalence class of the ground-truth. Furthermore, In linear SEM-UR, DOG and AOG methods return both the adjacency matrix $\bA$ among observed variables and the adjacency matrix $\bB$ from latent to observed variables. On the contrary, ICA-LiNGAM only returns the matrix $\bA$. Therefore, we compare the performance of all three methods in recovering $\bA$, and we compare the performance of DOG and AOG methods in recovering $\bB$. Note that $\bB$ can only be learned up to permutation and scaling of the columns.

\textbf{Pre-processing of the mixing matrix.} In linear SEM-ME, the mixing matrix used in recovery ($\WME$)
is a submatrix of the overall mixing matrix $\bW$. Given $\bW$ and the indices corresponding to the unobserved variables, we extract $\WME$ from $\bW$ as follows. We first normalize each column by the entry with the largest absolute value. Next, for each index of an unobserved variable $i$, we find the column vector in $\bW$ that has the smallest $l_2$-distance to the one-hot vector $e_i$ (i.e., only the $i$-th entry is 1), and remove that column from $\bW$. We repeat until all one-hot vectors of the measurement errors are removed.

We note that the recovery of AOG from the mixing matrix $\WME$/$\WUR$ {{(cf. Algorithm \ref{alg:aog})}} requires that we can always find a row/column with only one non-zero entry in each iteration {{of the algorithm}}. To ensure that the recovered mixing matrix satisfies this property, we sort all non-zero entries in the mixing matrix {{with respect to}} their absolute values (from smaller to larger), and iteratively remove the entries with the smallest absolute values until a valid AOG is returned.

\subsection{Noisy mixing matrix}
We test the performance of our method for Case (1), i.e., when a noisy version of the mixing matrix is given, . Given the true mixing matrix, the noisy matrix is constructed by first adding an independent Gaussian noise with variance \texttt{d}$^2$ to each non-zero entry of $\bW$, and then adding independent Gaussian noise with variance \texttt{d}$^2$ to each entry of $\bW$ with probability $0.2$. \texttt{d} is the parameter controlling the noise variance, which is selected as [0.01, 0.05, 0.15]. 

For linear SEM-ME, We select $p=[5, 10, 15, 20, 25]$, and we randomly select $80\%$ of the variables to be measured with error. We set $p_e=0.4$, and repeat the simulation for $50$ times. The average performance is shown in Figure \ref{fig:me_oracle}. For linear SEM-UR, We select $q=[4, 8, 12, 16, 20]$, and we select $l=[1,2,3,4,5]$, respectively. We set $p_e=0.4$, and repeat the simulation for $50$ times. The average performance is shown in Figure \ref{fig:ur_oracle}.

We observe that our DOG method performs better than AOG method for the recovery of both models. Besides, the performance of both algorithms are better if the variance \texttt{d} is small. Further, Recall from Corollary \ref{cor:structural_id} that all SEM-URs in the same DOG have the same structure. This can be seen in Figure \ref{fig:ur_oracle}, where with \texttt{d}$=0.01$, our DOG algorithm has a high accuracy in recovering the graph structure of the ground-truth. As for the recovery of SEM-ME in Figure \ref{fig:me_oracle}, we observe that even though theoretically the structure of the ground-truth model can only be identified up to the DOG equivalence class, our DOG algorithm can recover part of the ground-truth structure that is shared among the DOG equivalence class.


\begin{figure}[tbp]
  \centering
  \includegraphics[width=\textwidth]{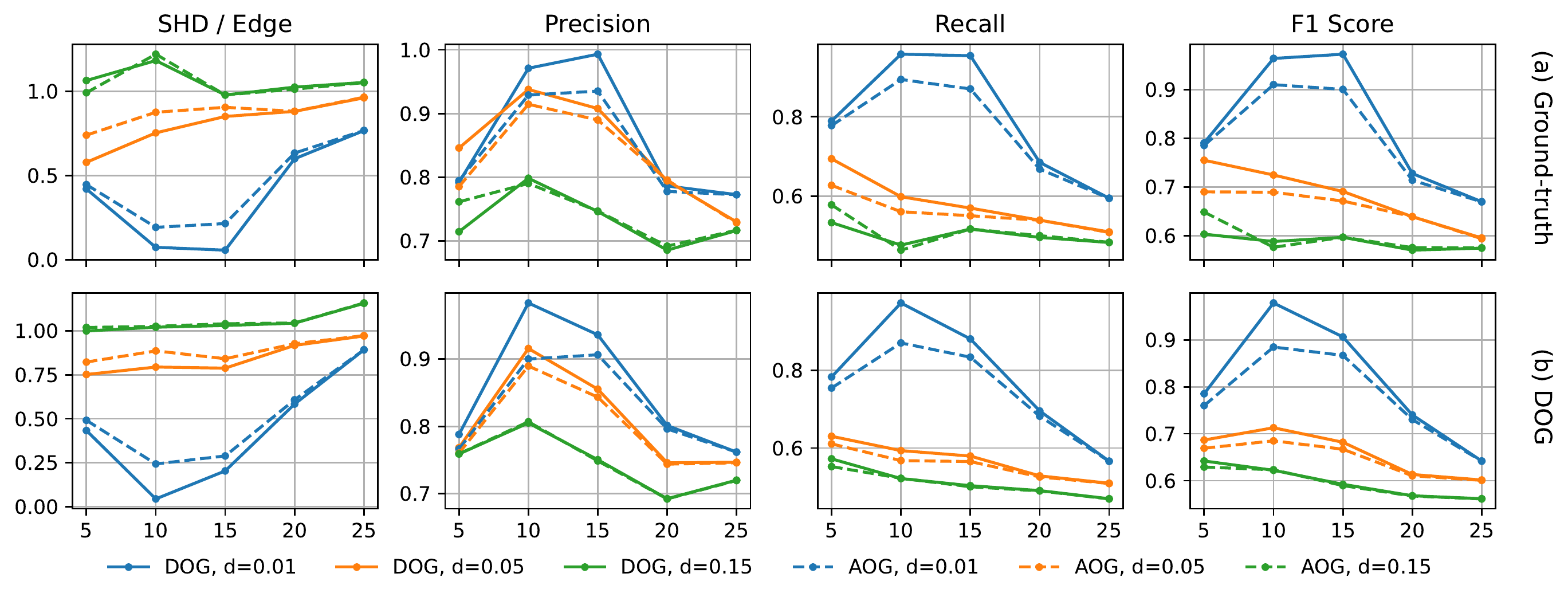} 
  \caption{Performance of SEM-ME recovery given the noisy mixing matrix. $x$-axis represents the number of total variables ($p$). The first row compares the recovered matrix with the ground-truth, and the second row compares the recovered matrix with its closest in the DOG equivalence class of the ground-truth.}
  \label{fig:me_oracle}
\end{figure}

\begin{figure}[tbp]
  \centering
  \includegraphics[width=\textwidth]{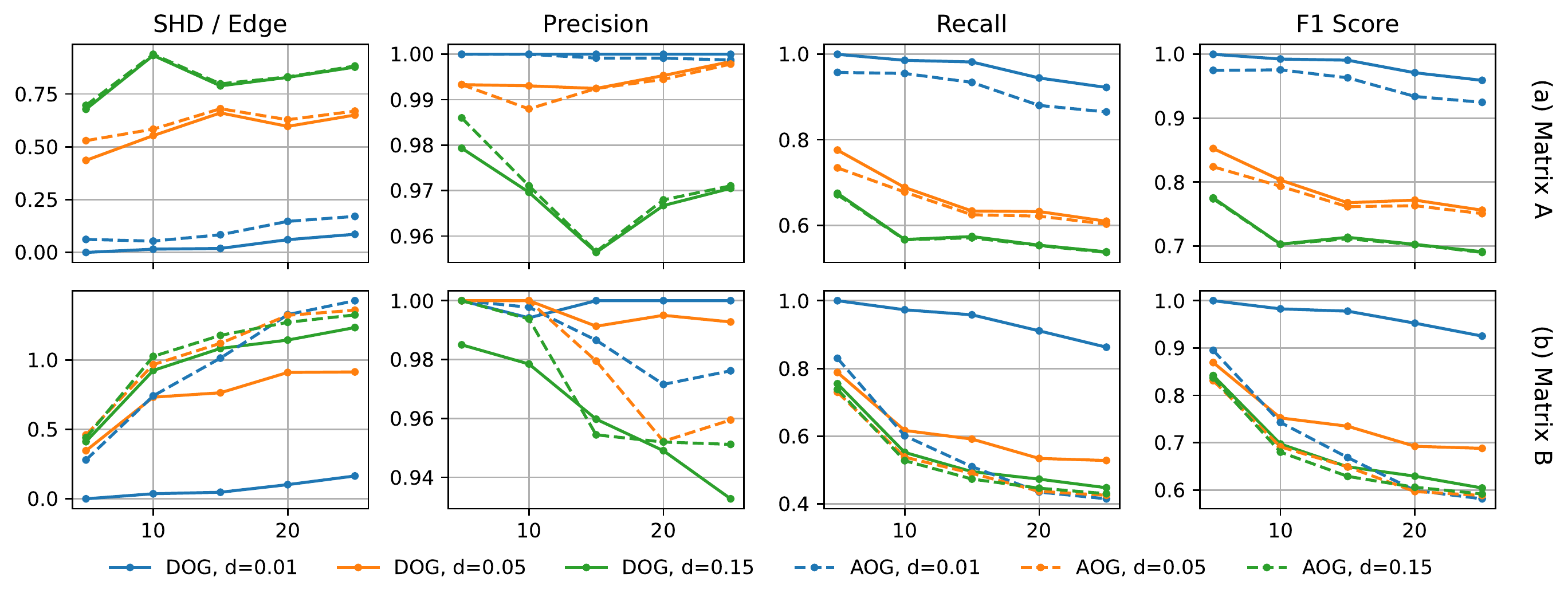} 
  \caption{Performance of SEM-UR recovery given the noisy mixing matrix. $x$-axis represents the number of total variables ($q+l$). The first row compares the recovery of matrix $\bA$, and the second row compares the recovery of matrix $\bB$.}
  \label{fig:ur_oracle}
\end{figure}

\subsection{Synthetic data with non-Gaussian noises}
We test the performance of our recovery algorithm for Case (2), where the synthetic data is generated from the linear causal model with non-Gaussian exogenous noise terms, and the sample size is proportional to the number of observed variables. That is, the sample size can be expressed as \texttt{N}$p$ (resp. \texttt{N}$q$), where $p$, $q$ are the number of observed variables in both models, and \texttt{N} is selected to be $\{200, 500\}$.
In this case, the mixing matrix needs to be estimated using ICA methods. Note that the overall mixing matrices in both problems have more columns than rows, therefore we need to use an overcomplete ICA method. The overcomplete ICA method that we use is Reconstruction ICA \citep{le2011ica}. We assume that the number of sources is known for overcomplete ICA.

In linear SEM-ME, we select $p=3,4,5,6,7,8$. We assume that when $p=3,4,5$, there is only one unobserved non-leaf node, and each leaf node is observed with probability 0.5. Under all three settings, the overall mixing matrix is of dimension $p\times (p+1)$. Similarly, when $p=6,7,8$, there are two unobserved non-leaf nodes, and each leaf node is observed with probability 0.5. In linear SEM-UR, we select $q=3,4,5,6,7,8$. Similar to SEM-ME, we assume that when $q=3,4,5$, there is only one latent confounder, and the overall mixing matrix is of dimension $q\times (q+1)$. When $p=6,7,8$, there are two latent confounders in the system. We set $p_e=\min(0.5, 2.5/(p-1))$ (resp. $2.5/(q-1)$) in both models, and the exogenous noises are independently drawn from uniform distributions between $[-0.5, 0.5]$. 

We use bootstrapping method \citep{efron1994introduction} to improve the stability of the recovery of overcomplete ICA with 50 iterations. In each iteration, we randomly select $80\%$ of the sampled data, and get an estimate of the mixing matrix using Reconstruction ICA. After each iteration, we repermute and rescale the columns of the recovered mixing matrix such that they are aligned with the mixing matrix recovered in the first iteration. Lastly, we use hypothesis testing to prune the recovered mixing matrix and remove the entries whose absolute value is below a threshold of 0.1 in SEM-UR and 0.2 in SEM-ME, with $95\%$ confidence level.

We compared our DOG algorithm with AOG and ICA-LiNGAM for both models (SEM-ME and SEM-UR), and we record the average of the performance metrics after 50 graphs are generated for each model. The performance of recoveries for linear SEM-ME and SEM-UR are shown in Figure \ref{fig:me_data_total} and Figure \ref{fig:ur_data_total}, respectively. We observe that the performance of ICA-LiNGAM depends on the number of unobserved variables/latent confounders: There is a sharp decrease in the performance of ICA-LiNGAM when we increase the number of unobserved variables/latent confounders (from 1 unobserved/latent variable for $p=3,4,5$ to 2 unobserved/latent variables for $p=6,7,8$). On the contrary, DOG and AOG methods outperform ICA-LiNGAM for both models. This implies that developing methods for latent variable models is necessary. Besides, our DOG method outperforms AOG methods for both models, which demonstrates the effectiveness of our proposed algorithm.

\begin{figure}[tbp]
  \centering
  \includegraphics[width=\textwidth]{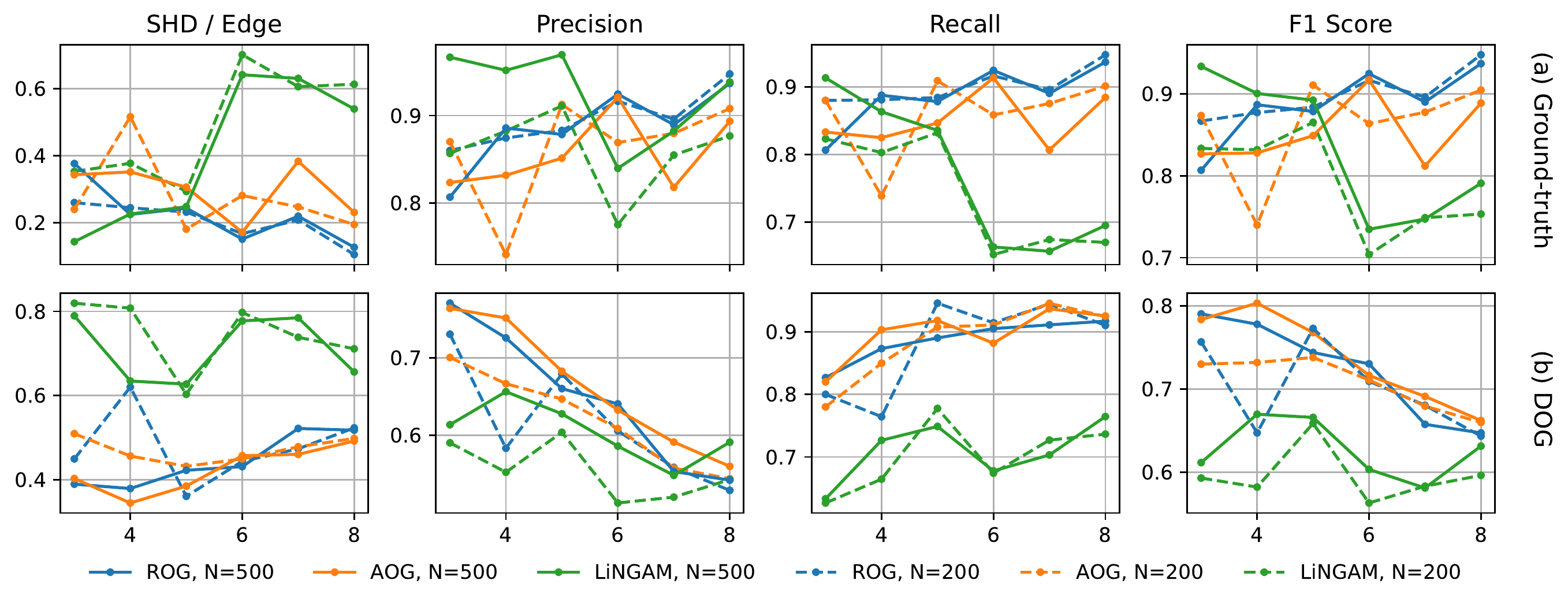} 
  \caption{Performance of SEM-ME recovery based on synthetic data. $x$-axis represents the number of observed variables ($p$).  The first row compares the recovered matrix with the ground-truth, and the second row compares the recovered matrix with its closest in the DOG equivalence class of the ground-truth.}
  \label{fig:me_data_total}
\end{figure}


\begin{figure}[tbp]
  \centering
  \includegraphics[width=\textwidth]{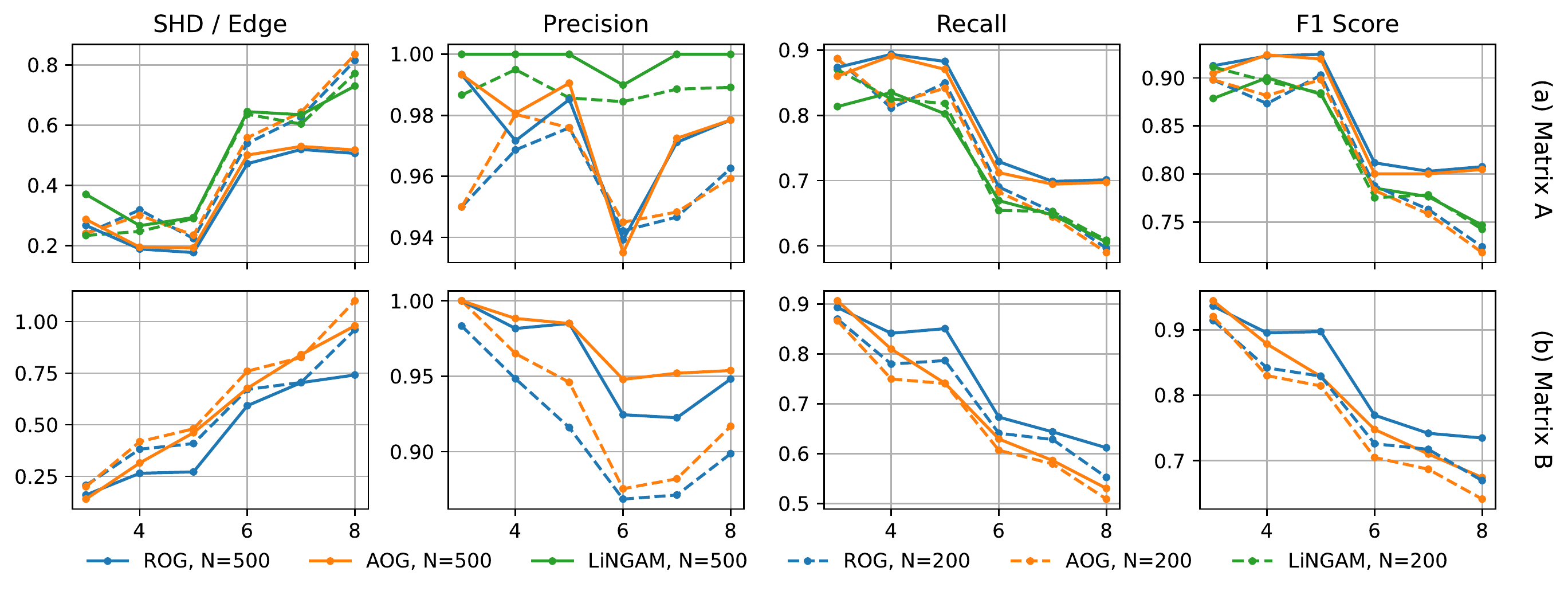} 
  \caption{Performance of SEM-UR recovery based on synthetic data. $x$-axis represents the number of observed variables ($q$). The first row compares the recovery of matrix $\bA$, and the second row compares the recovery of matrix $\bB$. Recall that ICA-LiNGAM only returns matrix $\bA$ as it assumes causal sufficiency.}
  \label{fig:ur_data_total}
\end{figure}


\subsection{Plots of the error bars} \label{app:errorbar}
Figure \ref{fig:simulation} in the main text includes three algorithms and multiple experimental settings. We chose not to include error bars in that figure so that the presentation does not become cluttered. Instead, here, we report the error bars for the simulation results in Figure \ref{fig:simulation}, for all three algorithms. 

\begin{figure}[htbp]
\centering
\includegraphics[width=0.99\linewidth]{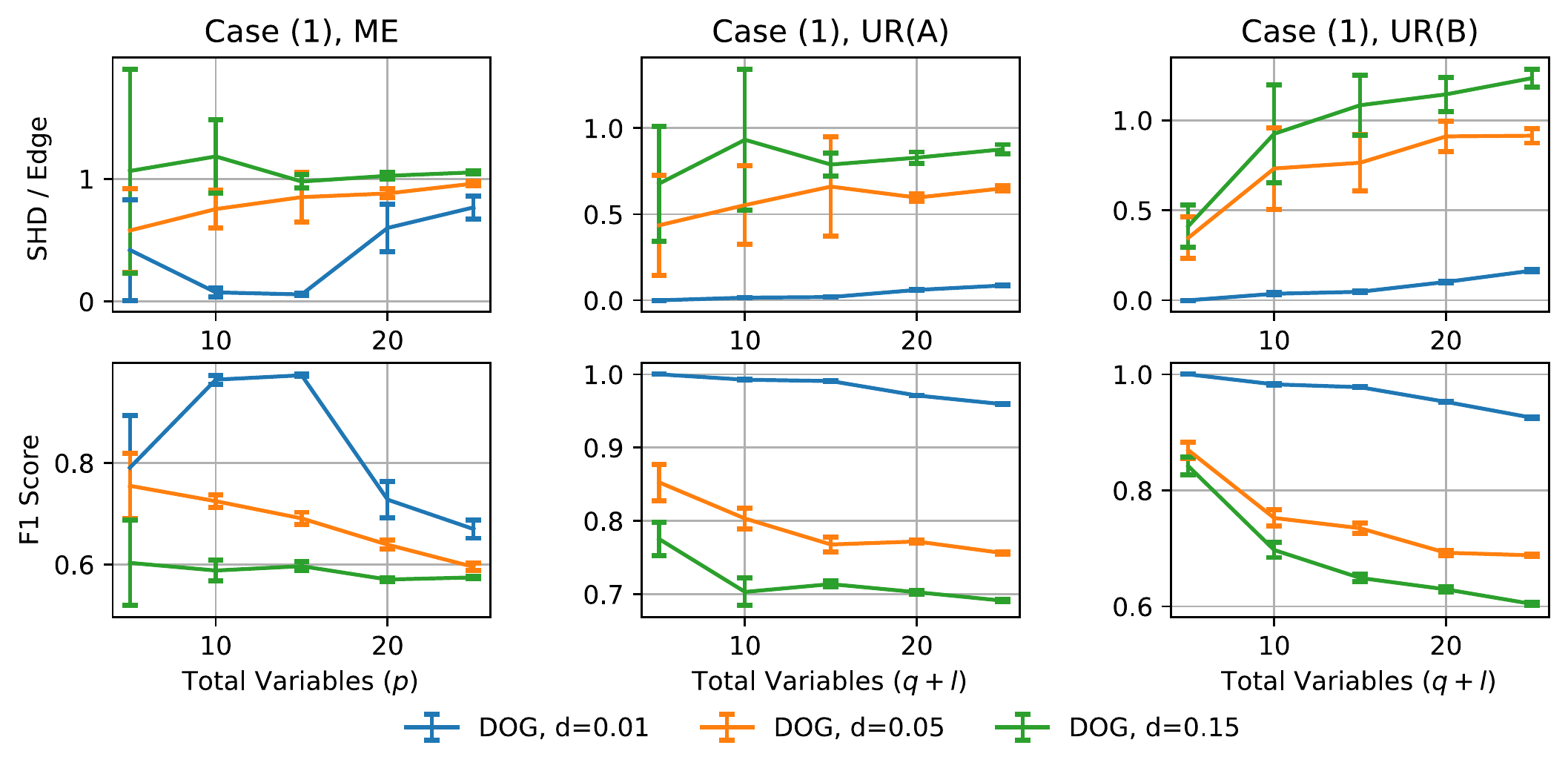}

\vspace{1mm}
\includegraphics[width=\linewidth]{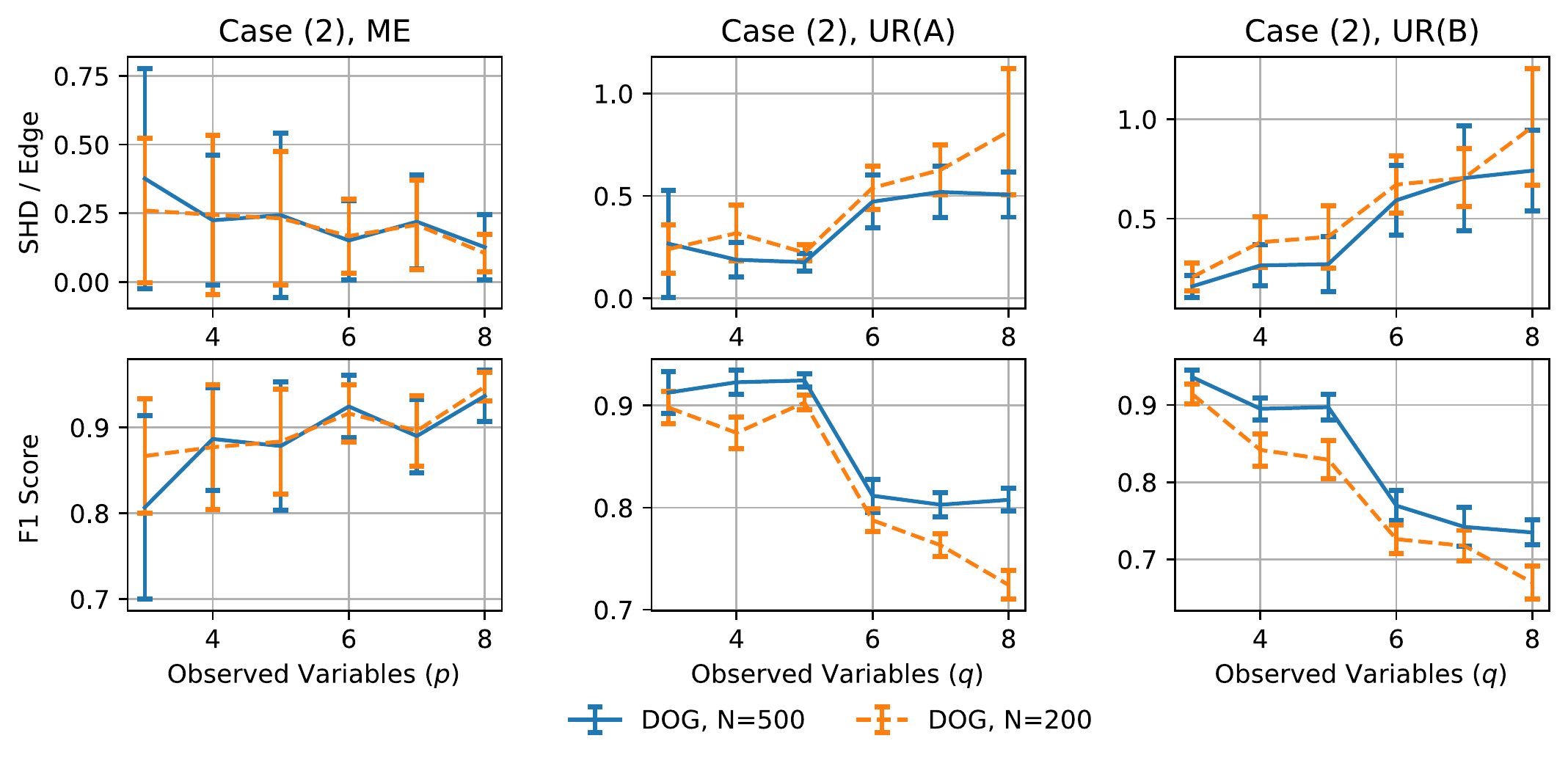}
\caption{Error bars of DOG algorithm under different settings of the experiment, namely, Case (1) in which we have a noisy mixing matrix; Case (2) where we use raw observational data; SEM-ME recovery; recovery of matrices $\bA$ and $\bB$ for SEM-UR, and different metrics, SHD/Edge and F1 score.}
\label{fig:errorbar_dog_new}
\end{figure}

\begin{figure}[htbp]
\centering
\includegraphics[width=\linewidth]{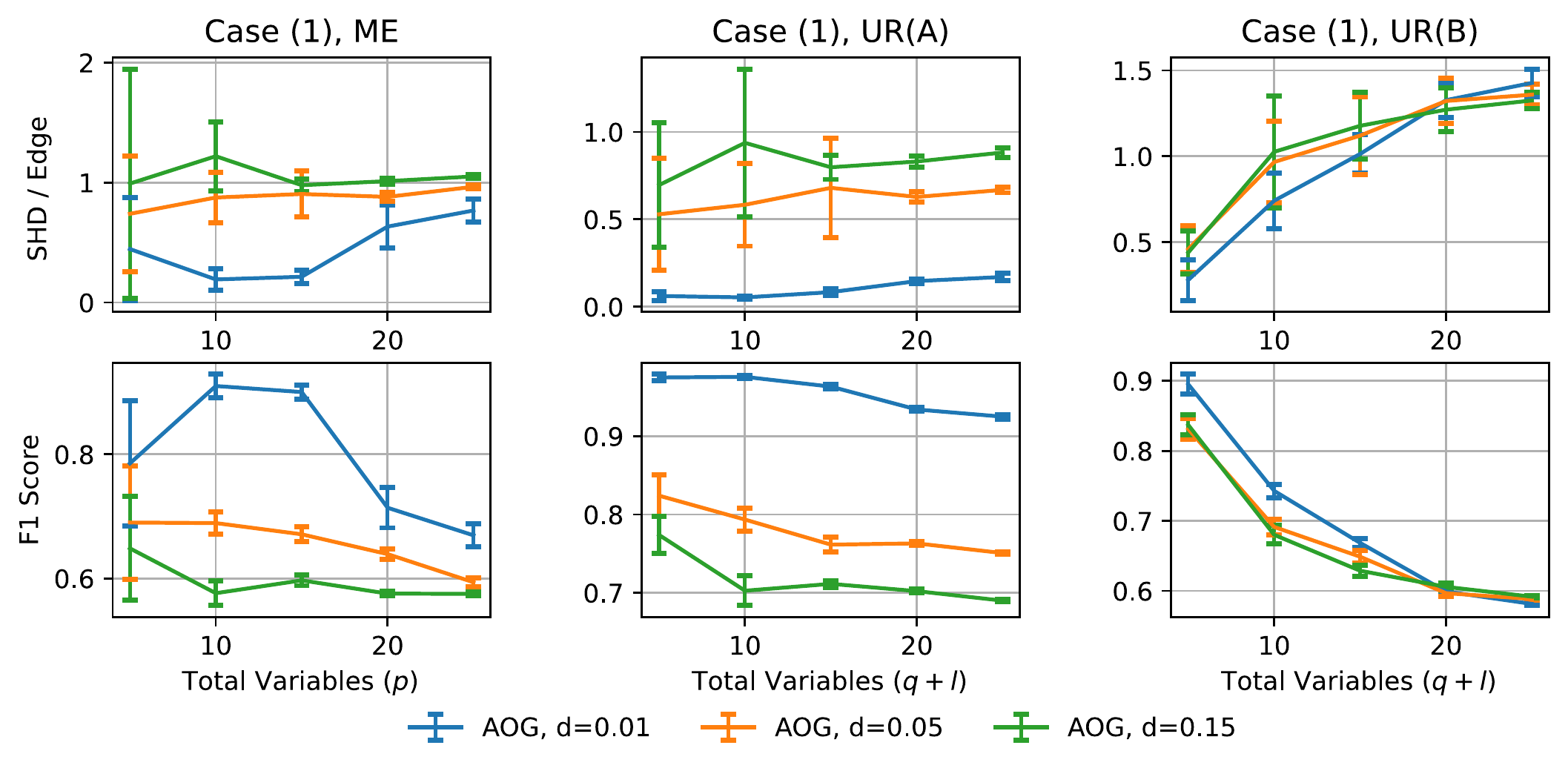}

\vspace{1mm}
\includegraphics[width=\linewidth]{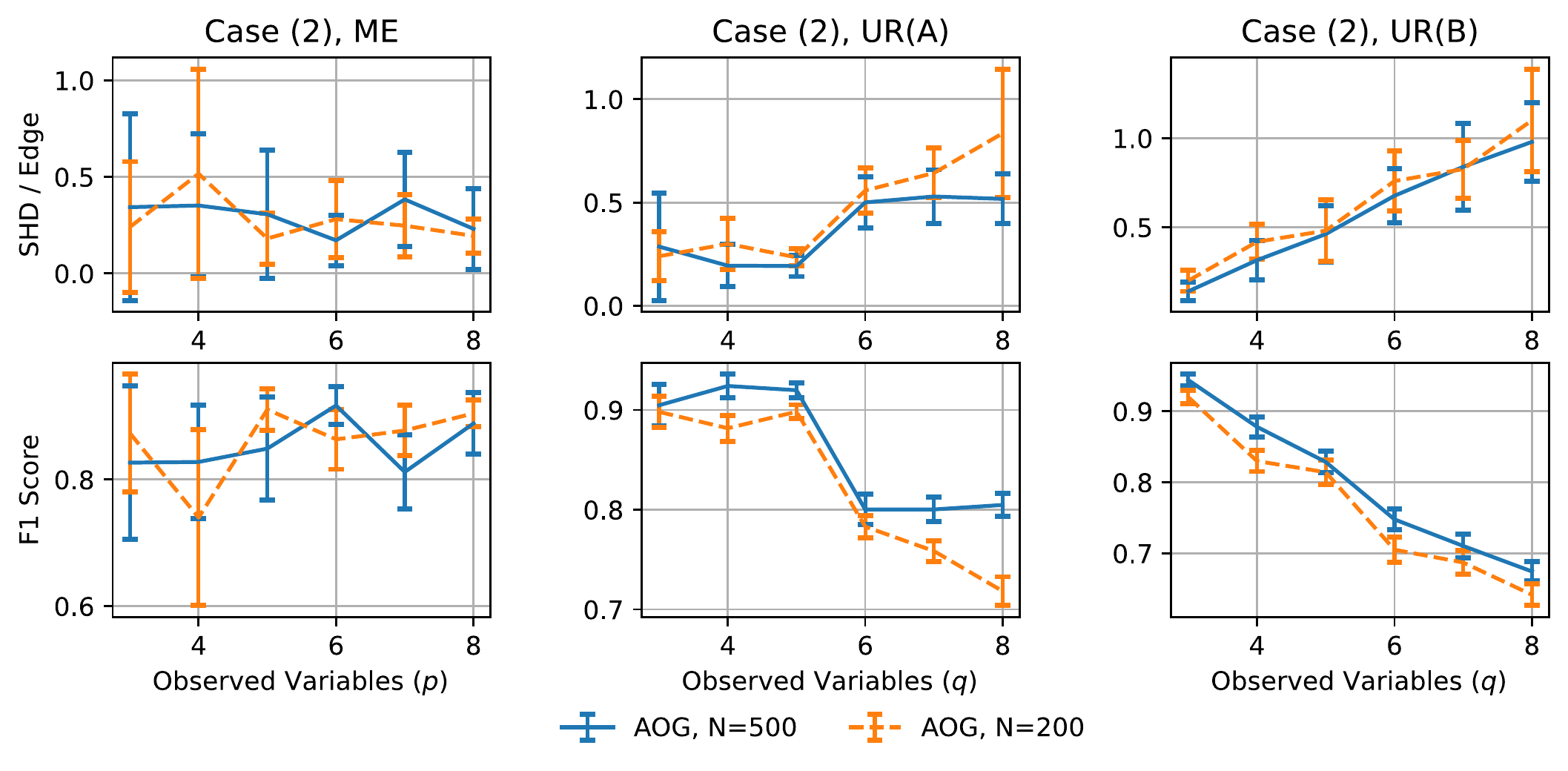}

\caption{Error bars of the AOG algorithm under different settings of the experiment.
}
\label{fig:errorbar_aog}
\end{figure}

\begin{figure}[htbp]
\centering
\begin{minipage}{0.48\textwidth}
\includegraphics[width=\linewidth]{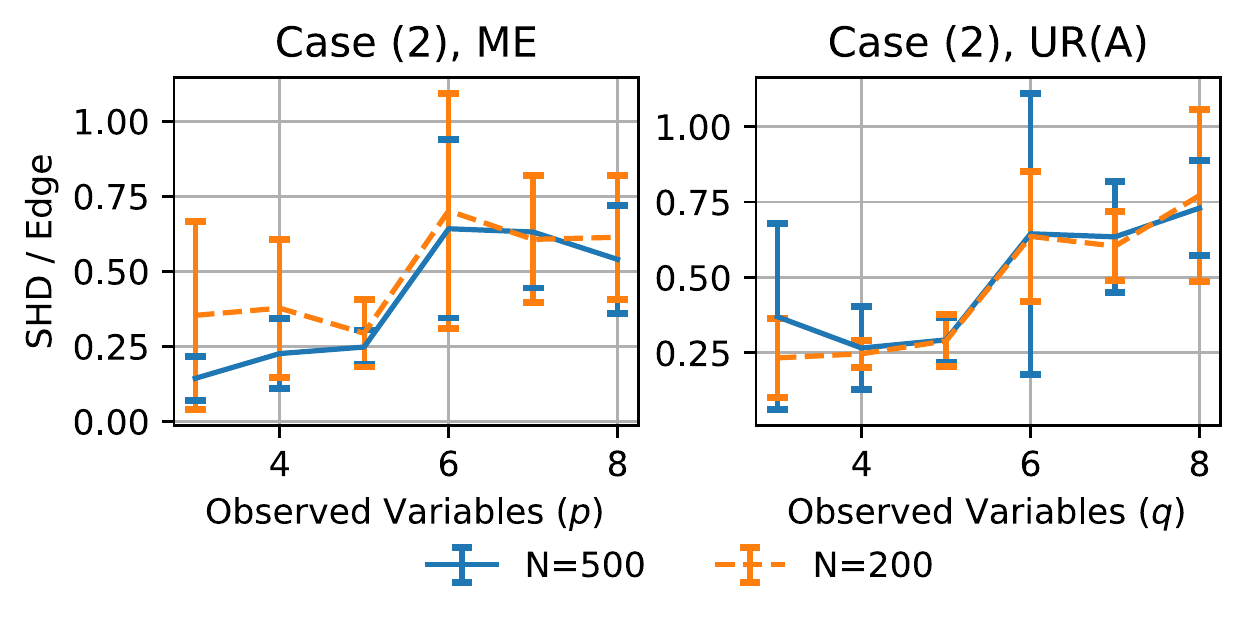}
\end{minipage}
\begin{minipage}{0.48\textwidth}
\centering
\includegraphics[width=\linewidth]{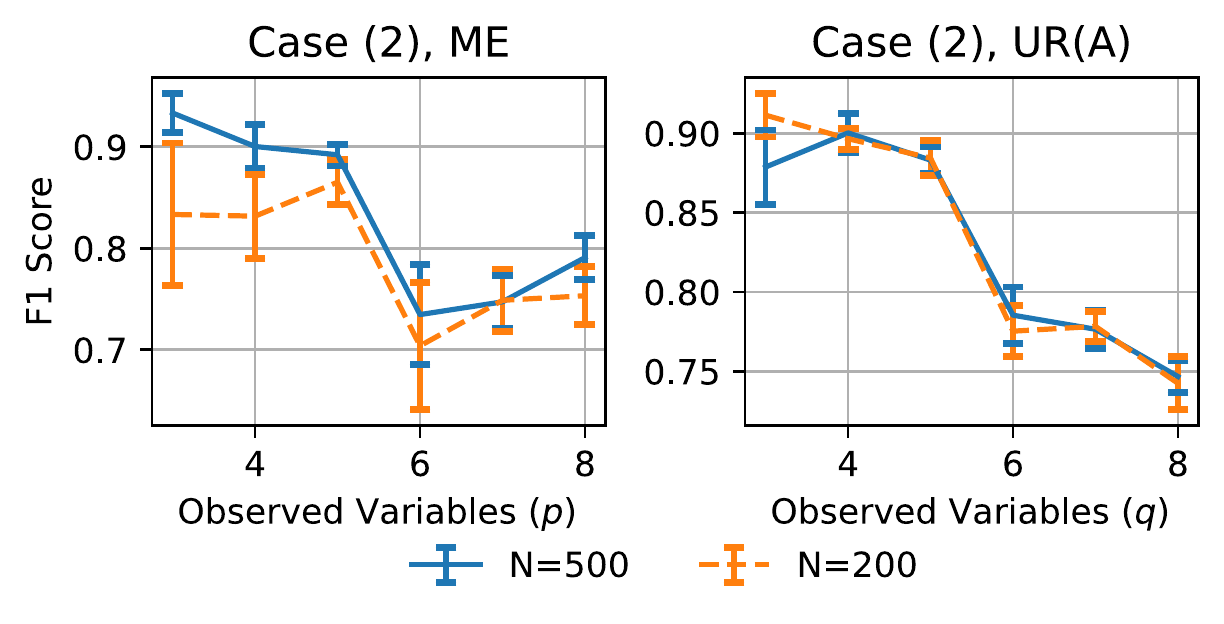}
\end{minipage}
\caption{Error bars of ICA-LiNGAM algorithm under different settings of the experiment. Note that since ICA-LiNGAM assumes causal sufficiency, the mixing matrix in ICA-LiNGAM must be a square matrix, i.e., it does not recover matrix $\bB$ for SEM-UR. Additionally, Case (1), i.e., noisy mixing matrix, cannot be applied here, since the true mixing matrix is not square.}
\label{fig:errorbar_lingam}
\end{figure}
\end{document}